\newcolumntype{Y}{>{\centering\arraybackslash}X}
\newtheorem{definition}{Definition}
\newtheorem{proposition}{Proposition}
\newtheorem{claim}{Claim}
\definecolor{olivier}{rgb}{.6,0,0}
\definecolor{sylvain}{rgb}{0,.6,0}
\definecolor{daniel}{rgb}{0,0,.6}
\definecolor{ibrahim}{rgb}{.6,.6,0}
\definecolor{hartmut}{rgb}{0,.6,.6}
\definecolor{robert}{rgb}{.6,0,.6}
\definecolor{ilya}{rgb}{0.8705882352941177, 0.050980392156862744, 0.8156862745098039}
\newcommand{\BE}{{\mathbb E}}%
\newcommand{\BR}{{\mathbb R}}%
\newcommand{\calD}{{\mathcal D}}%
\newcommand{\calG}{{\mathcal G}}%
\newcommand{\calN}{{\mathcal N}}%
\newcommand{\calV}{{\mathcal V}}%
\newcommand{\calY}{{\mathcal Y}}%
\newcommand{\eps}{\epsilon}%
\newcommand{\arrow}{\rightarrow}%
\DeclareMathOperator*{\argmin}{arg\,min}%
\author{%
  Hartmut Maennel\thanks{Equal contribution.}\\
  \texttt{hartmutm@google.com}
  \And
  Ibrahim Alabdulmohsin\footnotemark[1]\\
  \texttt{ibomohsin@google.com}
  \And
  Ilya Tolstikhin\\
  \texttt{tolstikhin@google.com}
  \And
  Robert J. N. Baldock\thanks{Work completed during the Google AI Residency Program.}\\
  \texttt{rbaldock@google.com}
  \And
  Olivier Bousquet\\
  \texttt{obousquet@google.com}
  \And
  Sylvain Gelly\\
  \texttt{sylvaingelly@google.com}
  \And
  Daniel Keysers\\
  \texttt{keysers@google.com}\\[3ex]
  Google Research, Brain Team\\
  Z\"urich, Switzerland
}
\title{What Do Neural Networks Learn\\ When Trained With Random Labels?}
\begin{document}
\maketitle

\begin{abstract}
We study deep neural networks (DNNs) trained on natural image data with entirely random labels. 
Despite its popularity in the literature, where it is often used to study memorization, generalization, and other phenomena, little is known about what DNNs learn in this setting.
In this paper, we show analytically for convolutional and fully connected networks that an alignment between the principal components of network parameters and data takes place when training with random labels.
We study this alignment effect by investigating neural networks pre-trained on randomly labelled image data  and subsequently fine-tuned on disjoint datasets with random or real labels.
We show how this alignment produces a \emph{positive} transfer: networks pre-trained with random labels train faster downstream compared to training from scratch even after accounting for simple effects, such as weight scaling.
We analyze how competing effects, such as specialization at later layers, may hide the positive transfer.
These effects are studied in several network architectures, including VGG16 and ResNet18, on CIFAR10 and ImageNet.
\end{abstract}

%==========================================================================
\section{Introduction}
%==========================================================================
\label{sec:Introduction}
Over-parameterization {helps} deep neural networks (DNNs) to generalize better in real-life applications \cite{brown2020language, huang2018gpipe, alex2019big,zagor2016}, despite providing them with the capacity to fit almost any set of random labels~\cite{Zhang17}. 
This~phenomenon has spawned a growing body of work that aims at identifying fundamental differences between real and random labels, such as in training time \cite{arpit2017closer, gu2019neural,han2018co,zhang2018generalized}, sharpness of the minima \cite{keskar2016large,neyshabur2017exploring}, dimensionality of layer embeddings \cite{ansuini2019intrinsic,collins2018detecting,pmlr-v80-ma18d},  and sensitivity \cite{arpit2017closer,novak2018sensitivity}, among other complexity measures  \cite{bartlett1998sample,Bartlett2017,neyshabur2015norm,neyshabur2017exploring}. 
While it is obvious that over-parameterization helps DNNs to interpolate any set of random labels, it is not immediately clear what DNNs \emph{learn} when trained in this setting.
The objective of this study is to provide a partial answer to this question. 

There are at least two reasons why answering this question is of value. 
First, 
in order to understand how DNNs work, it is imperative to observe how they behave under ``extreme'' conditions, such as when trained with labels that are entirely random. 
Since the pioneering work of~\cite{Zhang17}, several works have looked into the case of random labels. 
What distinguishes our work from others is that previous works aimed to demonstrate {differences} between real and random labels, highlighting the \emph{negative} side of training on random labels. By contrast, this work provides insights into what properties of the %underlying 
data distribution DNNs learn when trained on random labels.

Second, observing DNNs trained on random labels can explain phenomena that have been previously noted, but were poorly understood. In particular, by studying what is learned on random labels, we offer new insights into:  (1)~why DNNs exhibit critical stages 
\cite{achille2017critical, frankle2020early}, (2)~how earlier layers in DNNs generalize while later layers specialize \cite{ansuini2019intrinsic,arpit2017closer,cohen2018dnn,yosinski2014transferable}, (3)~why the filters learned by DNNs in the first layer seem to encode some useful structure when trained on random labels \cite{arpit2017closer}, 
and (4)~why pre-training on random labels can accelerate training in downstream tasks \cite{pondenkandath2018leveraging}. 
We show that even when controlling for simple explanations like weight scaling (which was not always accounted for previously), such curious observations continue to hold. 

The main contributions of this work are:
\begin{itemize}[leftmargin=*]
    \item We investigate DNNs trained with random labels and fine-tuned on disjoint image data with real or random labels, demonstrating unexpected positive and negative effects.
    \item We provide explanations of the observed effects. 
    We show analytically for convolutional and fully connected networks that an alignment between the principal components of the network parameters and the data takes place. 
    We demonstrate experimentally how this effect explains why pre-training on random labels helps. 
    We also show why, under certain conditions, pre-training on random labels can hurt the downstream task due to specialization at the later layers. 
    \item We conduct experiments verifying that these effects are present in several network architectures, including VGG16 \cite{simonyan2014very} and ResNet18-v2 \cite{He2016}, on CIFAR10 \cite{Krizhevsky09learningmultiple} and
    ImageNet ILSVRC-2012~\cite{deng2009imagenet}, across a range of hyper-parameters,
    such as the learning rate, initialization,
    number of training iterations, 
    width and depth.
\end{itemize}

In this work,
we do not use data augmentation as it provides a (weak) supervisory signal. 
Moreover, we use the terms ``positive'' and ``negative'' to describe the impact of what is learned with random labels on the downstream training, such as faster/slower training.
The networks reported throughout the paper are taken from a big set of experiments that we conducted using popular network architectures, datasets, and wide hyperparameter ranges. 
Experimental details are provided in Appendix~A and~B. 
We use boldface for random variables, 
small letters for their values,
and capital letters for matrices.

%%%%%%%%%%%%%%%%%%%%%%%%%%%%%%%%%%%%%%%%%%%%%%%%%%%%%%%%%%%%%%%%%%%%%%%%%%%%%%
\subsection{Motivating example}

\begin{figure}[tb]
  \scriptsize \sffamily
  \centering
  \begin{minipage}{0.23\textwidth}
    \centering 
    1. Pre-training helps\\
    (real labels)
  \end{minipage}
  \begin{minipage}{0.23\textwidth}
    \centering
    2. Pre-training helps\\
    (random labels)
  \end{minipage}
  \begin{minipage}{0.23\textwidth}
    \centering
    3. Pre-training hurts\\
    (real labels)
  \end{minipage}
  \begin{minipage}{0.23\textwidth}
    \centering
    4. Pre-training hurts\\
    (random labels)
  \end{minipage}
  \includegraphics[width=0.23\textwidth,height=0.21\textwidth]{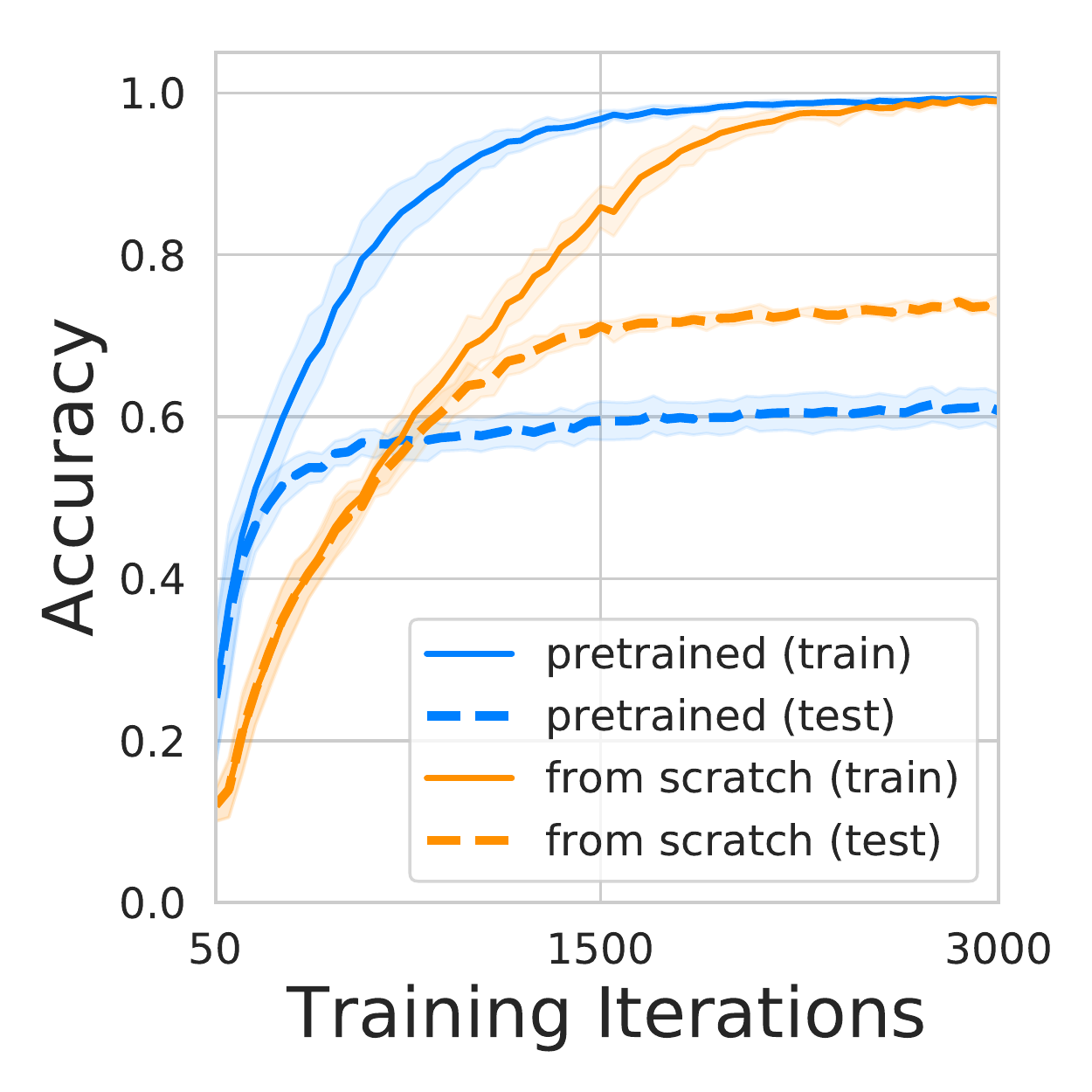}
  \includegraphics[width=0.23\textwidth,height=0.21\textwidth]{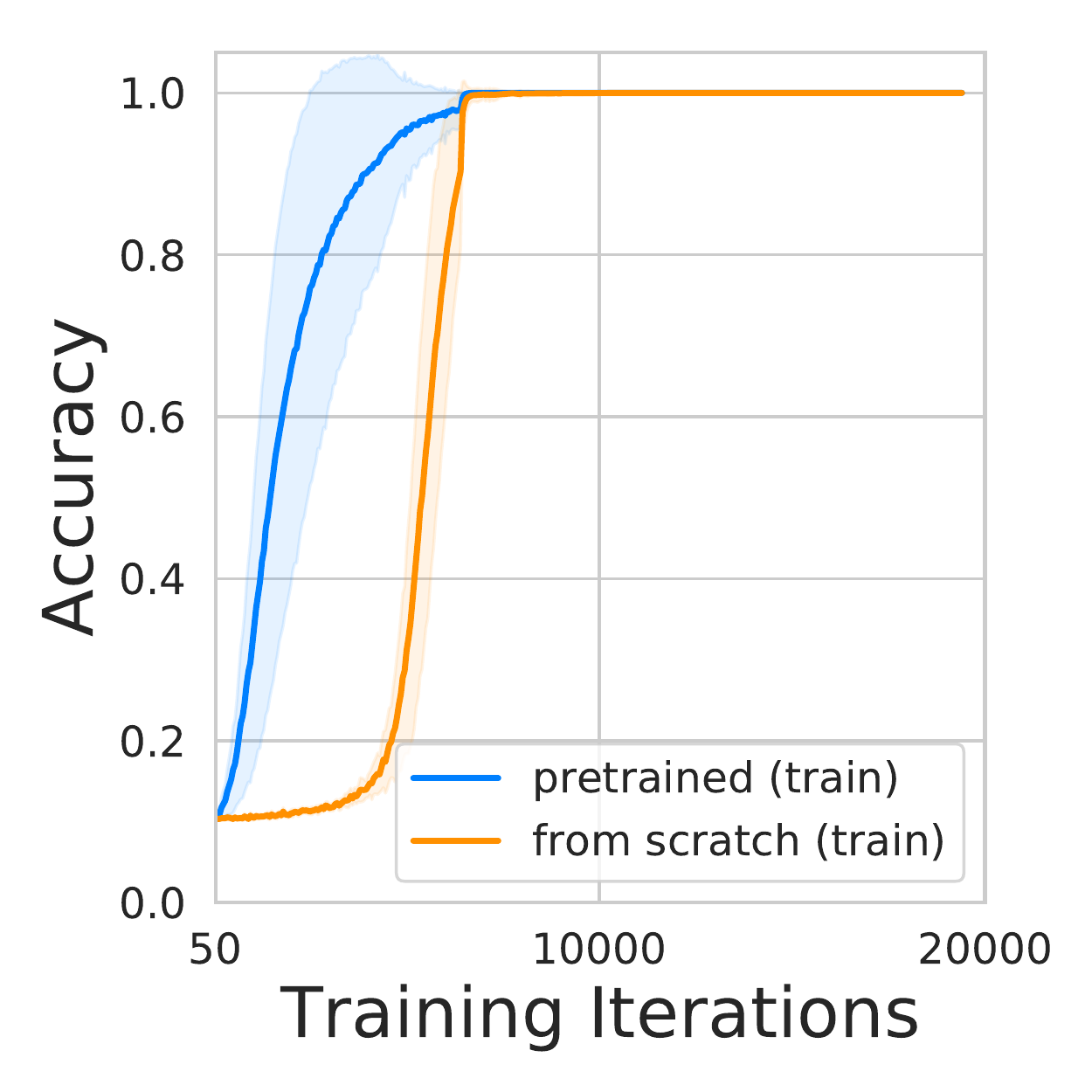}
  \includegraphics[width=0.23\textwidth,height=0.21\textwidth]{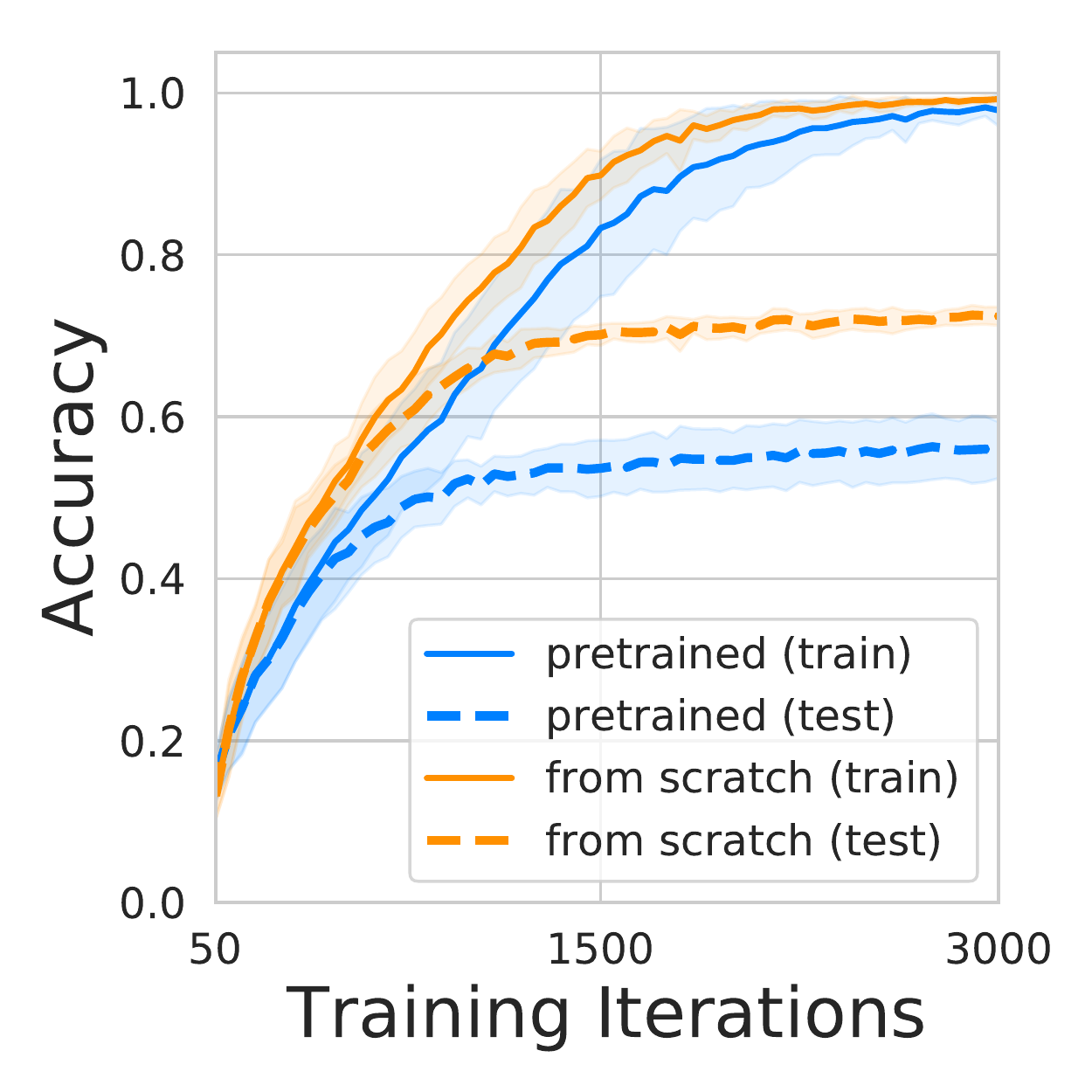}
  \includegraphics[width=0.23\textwidth,height=0.21\textwidth]{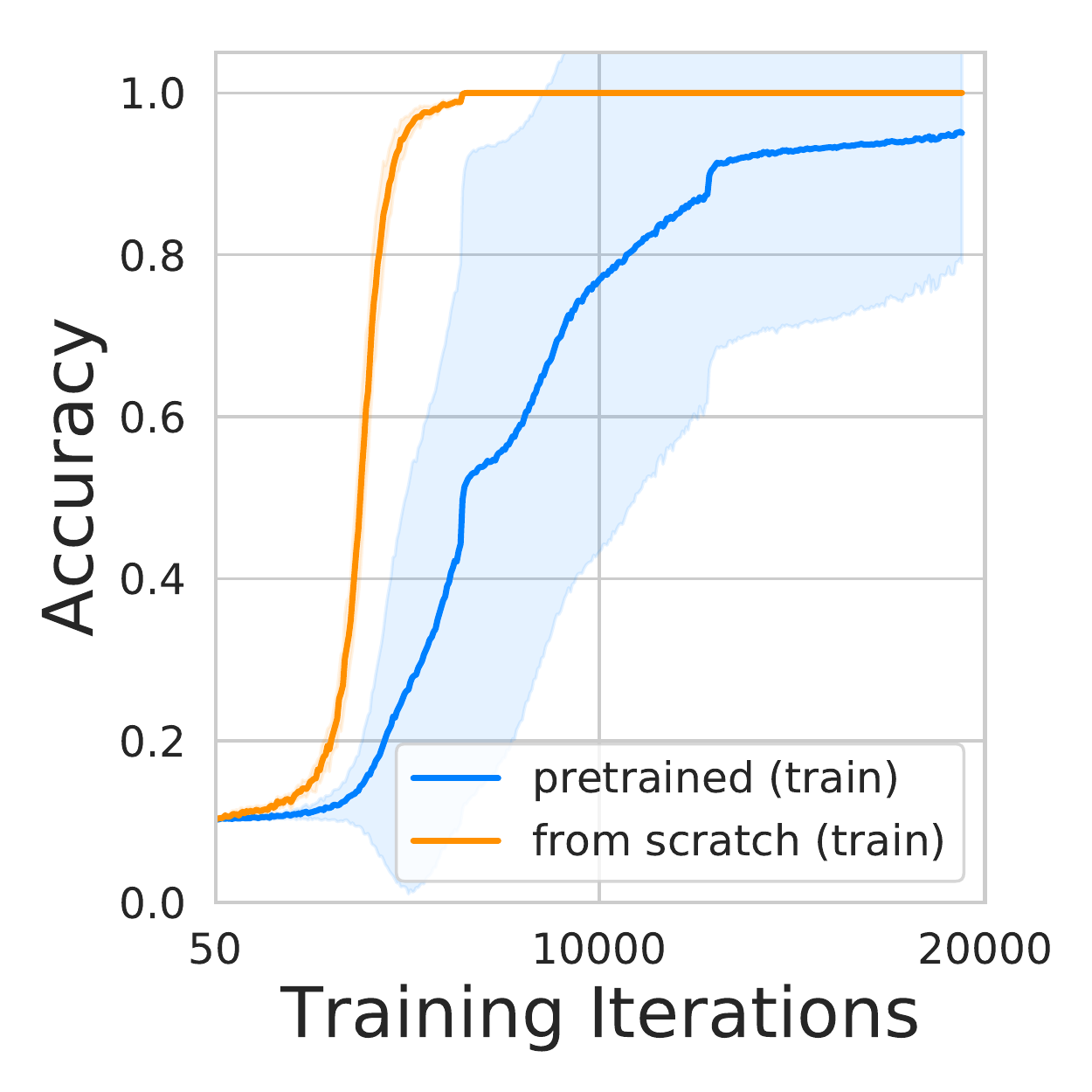}
  \caption{Pre-training on random labels may exhibit both positive ({\sc 1 \& 2}) and negative ({\sc 3 \& 4}) effects on the downstream fine-tuning depending on the setup.
  VGG16 models are pre-trained on %20k
  CIFAR10 examples with random labels and subsequently fine-tuned on the fresh %25k 
  CIFAR10 examples with either real labels ({\sc 1 \& 3}) or 10 random labels ({\sc 2 \& 4}) using different hyperparameters.
  %Std.\:dev.\:based on 12 independent runs. 
  }
  \label{fig:image1}
  %\vskip-1ex
\end{figure}

Figure~\ref{fig:image1} shows learning curves of the VGG16 architecture~\cite{simonyan2014very} pre-trained 
on 20k CIFAR10 examples~\cite{Krizhevsky09learningmultiple} with random labels (upstream) and fine-tuned on a disjoint subset of 25k CIFAR10 examples with either random or real labels (downstream). 
We observe that in this setup, pre-training a neural network on images with random labels accelerates training on a second set of images, both for real and random labels (positive effect). However, in the \emph{same setting} but with a different initialization scale and number of random classes upstream, a negative effect can be observed downstream: training becomes slower. 
We also observe a lower final test accuracy for real labels in both cases, which we are 
not explicitly investigating in this paper (and which has been observed before, e.g.\ in~\cite{frankle2020early}).

The fact that pre-training on random labels can accelerate training downstream has been observed previously, e.g.\ in \cite{pondenkandath2018leveraging}. However,  there is a ``simple'' property that can explain improvements in the downstream task: Because the cross-entropy loss is scale-sensitive, training the network tends to increase the scale of the weights \cite{neyshabur2017exploring}, which can increase the effective learning rate of the downstream task (see the gray curve in Figure~\ref{fig:PCA_init}).
To eliminate this effect, in all experiments we re-scale the weights of the network after pre-training to match their $\ell_2$ norms at initialization.
We show that even after this correction, pre-training on random labels positively affects the downstream task.
This holds for both VGG16 and ResNet18 trained on CIFAR10 and ImageNet (see Appendix B).

We show experimentally that some of the positive transfer is due to the second-order statistics of the network parameters. 
We prove that
when trained on random labels, the principal components of weights at the first layer are aligned with the principal components of data. 
Interestingly, this \emph{alignment effect} implies that the model parameters learned at the first layer can be summarized by a {one-dimensional mapping} between the eigenvalues of the data and the eigenvalues of the network parameters.
We study these mappings empirically and raise some new open questions.  
We also analyze how, under certain conditions, a competing effect of specialization at the later layers may hide the positive transfer of pre-training on random labels, which we show to be responsible for the negative effect demonstrated in Figure \ref{fig:image1}. 

To the best of our knowledge, the alignment effect has not been established in the literature before. 
This paper proves the existence of this effect and studies its implications.
Note
that while these effects are established for training on random labels,
we also observe them empirically for real labels. 

%==========================================================================
\subsection{Related work}
\label{sec:PreviousWorks}
%==========================================================================
% TODO: maybe \cite{ji2019}.

A large body of work in the literature has developed techniques for mitigating the impact of \emph{partial} label noise, such as \cite{zhang2018generalized,Huber2011,natarajan2013learning,jiang2018mentornet,li2017learning,liu2015classification,sukhbaatar2014learning}. Our work is distinct from this line of literature because we focus on the case of purely random labels.

The fact that positive and negative learning takes place is related to the common observation that earlier layers in DNNs learn general-purpose representations whereas later layers specialize \cite{ansuini2019intrinsic,arpit2017closer,cohen2018dnn,yosinski2014transferable}. For random labels, it has been noted that memorization happens at the later layers, as observed by measuring the classification accuracy using activations as features \cite{cohen2018dnn} or by estimating the intrinsic dimensionality of the activations \cite{ansuini2019intrinsic}. We show that specialization at the later layers has a negative effect because it exacerbates the inactive ReLU phenomenon. Inactive ReLUs have been studied in previous works, which suggest that this effect could be mitigated by either increasing the width or decreasing the depth \cite{lu2019dying}, using skip connections \cite{douglas2018relu}, or using other activation functions, such as the leaky ReLU \cite{maas2013rectifier,he2015} or the exponential learning unit (ELU)~\cite{clevert2015fast}. 

For transfer learning, it has been observed that pre-training on random labels can accelerate training on real labels in the downstream task \cite{pondenkandath2018leveraging}. However, prior works  have  not accounted for simple effects, such as the change in first-order weight statistics (scaling), which increases when using the scale-sensitive cross-entropy loss \cite{neyshabur2017exploring}. Changing the norm of the weights alters the effective learning rate. \cite{Raghu19} investigated transfer from ImageNet to medical data and observed  that the transfer of first-order weight statistics provided faster  convergence. 
%In this work, we 
We show that even when taking the scaling effect into account, additional gains from second-order statistics are identified. 

Other works have considered PCA-based convolutional filters either as a model by itself without training~\cite{Gan15,Dehmamy2019}, as an initialization ~\cite{Ren16,Wagner13}, or to estimate the dimensionality of intermediate activations~\cite{collins2018detecting,montavon2011kernel}.
Note that our results suggest an initialization by \textit{sampling} from the data covariance instead of initializing the filters directly using the principal axes of the data.
\mbox{\citet{Ye2020deconv}} show that a ``deconvolution'' of data at input and intermediate layers can be beneficial. This deconvolution corresponds to a whitening of the data distribution, therefore aligning data with an isotropic weight initialization, which is related to a positive effect of alignment we observe in this paper.

In addition, there is a large body of work on unsupervised learning.
Among these, the \emph{Exemplar-CNN} method~\cite{Dosovitskiy14} can be seen as the limiting case of using random labels with infinitely many classes (one label per image) and large-scale data augmentation. 
In our study
we do \emph{not} use data augmentation since it provides a supervisory signal to the neural network that can cause additional effects.

%==========================================================================
\section{Covariance matrix alignment between network parameters and data}
%==========================================================================
\label{sec:PCA}

Returning to the motivating example in Figure \ref{fig:image1}, we observe that pre-training on random labels can improve training in the downstream tasks for both random and real labels. This improvement is in the form of \emph{faster training}. In this section, we explain this effect. We start by considering the first layer in the neural network, and extend the argument to later layers in Section~\ref{subsec:DeepNetworks}.

\subsection{Preliminaries}\label{sect::pca_preliminaries}
Let $\calD$ be the probability distribution over the instance space $\mathcal{X}\subseteq\BR^d$ and $\mathcal{Y}$ be a  finite target set.
We fix a network architecture, a loss function, a  learning rate/schedule, and a 
distribution of weights for random initialization. Then, ``training on 
random labels'' corresponds to the following procedure: We randomly sample i.i.d.\ instances $ \textbf{x}_1,..., \textbf{x}_N \sim \calD$,
and i.i.d.\ labels $\textbf{y}_1,...,\textbf{y}_N\in \mathcal{Y}$ independently of each other. 
We also sample the initial weights of the neural network, and
 train the network on the data $\{(\textbf{x}_i, \textbf{y}_i)\}_{i=1,\ldots,N}$ for $T$ training iterations using stochastic gradient descent (SGD). During training, the
weights are \emph{random variables} due to the randomness of the initialization and the training sample. Hence, we can speak of their statistical properties, such as their first and second moments. 

In the following, we are interested in layers that are convolutional or fully connected. We assume that the output of the $k$-th neuron in the first layer can be written as: $f_k(x)=g(\langle w_k,\,x\rangle +b_k)$ for some activation function $g$. 
We write $\mu_x=\mathbb{E}[\textbf{x}]$ and observe that since the covariance matrix $\Sigma_x = \mathbb{E}[(\textbf{x}-\mu_x)\cdot (\textbf{x}-\mu_x)^T]$ is symmetric positive semi-definite,
there exists an orthogonal de\-com\-po\-si\-tion $\BR^d = V_1 \oplus ... \oplus V_r$
such that $V_i$ are eigenspaces to $\Sigma_x$ with distinct eigenvalues $\sigma_i^2$.

\begin{definition}[Alignment]\label{def:alignment} A symmetric matrix $A$ is said to be \emph{\textbf{aligned}} with a symmetric matrix $B$ if each eigenspace of $B$ is a subset of an eigenspace of $A$. If $A$ is aligned with $B$, we  define for each eigenvalue
of $B$ with eigenspace $V\subseteq \BR^d$ the \emph{\textbf{corresponding}} eigenvalue of 
$A$ as the one belonging to the eigenspace that contains $V$.
\end{definition}

If $A$ and $B$'s eigenspaces are all of dimension 1 (which
is true except for a Lebesgue null set in the space of symmetric matrices),
 ``$A$ is aligned with $B$'' becomes equivalent to the assertion that they share the same eigenvectors. 
However, the
relation is not symmetric in general (e.g.\ only scalar multiples of the identity matrix $I_d$ are aligned with $I_d$, but $I_d$ is aligned with any symmetric matrix).

%----------------------------------------------------------------------
\subsection{Alignment for centered Gaussian inputs}
\label{subsec:Gaussian}

\begin{proposition}
\label{prop:1}
Assume the instances $\textbf{x}$ are drawn i.i.d. from $\mathcal{N}(0,\Sigma_x)$ and the initial weights in
the first layer are drawn from an isotropic distribution (e.g. the standard Gaussian). Let $\textbf{w}\in\mathbb{R}^d$ be a random variable whose value is drawn uniformly at random from the set of weights in the first layer after training using SGD with random labels (see Section \ref{sect::pca_preliminaries}). Then: (1) $\mathbb{E}[\textbf{w}]=0$ and (2) $\Sigma_w = \mathbb{E}[\textbf{w}\cdot\textbf{w}^T]$ is aligned with the covariance matrix of data $\Sigma_x$.
%(see Definition \ref{def:alignment}).
\end{proposition}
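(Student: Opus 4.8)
The plan is to exploit a symmetry (equivariance) argument. Let $G$ denote the group of orthogonal matrices $Q$ that commute with $\Sigma_x$; since $\Sigma_x$ is symmetric, $Q\in G$ iff $Q\Sigma_x Q^T=\Sigma_x$, which holds iff $Q$ preserves each eigenspace $V_i$. Thus $G\cong O(V_1)\times\cdots\times O(V_r)$. The centered Gaussian law $\mathcal{N}(0,\Sigma_x)$ is invariant under $x\mapsto Qx$ for every $Q\in G$, the isotropic initialization of the first-layer weights is invariant under $w\mapsto Qw$ for every orthogonal $Q$ (in particular for $Q\in G$), and the random labels are independent of the data and hence unaffected. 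The key conceptual point is that $G$ is exactly the largest subgroup of the orthogonal group under which the data law is invariant; it is this restriction, rather than full isotropy, that will yield alignment with $\Sigma_x$ rather than mere isotropy of $\Sigma_w$.

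First I would establish that the training map is equivariant under the simultaneous action of any $Q\in G$ on the data and on the first-layer weights, leaving biases and all deeper weights fixed. The crucial identity is $\langle Qw_k, Qx\rangle + b_k = \langle w_k, x\rangle + b_k$, so the first-layer pre-activations, and therefore the entire network output and the loss, are unchanged. By the chain rule, since $L$ depends on $w_k$ only through the inner products $\langle w_k, x_i\rangle$, the gradient transforms as $\nabla_{w_k}L \mapsto Q\,\nabla_{w_k}L$, while the gradients with respect to biases and deeper weights are unchanged. Coupling the two SGD runs to use the same sequence of minibatches, an induction over the $T$ iterations shows that running SGD from initialization $Qw_k^{(0)}$ on data $\{Qx_i\}$ produces $Qw_k^{(T)}$. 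Combining this equivariance with the invariance of the joint law of (data, labels, initialization, minibatch randomness) gives that the law of the collection of trained first-layer weights is invariant under applying $Q$ to every weight; marginalizing over a uniformly chosen neuron yields $Q\mathbf{w}\overset{d}{=}\mathbf{w}$ for all $Q\in G$.

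From $Q\mathbf{w}\overset{d}{=}\mathbf{w}$ both conclusions follow. For (1), take $Q=-I\in G$ to get $-\mathbf{w}\overset{d}{=}\mathbf{w}$, hence $\mathbb{E}[\mathbf{w}]=-\mathbb{E}[\mathbf{w}]=0$. For (2), the distributional identity gives $\Sigma_w=\mathbb{E}[(Q\mathbf{w})(Q\mathbf{w})^T]=Q\Sigma_w Q^T$, i.e.\ $\Sigma_w$ commutes with every $Q\in G$. Writing $\Sigma_w$ in block form relative to $\mathbb{R}^d=V_1\oplus\cdots\oplus V_r$ and commuting it with the sign-flip elements $Q=\mathrm{diag}(\pm I_{V_1},\ldots,\pm I_{V_r})\in G$ forces the off-diagonal blocks to vanish, so each $V_i$ is $\Sigma_w$-invariant; commuting the $i$-th diagonal block with all of $O(V_i)$ then forces it to be a scalar (a Schur-type argument: a symmetric operator commuting with every rotation of $V_i$ cannot have two distinct eigenvalues, since a rotation carrying one eigenvector to another would equate their eigenvalues). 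Hence $\Sigma_w$ acts as a scalar on each $V_i$, so every eigenspace of $\Sigma_x$ lies inside an eigenspace of $\Sigma_w$, which is precisely the alignment of Definition~\ref{def:alignment}.

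The main obstacle is the equivariance step rather than the concluding algebra: one must verify it for the full SGD trajectory, checking that biases and all deeper-layer parameters are handled consistently, that the loss indeed factors through the inner products $\langle w_k, x_i\rangle$, and that the minibatch randomness can be coupled across the two runs. Care is also needed to track that the relevant symmetry is only $G$ (the matrices commuting with $\Sigma_x$), not the whole orthogonal group, since it is this exact group that produces alignment with $\Sigma_x$ instead of forcing $\Sigma_w$ to be a multiple of the identity.
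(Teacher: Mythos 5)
Your proposal is correct and follows essentially the same route as the paper's proof: the same symmetry group $\calG$ of orthogonal matrices fixing $\Sigma_x$, the same equivariance argument for the (S)GD trajectory via invariance of the inner products $\langle w_k, x_i\rangle$, and the same use of $-I$ and the sign-flip/transitivity structure of $O(d_1)\times\cdots\times O(d_r)$ to deduce mean zero and alignment. The only difference is cosmetic: you conclude by showing $\Sigma_w$ is block-diagonal and scalar on each $V_i$ via a Schur-type argument, whereas the paper shows each $V_i$ is contained in an eigenspace of $\Sigma_w$ directly; these are equivalent formulations of the same algebraic step.
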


\begin{proof}
The proof exploits symmetries: The input, initialization, and gradient descent are invariant under
the product of the orthogonal groups of the eigenspaces of $\Sigma_x$, so the distribution of
weights  must have the same invariance properties. The full proof is given in Appendix~C. 
\end{proof}

Proposition~\ref{prop:1} says that independently of many settings (e.g.\ number of random labels, network architecture, learning rate or schedule), the eigenspaces of $\Sigma_w\in\mathbb{R}^{d\times d}$
are given by the eigenspaces of $\Sigma_x\in\mathbb{R}^{d\times d}$. Hence, the only information needed to fully determine $\Sigma_w$ is a function $f$ that maps the eigenvalues $\sigma_i^2$ of $\Sigma_x$ 
to the corresponding eigenvalues $\tau_i^2$ of $\Sigma_w$. 
Note that the argument of the proof of Proposition \ref{prop:1} also apply to the case of a single training run of an infinitely wide network in which the layers are given by weight vector distributions, see e.g.~\cite{sirignano2018}. 
For finite networks, in practice, one would only be able to approximate $\Sigma_w$ based on several independent training runs.

Next, we present experimental evidence that first-layer alignment actually takes place, not just for Gaussian input with random labels, but also in real image datasets with random labels, and even when training on real labels using convolutional networks. The intuition behind this result for real labels is that small patches in the image (e.g. $3\times 3$) are nearly independent of the labels. Before we do that, we introduce a suitable measure of alignment that we use in the experiments.

\begin{definition}
For two positive definite matrices $A, B$, the ``misalignment''  
$M(A, B)$ is defined as: 
\begin{equation}
   M(A, B) := \inf_{\Sigma\succ 0\ \hbox{ aligned with}\ A} 
   \big\{\tfrac{1}{2}\textbf{tr}(\Sigma^{-1}B+B^{-1}\Sigma)-d\big\}
   \label{eq:DefinitionA}
\end{equation}
\end{definition}
The rationale behind this definition of misalignment is presented in Appendix~D. %~\ref{sec:MeasuringAlignment}. 
In particular, it can be shown that for any $A, B\succ 0$, we have $M(A, B)\ge 0$ with equality if and only if $B$ is aligned with $A$. In addition, $M(A,B)$ is continuous in $B$ and satisfies desirable  equivariance and invariance properties and can be computed in closed form by
$M(A, B) + d = \sum_{i=1}^r \sqrt{ \textbf{tr}(B|_{V_i}) \cdot \textbf{tr}(B^{-1}|_{V_i})}$
where $V_1 \oplus ... \oplus V_r$ is the orthogonal decomposition of $\BR^d$ into eigenspaces of $A$,
and $B|_{V_i}$ is the linear map $V_i\arrow V_i, \textbf{v}\mapsto pr_i(B(\textbf{v}))$ with $pr_i$ the 
orthogonal projection $\BR^d\arrow V_i$.

\begin{SCfigure}[2]
  \centering
  \includegraphics[width=0.5\textwidth]%
    {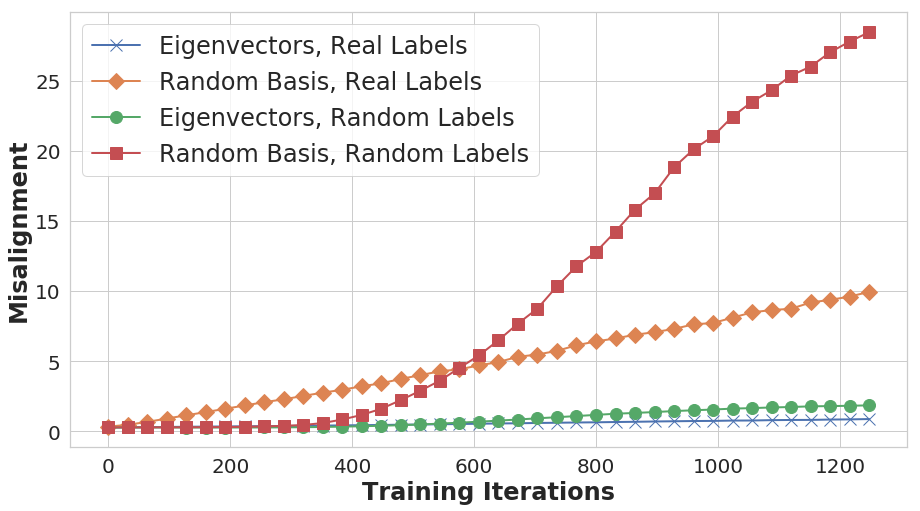}
  \caption{Plots of the misalignment scores of the filters of the first layer in a two-layer neural network (256 convolutional filters, 64 fully-connected nodes) when trained on CIFAR10 with either real or random labels. Throughout training, misalignment scores between the filters of the first layer and the data remain very small compared to those between filters and a random orthonormal basis.}
  \label{fig:Experiment_alignment}
\end{SCfigure}

Figure \ref{fig:Experiment_alignment} displays the misalignment scores between the covariance of filters at the first layer with the covariance of the data (patches of images). For comparison, the misalignment scores with respect to some random orthonormal basis are plotted as well. 
As predicted by Proposition \ref{prop:1}, the weight eigenvectors stay aligned to the data eigenvectors but not to an arbitrary random basis. 

\newcommand{\vw}{$v_w$}
\newcommand{\vxa}{$\overline{v_x}$}
\newcommand{\vxi}{$v_x$}
\begin{figure}[tb]
\centerline{
\small
\raisebox{-0.55\height}{\includegraphics[height=0.22\textwidth]{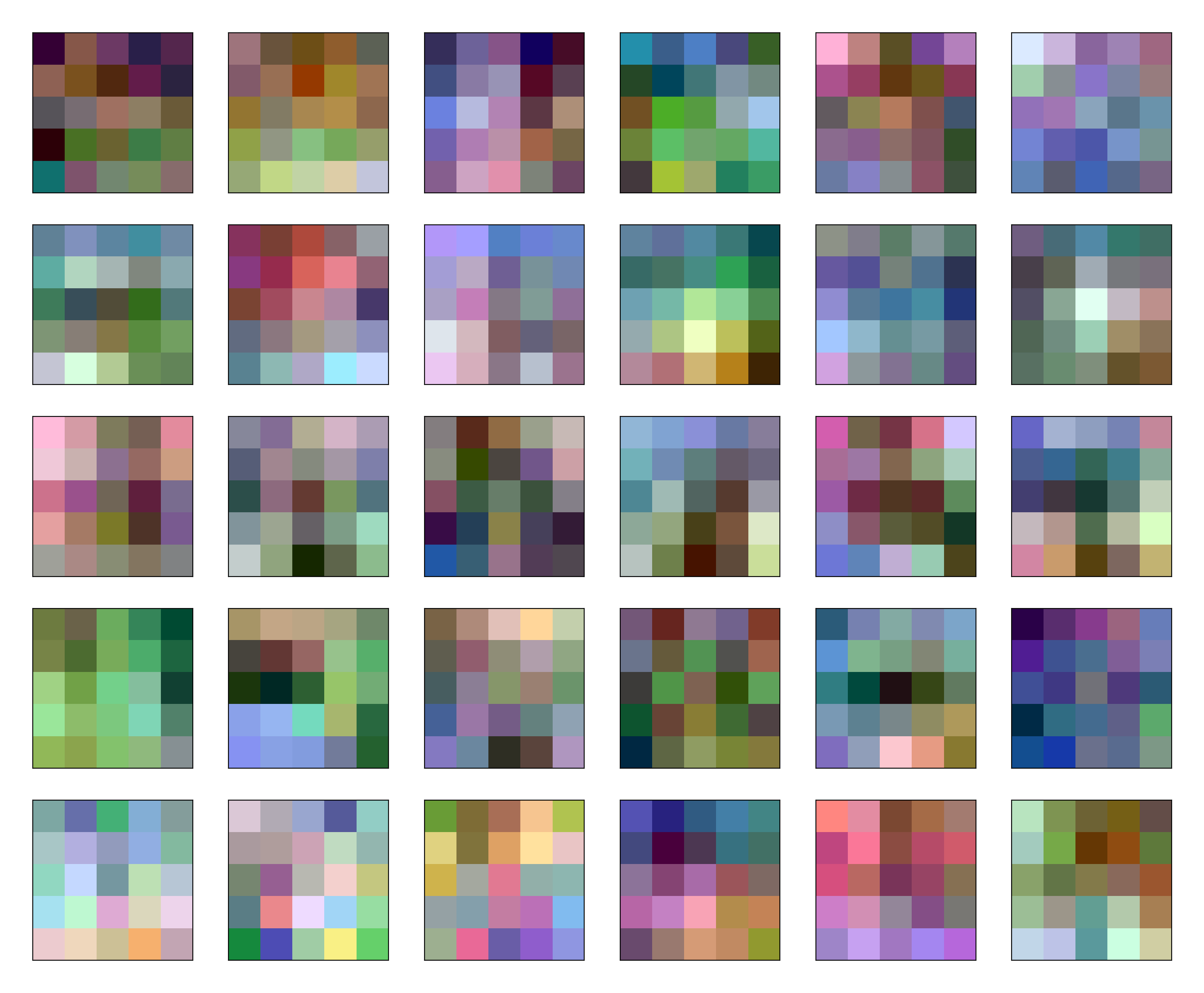}}\hfill
{\setlength{\extrarowheight}{5pt}
\def\myheight{0.038\textwidth}
\newcommand{\cpi}[2]{$\vcenter{\hbox{\includegraphics[height=\myheight]{images/pca/pca-#1-#2.png}}}$}
\newcommand{\ncpi}[2]{$\vcenter{\hbox{\includegraphics[height=\myheight]{images/pca_new/pca-#1-#2.png}}}$}
\begin{tabular}{r@{\hspace{1em}}c@{ $\sim$ }c}
 \# & \vw{}  &  \vxi{}\\
4 & \ncpi{3}{0} & \ncpi{3}{1}  \\
6 & \ncpi{5}{0} & \ncpi{5}{1}  \\
7 & \ncpi{6}{0} & \ncpi{6}{1}  \\
8 & \ncpi{7}{0} & \ncpi{7}{1}  \\  
10& \ncpi{9}{0} & \ncpi{9}{1} 
\end{tabular}\hfill
\begin{tabular}{r@{\hspace{1em}}c@{ $\sim$ }c@{ $=$ }l}
\#  & \vw{}  & \vxa{}  & \vxi{} $+\ldots$\\
1 & \ncpi{0}{0} & \ncpi{0}{1} & \ncpi{0}{2} $+$ \ncpi{0}{3}  \\
2 & \ncpi{1}{0} & \ncpi{1}{1} & \ncpi{1}{2} $+$ \ncpi{1}{3}  \\
3 & \ncpi{2}{0} & \ncpi{2}{1} & \ncpi{2}{3} $+$ \ncpi{2}{2}  \\
5 & \ncpi{4}{0} & \ncpi{4}{1} & \ncpi{4}{3} $+$ \ncpi{4}{2}  \\
9 & \ncpi{8}{0} & \ncpi{8}{1} & \ncpi{8}{3} $+$ \ncpi{8}{2} 
\end{tabular}%
}%setlength
}%centerline
  \caption{Visualization of covariance alignment. {\sc left}: Random selection of WRN-28-4 first-layer convolutional filters (CIFAR10, random labels). {\sc center/right}: Eigenvectors \vw{} of $\Sigma_w$ 
  with largest eigenvalues (rank in column `\#') and eigenvectors \vxi{} of $\Sigma_x$ with $\langle v_x, v_w\rangle > 0.4$. {\sc center}: Cases where one \vxi{} matches. {\sc right}: Cases where two \vxi{}  and their weighted combination \vxa{} match.}
  \label{fig:filters}
\end{figure}

For image data, we can also visualize the alignment of $\Sigma_w$ to $\Sigma_x$. 
Figure~\ref{fig:filters} shows results based on 70 wide ResNet models \cite{zagor2016} trained on CIFAR10 with random labels.
For better visualization, we use a $5\times 5$ initial convolution here. The left part of the figure shows a random selection of some of the 70$\cdot$64
convolution filters. From the filters we estimate $\Sigma_w$ and 
compute the eigenvectors \vw{}, then  visualize the ten \vw{} with the largest eigenvalues. From the image data we compute the %$5\times 5$ 
patch data covariance $\Sigma_x$ and its eigenvectors \vxi{}. 
We show the data eigenvectors for which the inner product with the filter eigenvectors exceeds a
threshold and the weighted sum \vxa{} of these if there are multiple such \vxi{}. 
(See Appendix D.1. for why this is expected to occur as well.)

The visual similarity between the \vw{}  and the \vxa{} illustrates the predicted covariance alignment. Note that this alignment is visually non-obvious when looking at individual filters as shown on the left.

%-----------------------------
\subsection{Mapping of eigenvalues}\label{subsect::mapping_eigenvalues}
As stated earlier, Proposition \ref{prop:1} shows that, on average, the first layer effectively learns a 
function which maps each eigenvalue of $\Sigma_x$ to the \emph{corresponding} eigenvalue of $\Sigma_w$ (see
Definition \ref{def:alignment}). In this section, we examine the shape of this function.

Since, in practice, we will only have \textit{approximate} alignment due to the finiteness of the number of inputs and weights, non--Gaussian inputs, and correlations between overlapping patches, we extend the definition of $f(\sigma)$.
Such an extension is based on the following identity~\eqref{eq:general_tau}:
For $\Sigma_x\in\mathbb{R}^{d\times d}$ let $\textbf{v}_i$ be an eigenvector of length 1 
with eigenvalue $\sigma_i^2$. If $\Sigma_w$ is aligned with 
$\Sigma_x$, $\textbf{v}_i$ is also an eigenvector of $\Sigma_w$ and the corresponding eigenvalue $\tau_i^2$ is:
\begin{equation}\label{eq:general_tau}
  \tau_i^2 = \textbf{v}_i^T \Sigma_w \textbf{v}_i 
    =   \textbf{v}_i^T \BE[(\textbf{w}-\mu_w)(\textbf{w}- \mu_w)^T] \textbf{v}_i  = \BE[ \langle 
    \textbf{w} - \mu_w, \textbf{v}_i\rangle ^2], 
\end{equation}
which is the variance of the projection of the weight vectors onto the principal axis $\textbf{v}_i$.
We can take this as the definition of $\tau_i$ in the general case, since this formulation can be applied even when we have an imperfect alignment between the eigenspaces.
\begin{definition}
Given two positive definite symmetric $d\times d$ matrices $\Sigma_x, \Sigma_w$, such that 
$\Sigma_w$ is aligned with $\Sigma_x$ or $\Sigma_x$ has $d$ distinct eigenvalues. 
Let $\sigma_1^2,\sigma_2^2,...$ be the eigenvalues of $\Sigma_x$ with corresponding 
eigenvectors $\textbf{v}_1,\textbf{v}_2,...$ of length 1, we define 
the transfer function from $\Sigma_x$ to $\Sigma_w$ as
\begin{equation}
   f: \{\sigma_1,\sigma_2,...\} \arrow \BR, \ \ \sigma_i \mapsto \sqrt{\textbf{v}_i^T \Sigma_w \textbf{v}_i}
\end{equation}
\end{definition}

% Hence, for every eigenvalue $\sigma^2$ of the covariance matrix of the data $\Sigma_x$, we define $f(\sigma)$ by:
% \begin{equation}
%     f(\sigma) = \BE_{\textbf{w}}[ \langle 
%     \textbf{w} - \mu_w, \textbf{v}(\sigma)\rangle ^2], 
% \end{equation}
% where $\textbf{v}(\sigma)$ is an eigenvector to the eigenvalue $\sigma^2$. 
In practice, the eigenvalues are distinct almost surely so every eigenvalue of the data  has a unique corresponding eigenvector of length 1 (up to $\pm$) and the function $f(\sigma)$ is well-defined. 

\begin{figure}[tb]
  \begin{center}
  \centerline{
  \includegraphics[width=0.33\textwidth,height=3.4cm]{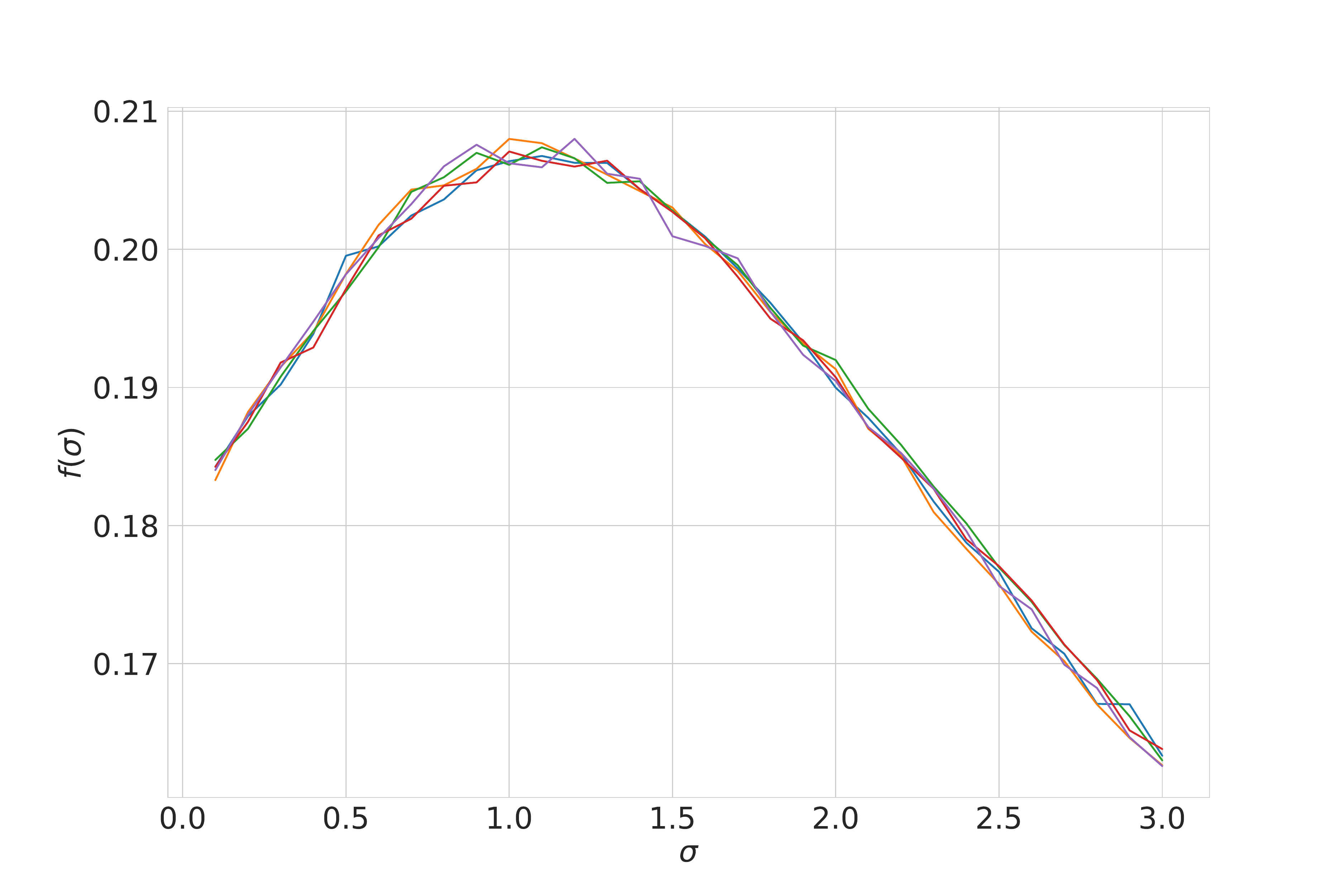} %up-down
  \includegraphics[width=0.33\textwidth]{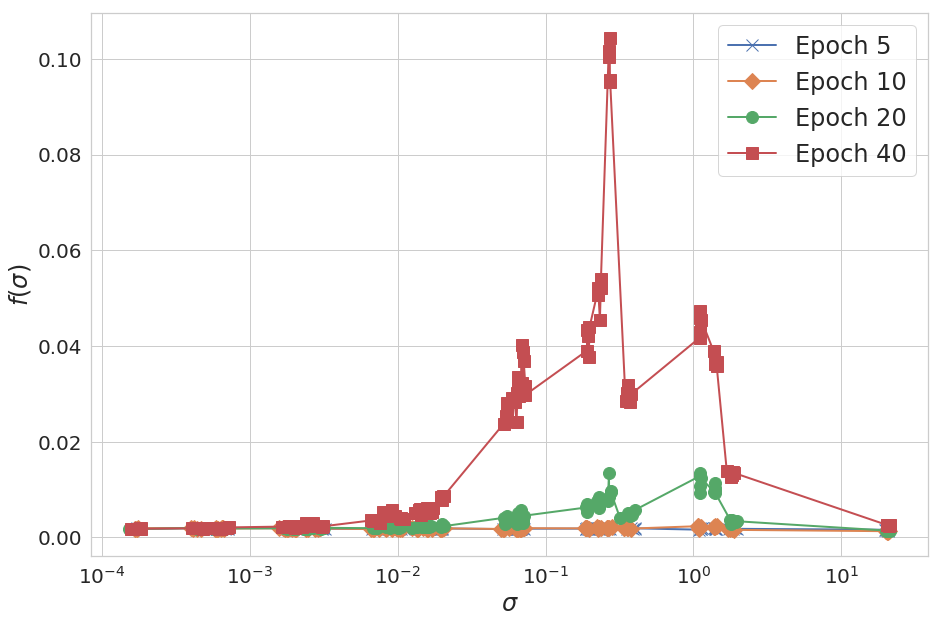}
  \includegraphics[width=0.33\textwidth]{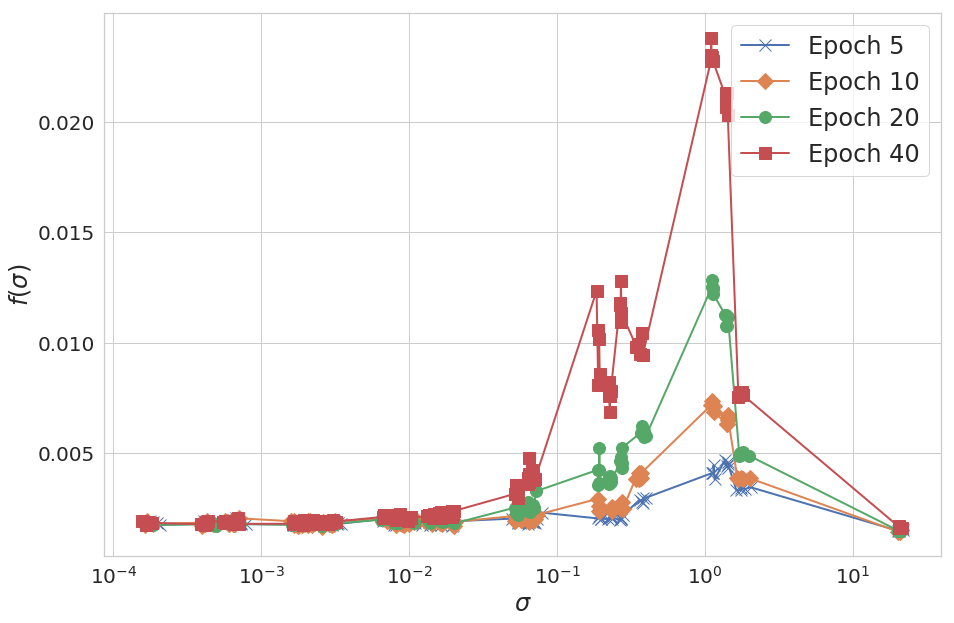}
  }
  \caption{
  {\sc left:} $f(\sigma)$ for synthetic data $\calN(0,\mathrm{diag}(0.1, 0.2, \dots, 3.0))$ in fully-connected neural networks with two layers of size 256. The graph is approximately continuous and of a regular structure: increasing, then decreasing. {\sc center,right:}
  $f(\sigma)$ that results from training a 2-layer convolutional network (256 filters followed by 64 fully-connected nodes) on CIFAR10 for random ({\sc center}) and real labels ({\sc right}) after 5, 10, 20, and 40 epochs ($\sim$195 training iterations per epoch).
  } 
  \label{fig:Experiment2_f_sigma}
  \end{center}
\end{figure}

Using this definition of $f(\sigma)$, we can now look at the shape of the function for concrete examples.
Here, we train a simple fully connected network
50 times, collect statistics, and plot the corresponding mapping between each eigenvalue $\sigma_i$ of $\Sigma_x$ with the corresponding $\tau_i$ in (\ref{eq:general_tau}) (see Appendix E.1 for more details). 
The result is shown in Figure~\ref{fig:Experiment2_f_sigma} ({\sc left}).
In general, $f(\sigma)$ on synthetic data looks smooth and exhibits a surprising structure: the function first increases before it decreases. Both the decreasing part or the increasing part may be missing depending on the setting (e.g.\ dimensionality of data and network architecture) but we observe the same shape of curves in all experiments.
For real data (CIFAR10), Figure~\ref{fig:Experiment2_f_sigma} ({\sc center/right}) shows that the function $f(\sigma)$ appears to have a similar shape (increasing, then decreasing, but less smooth) for training with both real and random labels. 

We interpret (without a formal argument) this surprisingly regular shape of $f(\sigma)$ to be the result of two effects:
(1)~Larger eigenvalues $\sigma_i$ lead to larger effective learning rate in gradient descent, which leads in turn to larger corresponding $\tau_i$, hence the increasing part of $f$.
(2)~Very large eigenvalues $\tau_i$ would dominate the output of the layer, masking the contribution of other components. Backpropagation compensates for this effect to capture more of the input signal. This leads to the decreasing part of $f$ for higher $\sigma_i$.
(See also Appendix E.1)

%-----------------------------
\subsection{Covariance alignment and eigenvalue mapping explains positive transfer experimentally}
\label{subsect:explain_pos_transfer}
\begin{figure}[tb]
  \centerline{
\includegraphics[width=0.33\textwidth]{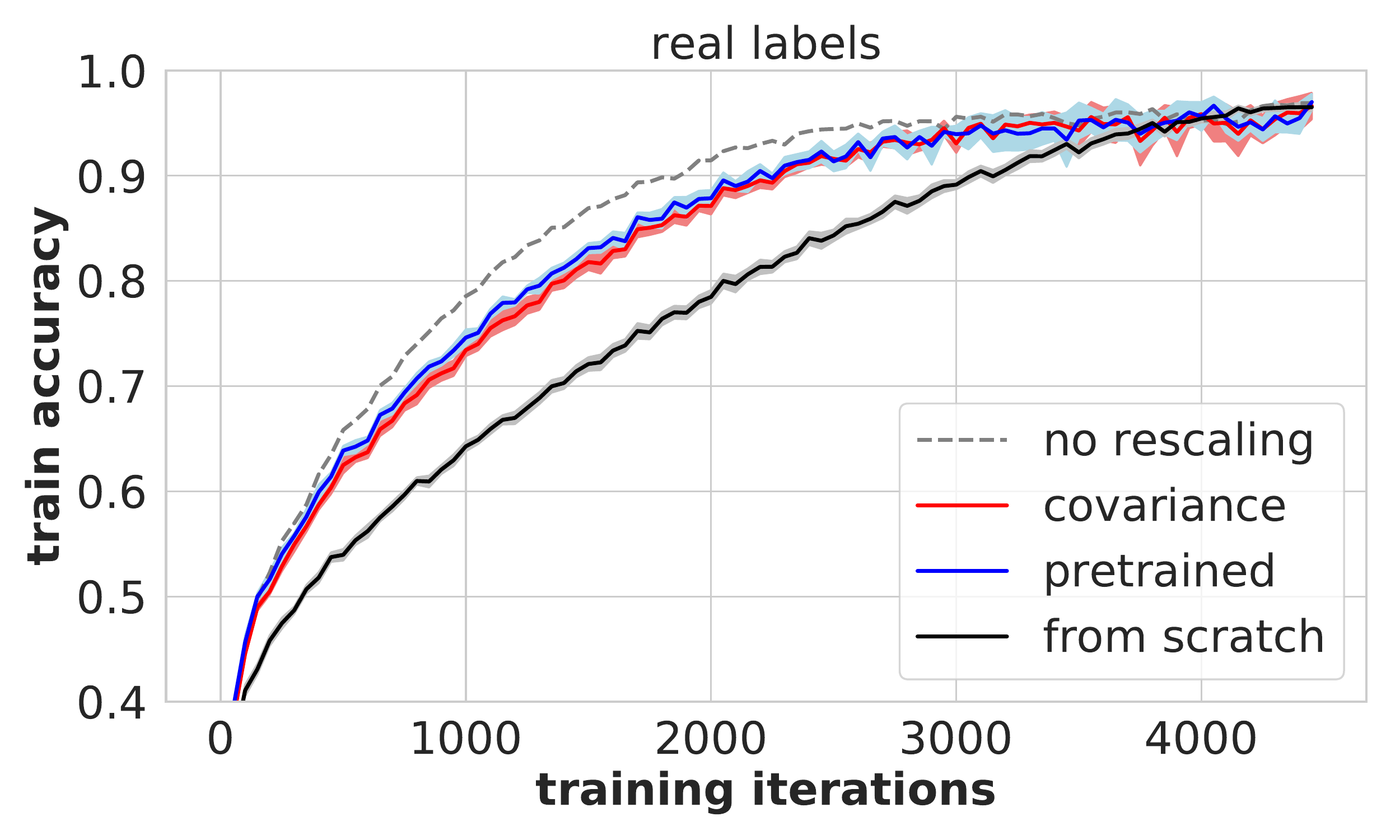}%
\includegraphics[width=0.33\textwidth]{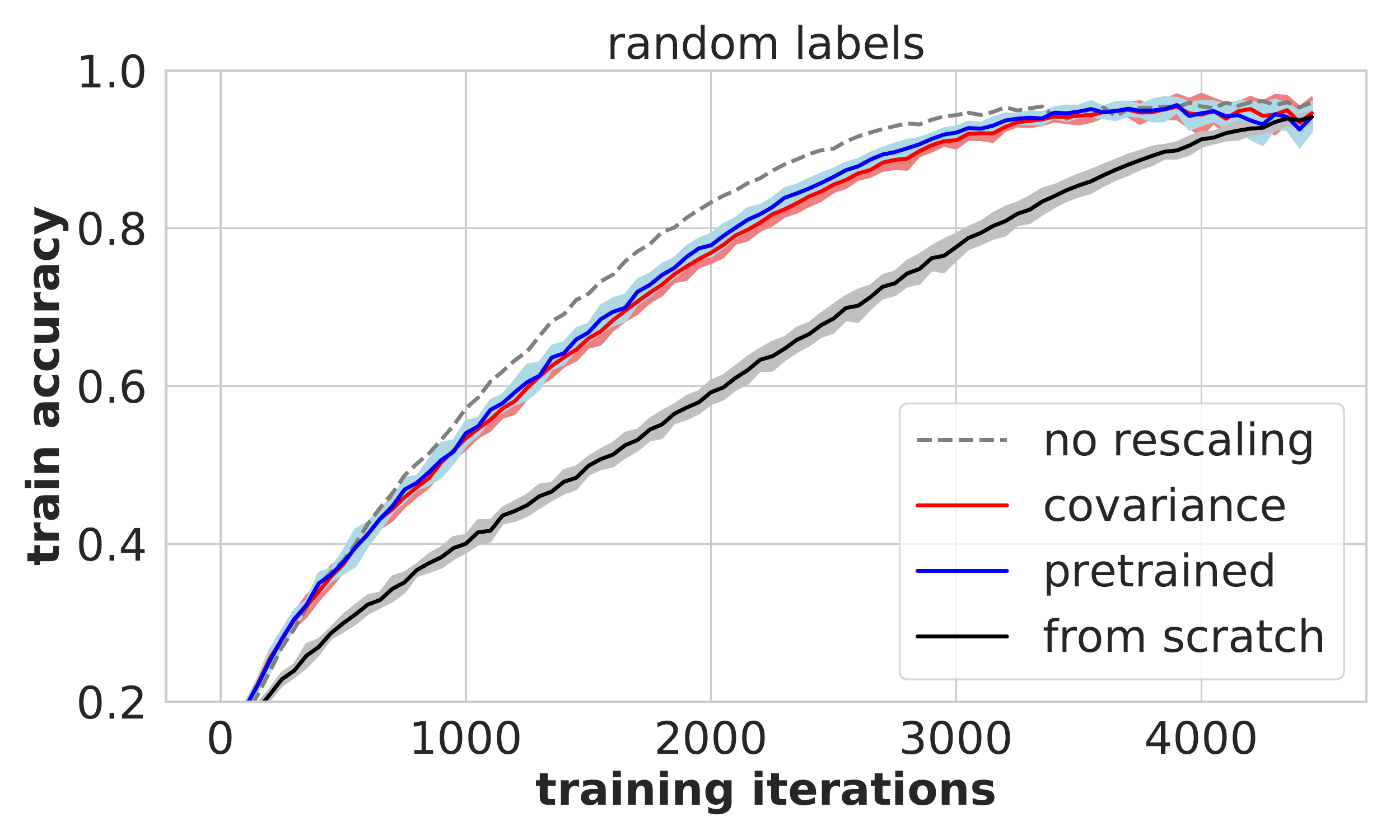}%  
\includegraphics[width=0.33\textwidth]{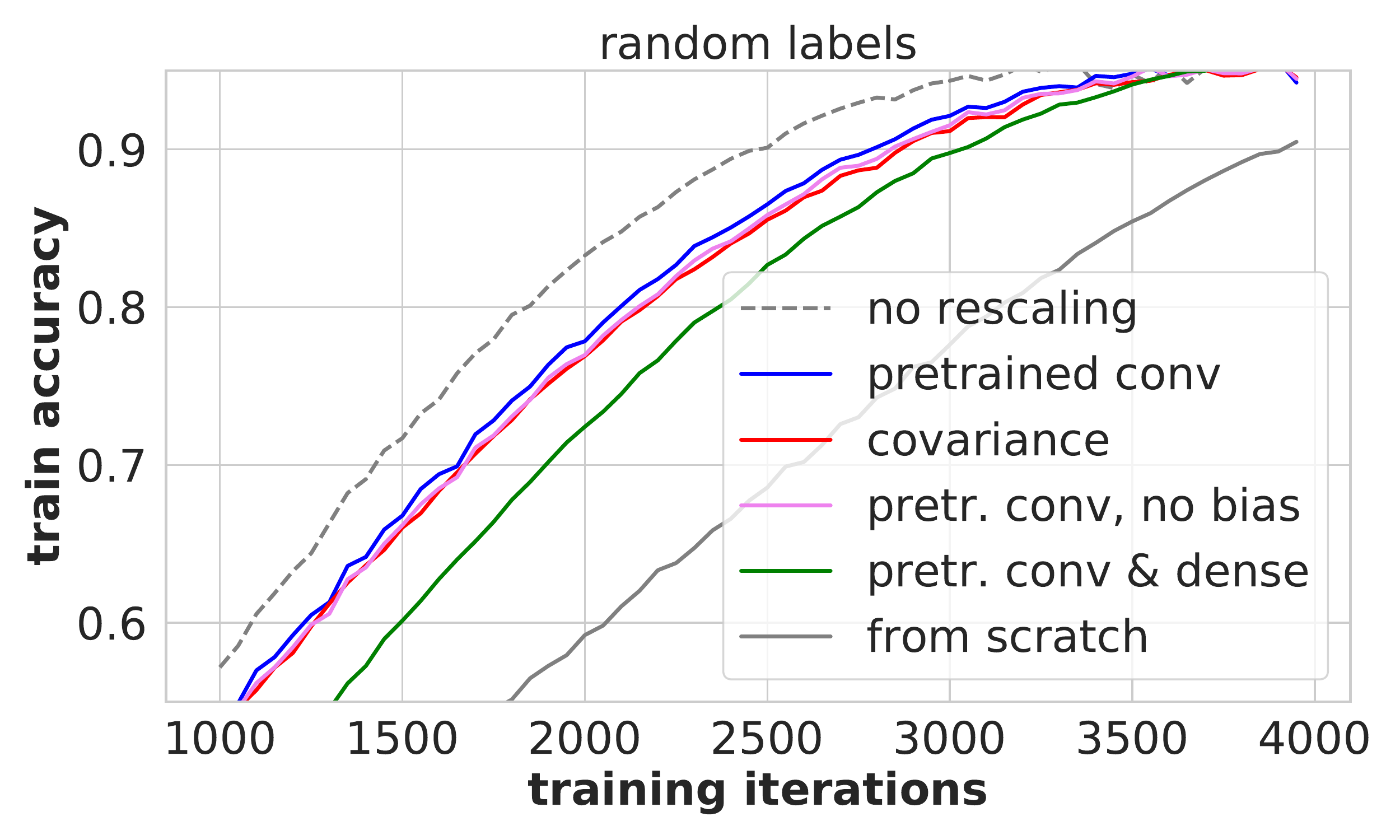}
  }
  \caption{
  Training accuracy for learning real labels ({\sc left}) and random labels ({\sc middle}, zoomed in: {\sc right}) on CIFAR10 with a simple CNN (one $3\times 3$ convolution, one hidden dense layer). Randomly initializing the convolutional filters from a learned covariance reproduces the effect of pre-training within measurement error (red and pink lines are almost indistinguishable in the right plot).      }
     \label{fig:PCA_init}
     % TODO: middle -> center
\end{figure}

To connect the alignment effect to the faster learning downstream, we conduct the following experiment.
Suppose that instead of pre-training on random labels, we sample from the Gaussian approximation of the filters in the first layer that were trained on random labels. 
In a simple CNN on CIFAR10, consisting of one convolutional and one fully connected layers, the gain in downstream task is almost fully recovered, as shown in Figure~\ref{fig:PCA_init}. 
The gray curves show the raw effect of pre-training, but this contains the 
scaling effect.
To eliminate the latter effect, we always re-scale the weights after pre-training to match their $\ell_2$ norm at initialization (using $\ell_1$ norm gives similar results).
Recovering the transfer effect in this way implies that the positive transfer is mainly due to the second-order statistics of the weights in the first layer, which, by Proposition~\ref{prop:1}, are fully described by the alignment of principal components in combination with the shape of the function $f(\sigma)$. 

Note that the combined results presented up to here indicate that both analytically and experimentally the following seems to be true: Given the training data of a neural network that is trained with random labels, we can predict the second-order statistics of its first layer weights  up to a one-dimensional scaling function $f(\sigma)$ of the eigenvalues and this function has a surprisingly regular shape. If further investigation of $f$ may lead us to understand its shape (which we regard as an interesting area of research), we could predict these after-training statistics perfectly by only gathering the data statistics. 

%----------------------------------------------------------------------
\subsection{Deeper layers}
\label{subsec:DeepNetworks}
So far, we have only discussed effects in the first layer of a neural network.
However, in Figure 6 we show that transferring more of the layers from the model pre-trained on
random labels improves the effect considerably. We now turn to generalizing Section
\ref{subsect:explain_pos_transfer} to the multi-layer case, i.e. we reproduce this effect with
weight initializations computed from the input distribution.

For the first layer, we have seen that we can reproduce the effect of training on random labels
by randomly sampling weights according to the corresponding covariance matrix $\Sigma_w$, which
in turn is given by the same eigenvectors $e_1,...,e_d$ as the data covariance $\Sigma_x$, 
and a set of new eigenvalues $\tau_1^2,...,\tau_d^2$. 
So, if we can approximate the right (or good) eigenvalues, 
we can directly compute an initialization that results in faster training in a subsequent
task of learning real labels. See the first two accuracy columns in Table 1 for
results in an example case, and Appendix E for different choices of $\tau_1,...,\tau_d$ (it turns
out different reasonable choices of the $\tau_i$ give all results very similar to Table 1).

We can then iterate this procedure also for the next (fully connected or convolutional) layers.
Given the filters for the earlier layers $L_1,...,L_{k-1}$, for each training image
we can compute the output after layer $L_{k-1}$, which becomes the input to the layer $L_k$.
Treating this as our input data, we determine the
eigenvectors $e_1, e_2,...$ of the corresponding data covariance matrix. 
Then we compute the $d$ most important directions and use $\tau_1 e_1, \tau_2 e_2,...,
\tau_d e_d$ (with the same assumed $\tau_1, \tau_2,...$ as before) as our constructed filters.
(Alternatively, we can sample according to the covariance
matrix given by the eigenvectors $e_i$ and eigenvalues $\tau_1^2,...,\tau_d^2,0,...,0$, which
gives again essentially the same results, compare Table 2 and 4 in Appendix E.3.)

Applying this recipe to a CNN with three convolutional layers and 
one fully connected layer, we see that this indeed gives initializations that 
become better when applied to 1,2, and 3 layers, see Table \ref{table:experiment_3conv}.
The performance after applying this approach to all three convolutional 
layers matches the performance of transferring the first three layers of a network trained on random labels.
See Appendix E.3 for details.

\begin{SCtable}
%\centering
\caption{Training and test accuracy on subsets of CIFAR10 
of the initialization procedure described in Section~\ref{subsec:DeepNetworks} on the layers of a simple convolutional network. Both training and test accuracies improve with the number of layers that are initialized in this way.}
\label{table:experiment_3conv}
\begin{tabular}{@{}rlcccc@{}}
\hline 
  & & \multicolumn{4}{c}{Convolutional layers sampled} \\
%\hline
Iterations & Data &$\{\}$ & $\{1\}$ & $\{1,2\}$ & $\{1,2,3\}$ \\ \hline 
100 & Train  & 0.31  & 0.34  & 0.38 & 0.41 \\
    & Test   & 0.31  & 0.33  & 0.37 & 0.40 \\
1000 & Train & 0.58  & 0.61  & 0.67 & 0.68 \\
     & Test  & 0.53  & 0.55  & 0.56 & 0.56 \\
\hline
\end{tabular}
\end{SCtable}

\begin{figure}[tb]
  \centering
  \scriptsize \sffamily
  \begin{minipage}{0.33\textwidth}
    \centering 
    Simple CNN    
  \end{minipage}
  \begin{minipage}{0.33\textwidth}
    \centering
    VGG16
  \end{minipage}
  \begin{minipage}{0.33\textwidth}
    \centering
    ResNet18
  \end{minipage}\\
  \includegraphics[width=0.33\textwidth]{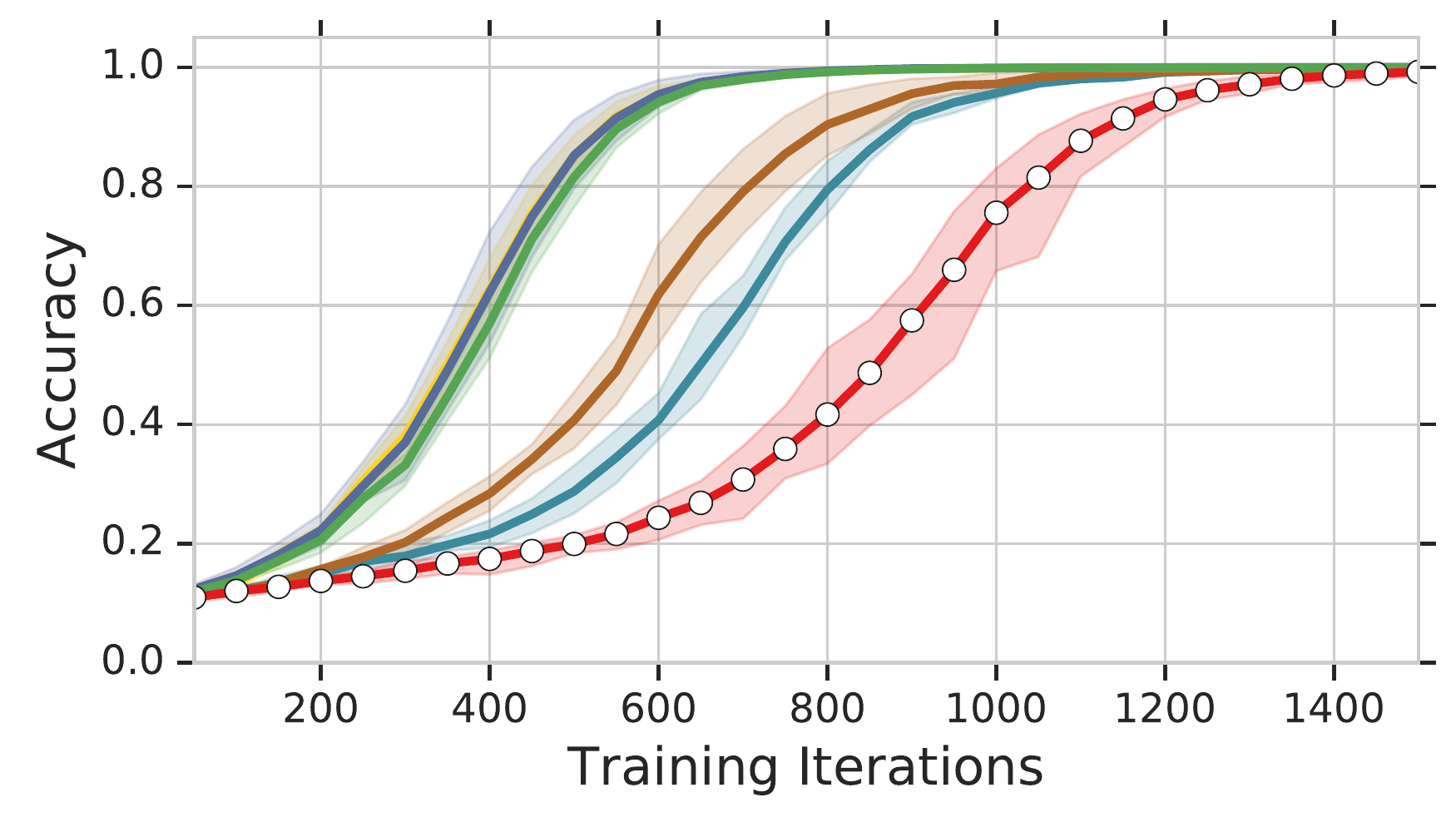}
  \includegraphics[width=0.33\textwidth]{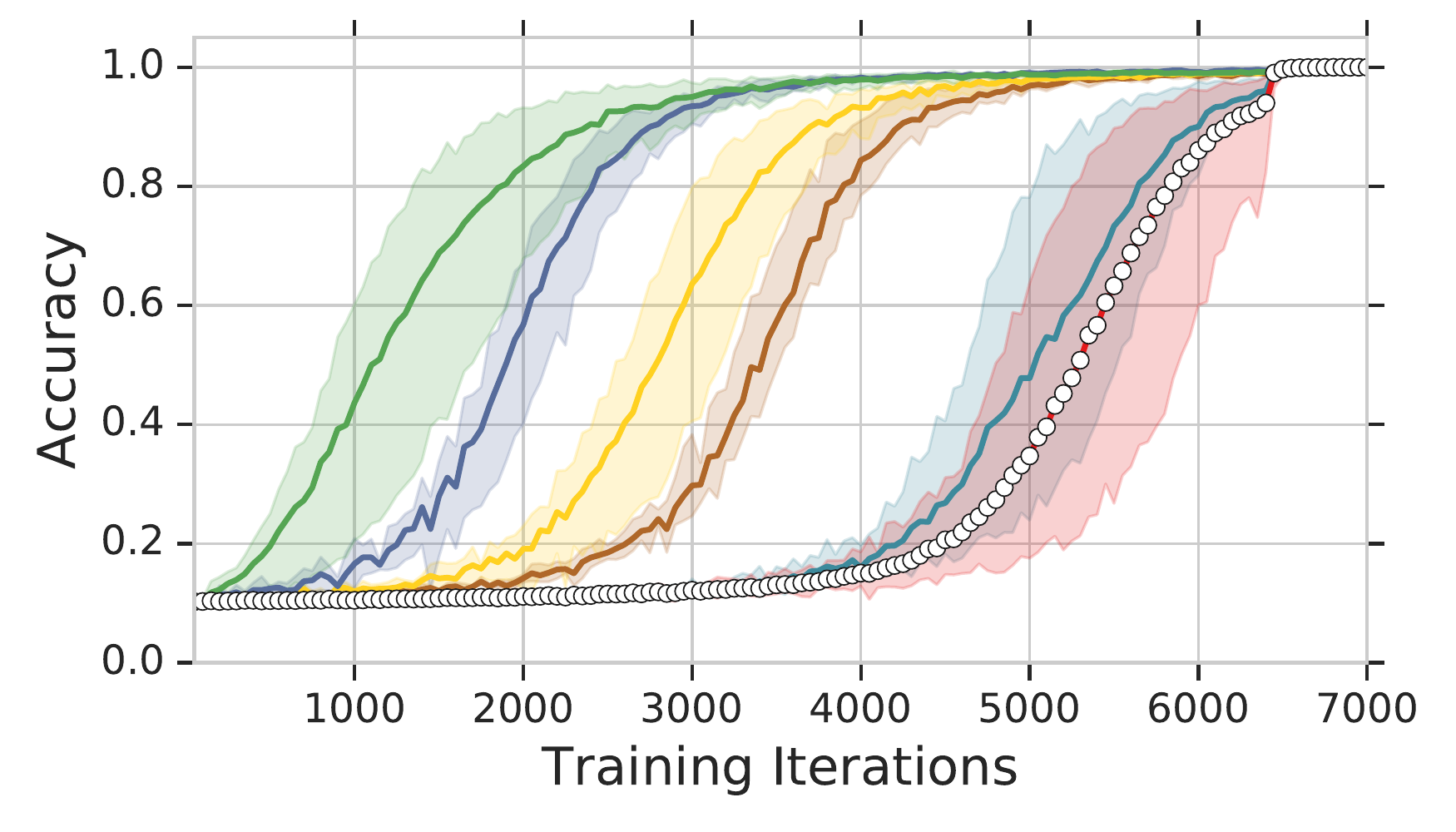}
  \includegraphics[width=0.33\textwidth]{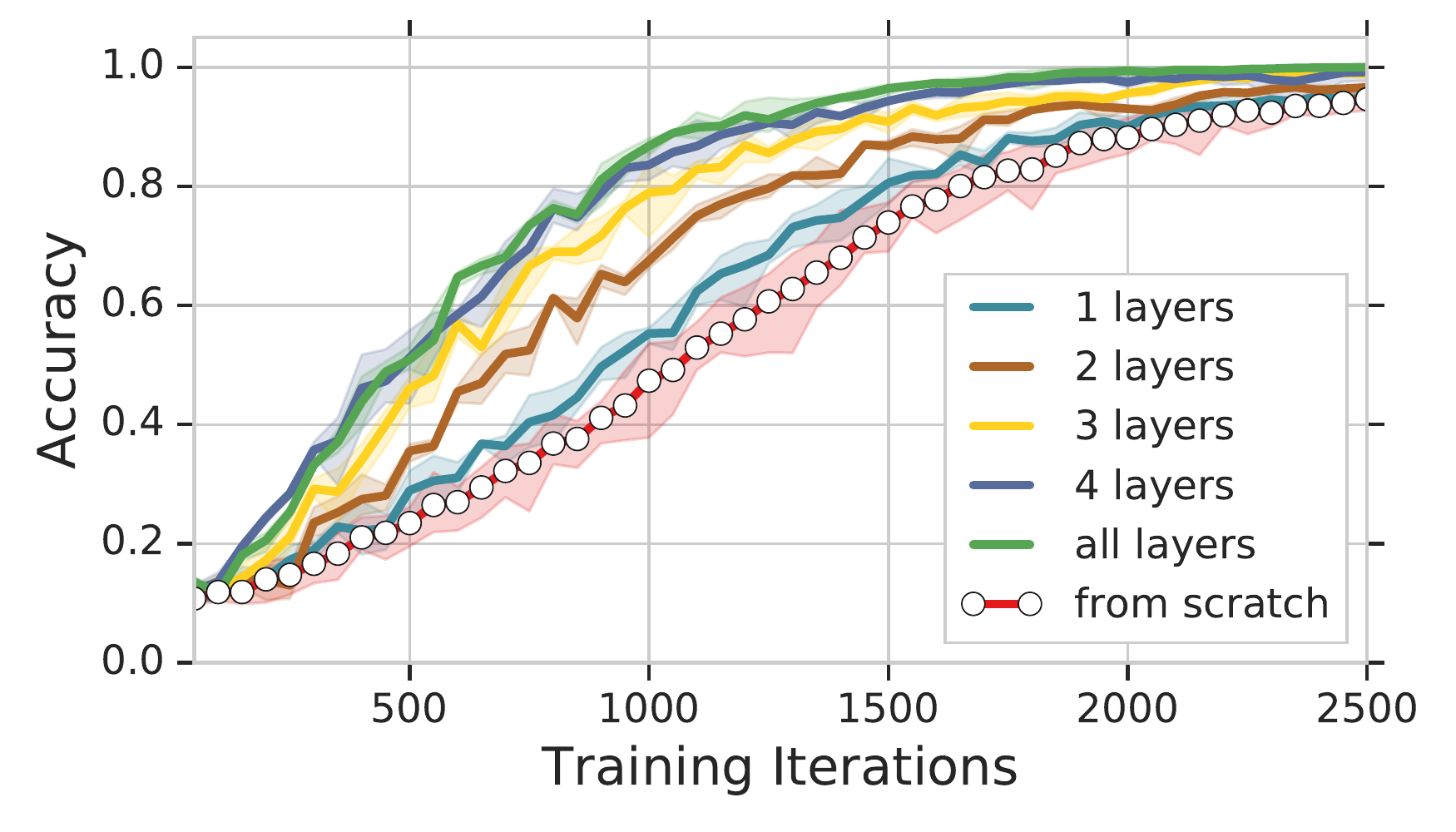}
  \caption{Transferring more layers improves downstream performance.
  Simple CNN architecture with 3 convolutional layers ({\sc left}),
  VGG16 ({\sc center}), and
  ResNet18 ({\sc right})
  pre-trained on CIFAR10 examples with random labels and subsequently
  fined-tuned on 25k fresh CIFAR10 examples with random labels.
  Lines with circular markers correspond to training from scratch.
  Error bars correspond to min/max over 3 runs.
  Plots for fine-tuning with real labels available in the appendix.
  % NB: we removed one failed run for the Simple CNN.
  }
  \label{fig:TransferKLayers}
\end{figure}

%----------------------------------------------------------------------
\section{Specializing neurons}
\label{sec:Specializing}
Despite the alignment effect taking place at the earlier layers of the neural network when trained with random labels, negative transfer is sometimes observed when fine-tuning on a downstream task as shown in Figure~\ref{fig:image1}. In this section, we show that this is likely due to a specialization at the later layer.

Figure~\ref{fig:VGGNeuronActivation} displays the distribution of neurons with respect to the number of held out images they are activated by for the settings of Figure~\ref{fig:image1} that exhibited positive (top row) and negative (bottom row) transfers. Comparing neural activations during initialization, end of pre-training, and end of fine-tuning, we note that neural activations are markedly diminished in the negative transfer case compared to the positive transfer case despite the fact that their neural activation distributions were identical during initialization. In Appendix F, we show that the significant drop in neural activation in the negative transfer case happens immediately after switching to the downstream task. As a result, the effective  capacity available downstream is diminished. By contrast, neural activations are not severely impacted in the positive transfer setting. In Appendix F, we provide detailed figures describing this phenomenon across all layers of VGG16, which reveal that such a specialization effect becomes more prominent in the later layers compared to the earlier layers. In particular, Appendix F shows that neural activations at the later layers can drop abruptly and permanently once the
 switch to the downstream task takes place due to specialization, which can prevent the network for recovering its fully capacity.

\begin{figure}[tb]
  \centering \scriptsize \sffamily
  \begin{minipage}{0.32\textwidth}
    \centering
    \small
    \hspace{.9cm}Initialization
  \end{minipage}
  \begin{minipage}{0.32\textwidth}
    \centering
    \small
    End of pre-training
  \end{minipage}
  \begin{minipage}{0.32\textwidth}
    \centering
    \small
    End of fine-tuning
  \end{minipage}\\
    \includegraphics[width=.32\textwidth]{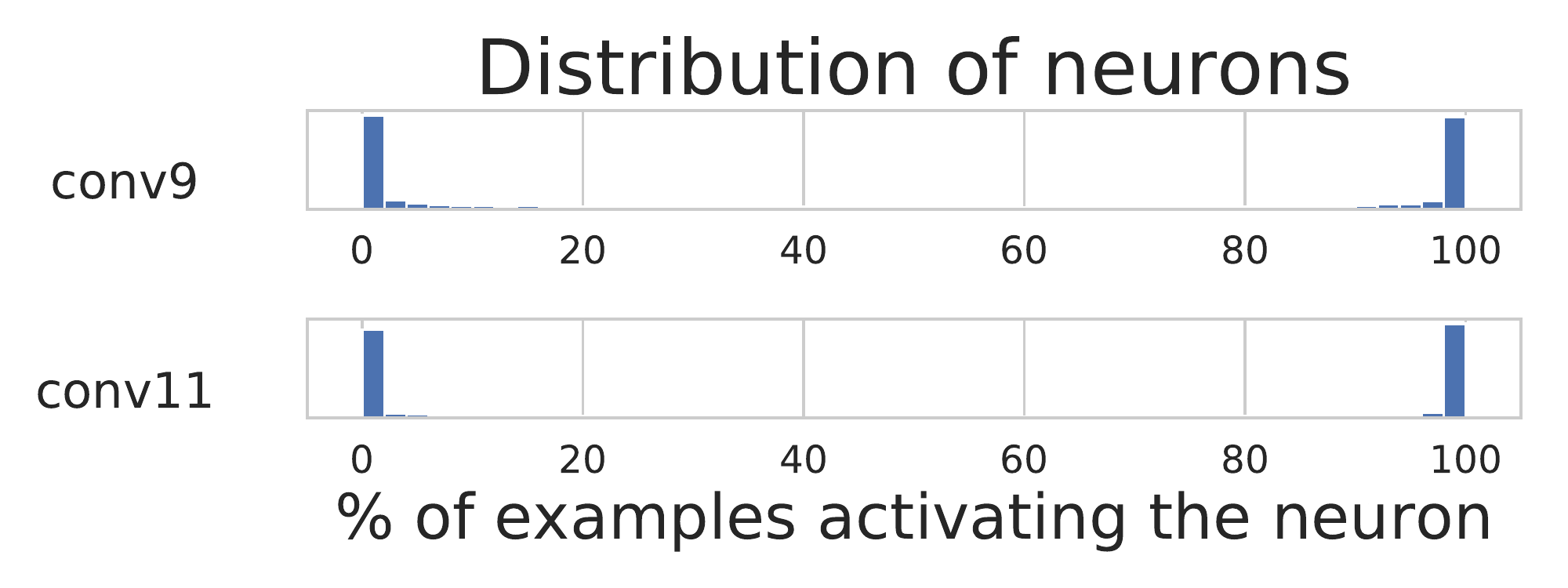}
  \includegraphics[width=.32\textwidth]{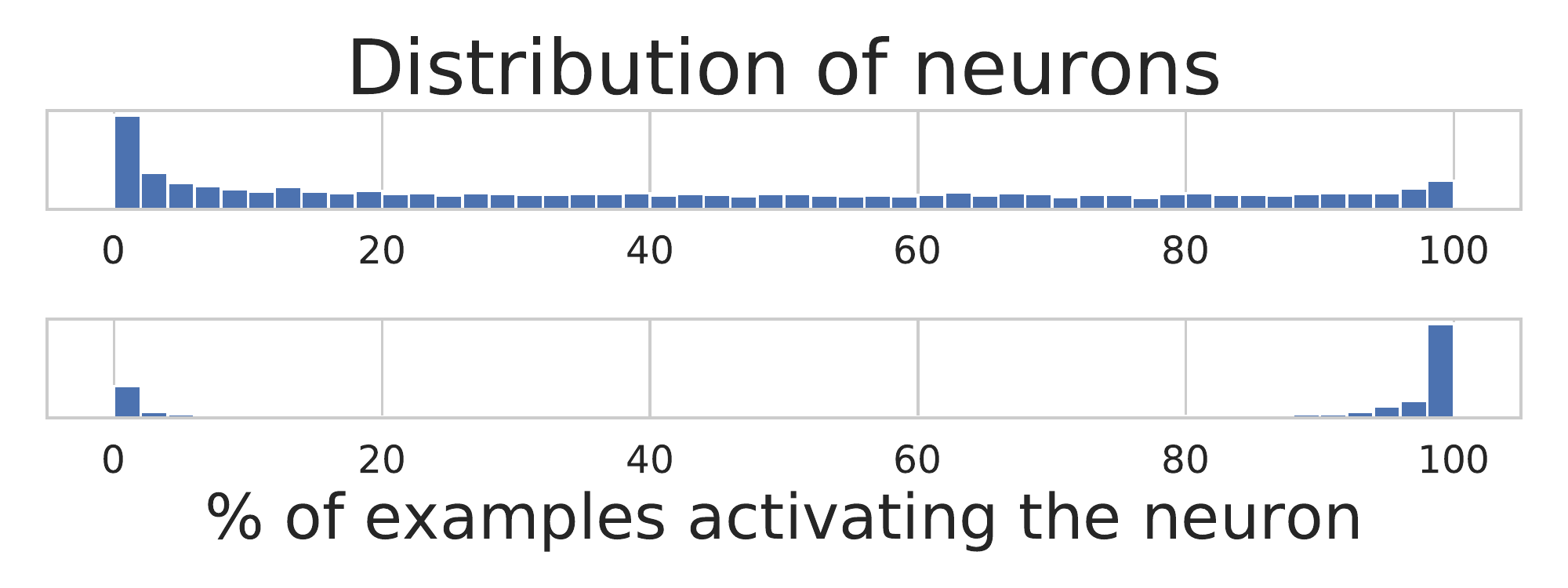}
  \includegraphics[width=.32\textwidth]{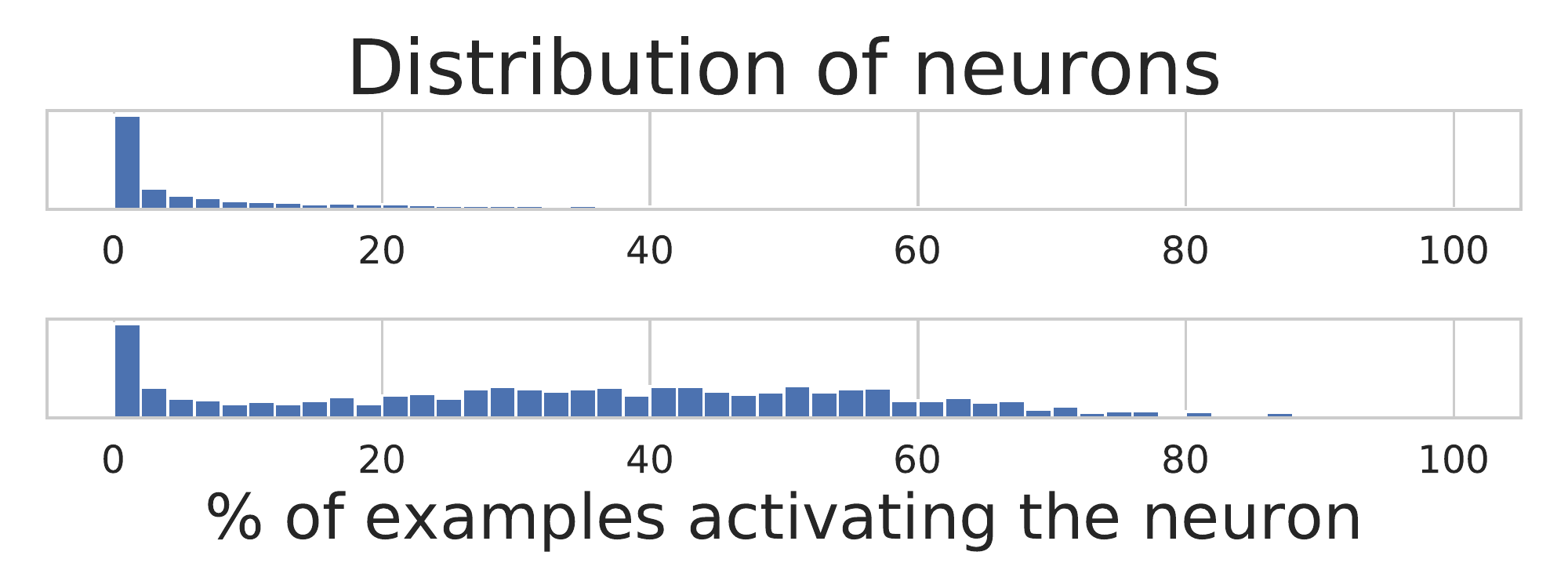}\\
  \includegraphics[width=.32\textwidth]{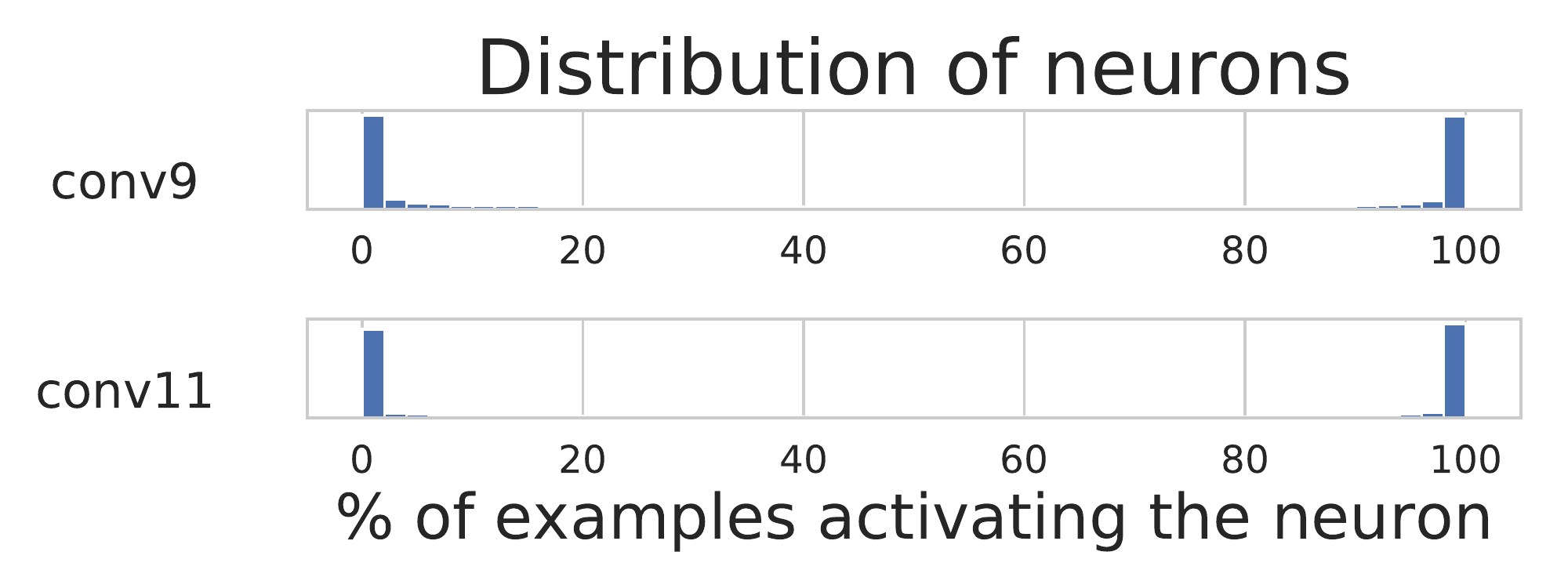}
  \includegraphics[width=.32\textwidth]{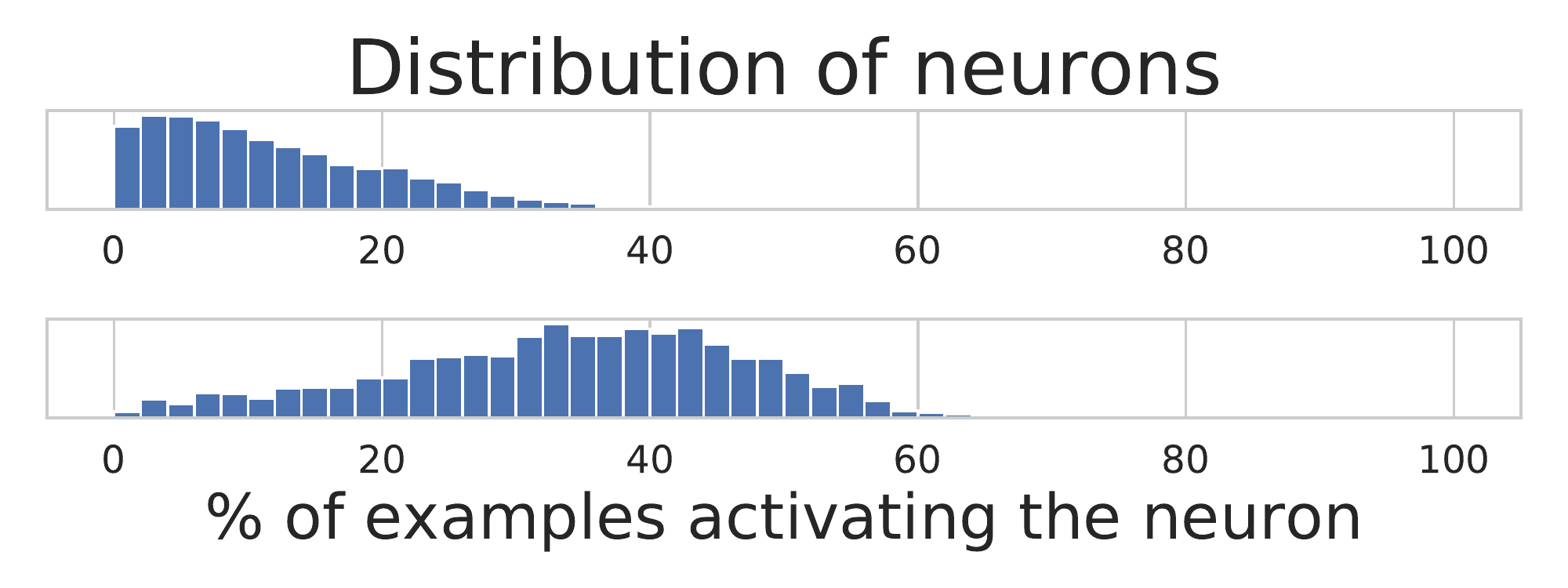}
  \includegraphics[width=.32\textwidth]{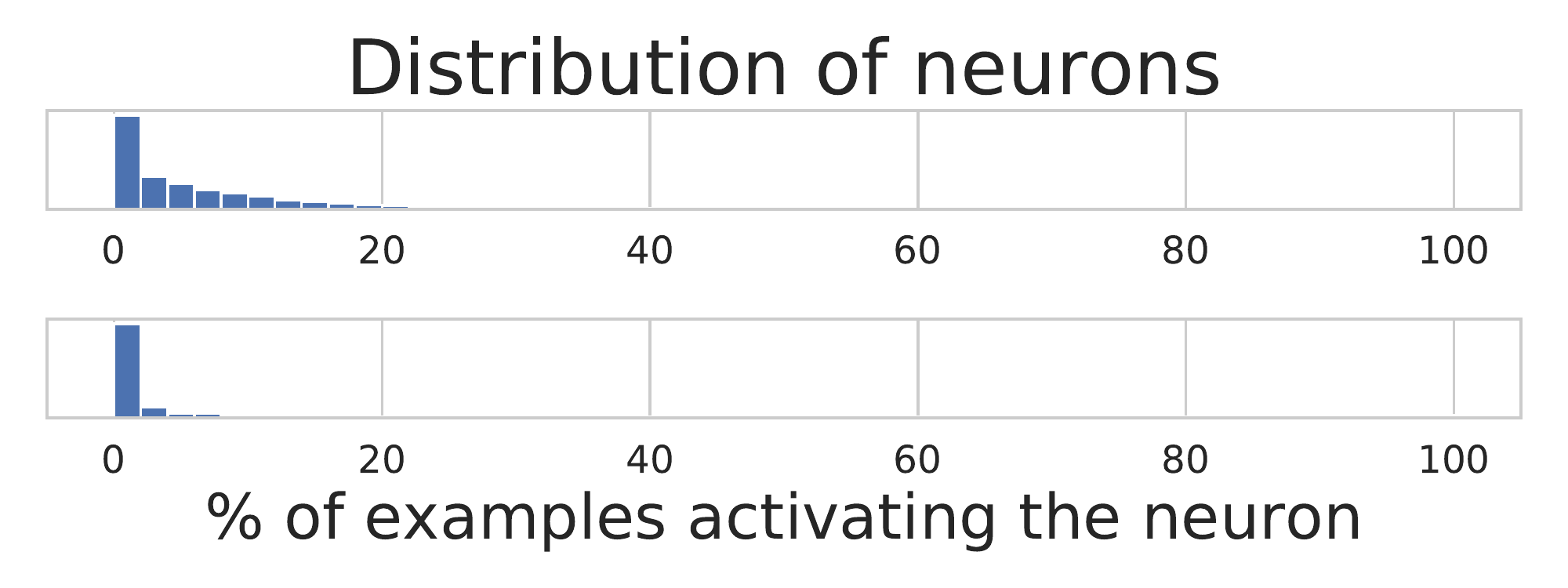}
  \caption{
    Activation plots for the two VGG16 models in Figure \ref{fig:image1}
    at initialization ({\sc left}),
    after pre-training %on CIFAR10 examples 
    with random labels ({\sc center}),
    and after subsequently fine-tuning on fresh %25k CIFAR10 
    examples with random labels ({\sc right}). Top row is for the positive transfer case; bottom row shows negative transfer. % of Figure~\ref{fig:image1}.
    Histograms depict distributions of neurons over the fraction of \emph{held out} %CIFAR10 
    examples that activate them.
    The two histograms in each subplot correspond to two different %VGG16
    activation spaces.
    %of widths 8\,192 and 2\,048, respectively.
    %Histograms for {all the intermediate layers} are shown in the appendix.
    }
  \label{fig:VGGNeuronActivation}
\end{figure}

\begin{figure}[tb]
  \centering
  \includegraphics[width=.32\textwidth]{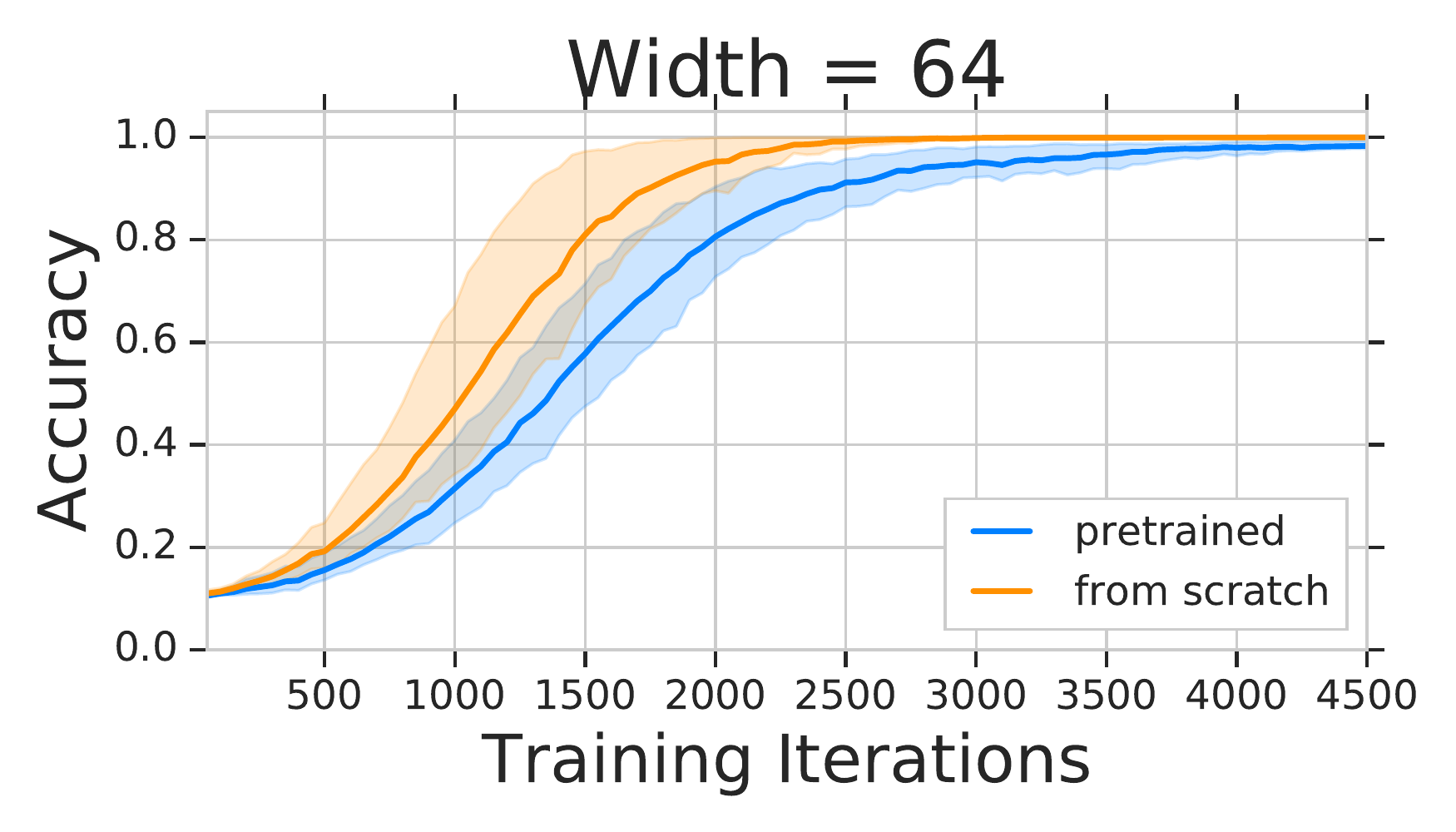}
  \includegraphics[width=.32\textwidth]{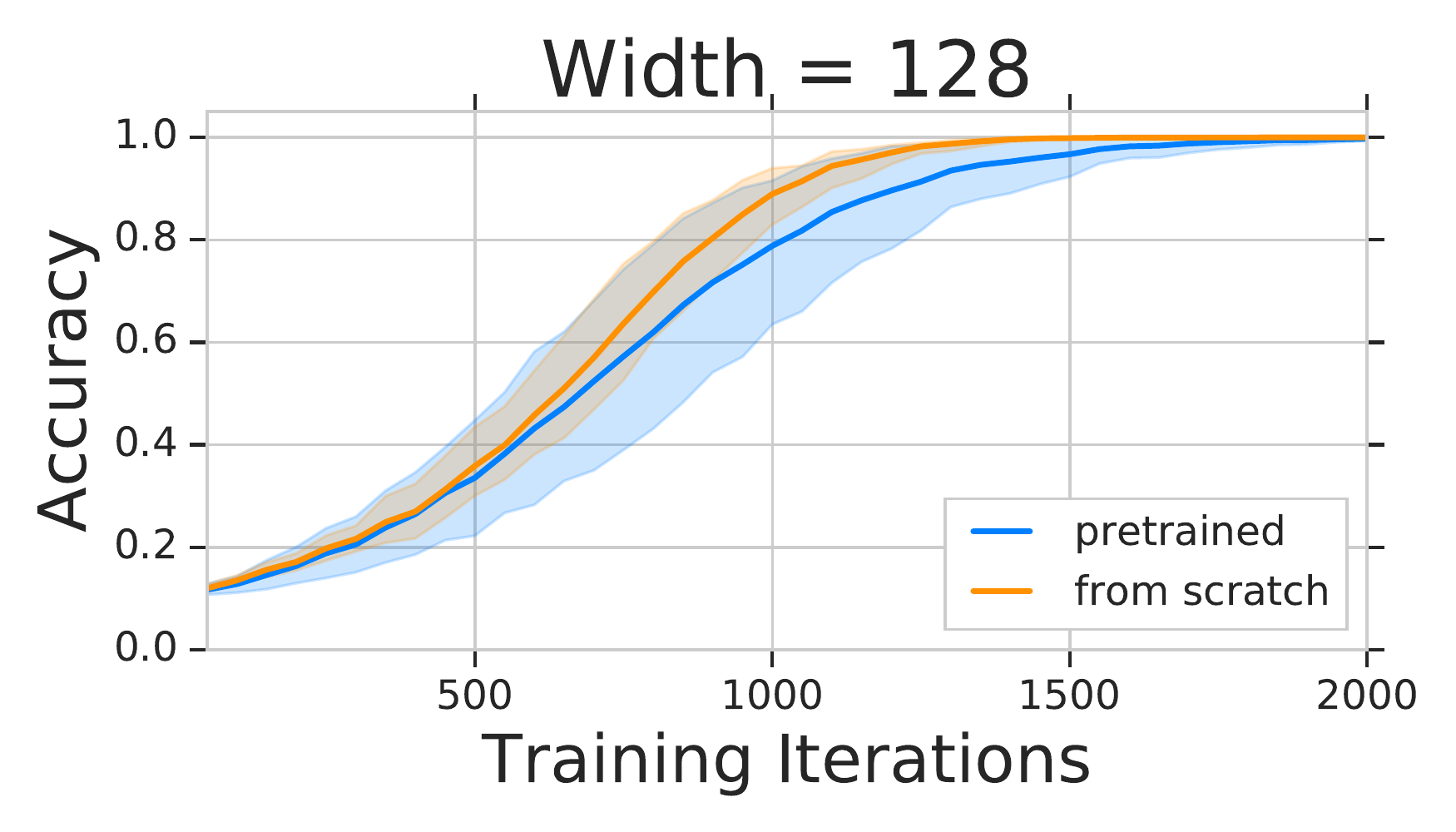}
  \includegraphics[width=.32\textwidth]{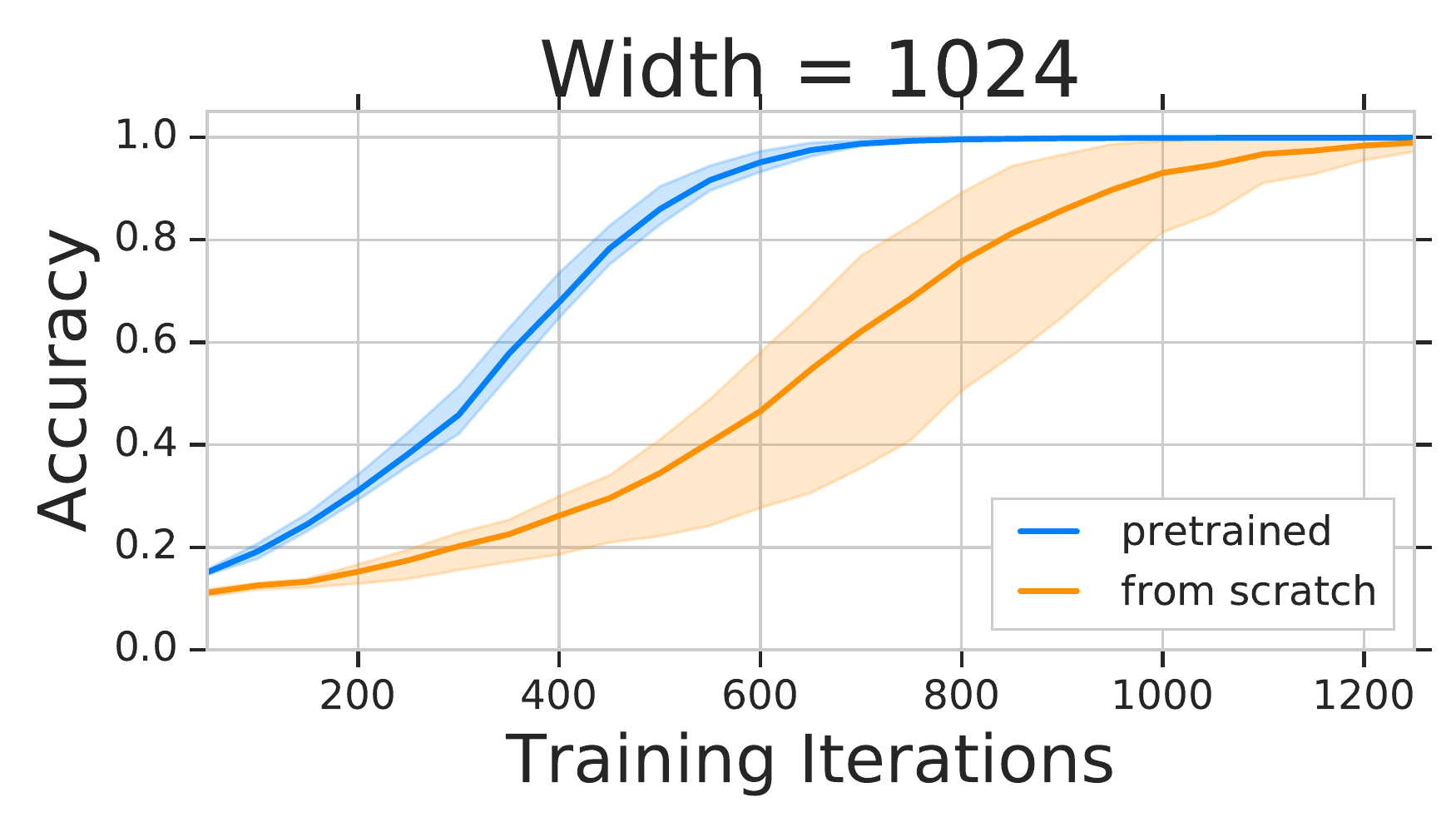}
  \caption{
    Increasing model width mitigates negative transfer.
    Simple CNN architectures with two convolutional layers and 64 ({\sc left}), 128 ({\sc center}), and 1024 ({\sc right}) units in the dense layer.
    %Models were pre-trained on CIFAR10 examples with random labels and subsequently fine-tuned on 25k fresh CIFAR10 examples with random labels.
    }
  \label{fig:SimpleCNNIncreasingWIdth}
\end{figure}

One way to mitigate the effect of the inactive ReLU units is to increase the width so that the capacity remains sufficiently large for the downstream task. Figure~\ref{fig:SimpleCNNIncreasingWIdth} shows that increasing the width can indeed mitigate the negative transfer effect.  
While increased width seems to have general performance advantages~\cite{zagor2016}, it seems to be
also particularly useful in the case of transfer learning~\cite{alex2019big}.

%======================================================================
\section{Concluding remarks}
The objective of this paper is to answer the  question of what neural networks learn when trained on random labels. 
We provide a partial answer by proving an alignment effect of principal components of network parameters and data and studying its implications, particularly for transfer learning. 
One important consequence is that second-order statistics of the earlier layers can be reduced to a one-dimensional function, which exhibits a surprising, regular structure. 
It remains an open question what the ``optimal'' shape of such function is, or whether it can be described analytically.  

The models used in this paper are taken from a large set of experiments that we conducted using popular network architectures and datasets, such as simple convolutional networks, VGG16, ResNet18-v2, CIFAR10,
and ImageNet, with wide range of hyperparameter settings (Appendix B).
These experiments show that pre-training on random labels
very often accelerates training on downstream tasks compared to training from scratch with the \emph{same hyperparameters}
and
rarely hurts the training speed.

By studying what is learned on random labels, we shed new insights into previous phenomena that have been reported in the literature. For instance, the alignment effect at the earlier layers explains the empirical observations of \cite{pondenkandath2018leveraging} that pre-training on random labels can accelerate training in downstream tasks and the observation of \cite{arpit2017closer} that the filters learned on random labels seem to exhibit some useful structure.  Also, our findings related to the inactive ReLU units at the later layers demonstrate how upper layers specialize early during training, which may explain why neural networks exhibit critical learning stages \cite{achille2017critical} and why increasing the width seems to be particularly useful in transfer learning \cite{alex2019big}. Both alignment and specialization are in agreement with the observation that  earlier layers generalize while later layers specialize, a conclusion that has been consistently observed in the literature when training on real labels \cite{ansuini2019intrinsic,arpit2017closer,cohen2018dnn,yosinski2014transferable}. We show that it holds for random labels as well.  
%================================================================================

% \section*{Broader Impact}
% The authors believe that this question is not relevant to the study presented in the paper.
% This paper is partially theoretical work and contains experiments to study the theoretical results and related hypotheses. The paper aims at understanding DNN training and behavior better and therefore does not have a \textit{direct} impact on applications or society. While we hope that better understanding will lead to improvements in the future, speculating on the potential long-term impact seems rather far-fetched at this point.

\section*{Acknowledgements}
The authors are grateful to
%% alphabetical by first or last name? currently: first name
Alexander Kolesnikov, 
Alexandru Ţifrea,
Jessica Yung,
Larisa Markeeva,
Lionel Ngoupeyou Tondji, 
Lucas Beyer,
Philippe Gervais,
and other members of the Google Brain Team
for valuable discussions.

\section*{Broader Impact}
This work is partially theoretical  and contains experiments to study the theoretical results and related hypotheses. The paper aims at improving our understanding of how DNNs learn from data and therefore does not have a \textit{direct} impact on applications or society. Hence, speculating on its potential broader impact is difficult at this stage. Nevertheless, we hope that a better understanding of deep neural networks will lead to improvements in the future along the direction of building interpretable and explainable AI,
% , which has the potential to identify sources of bias, for example. Identifying  bias and improving explainability are both 
which are critical ingredients for the creation of socially-responsible AI systems.

\bibliographystyle{abbrvnat}  % to get \citep and \citet to work
\bibliography{main}

\begin{thebibliography}{56}
\providecommand{\natexlab}[1]{#1}
\providecommand{\url}[1]{\texttt{#1}}
\expandafter\ifx\csname urlstyle\endcsname\relax
  \providecommand{\doi}[1]{doi: #1}\else
  \providecommand{\doi}{doi: \begingroup \urlstyle{rm}\Url}\fi

\bibitem[Achille et~al.(2019)Achille, Rovere, and Soatto]{achille2017critical}
A.~Achille, M.~Rovere, and S.~Soatto.
\newblock Critical learning periods in deep networks.
\newblock In \emph{{ICLR}}, 2019.

\bibitem[Amari(2016)]{amari2016information}
S.~Amari.
\newblock \emph{Information geometry and its applications}.
\newblock Springer, 2016.

\bibitem[Ansuini et~al.(2019)Ansuini, Laio, Macke, and
  Zoccolan]{ansuini2019intrinsic}
A.~Ansuini, A.~Laio, J.~H. Macke, and D.~Zoccolan.
\newblock Intrinsic dimension of data representations in deep neural networks.
\newblock In \emph{NeurIPS}, 2019.

\bibitem[Arpit et~al.(2017)Arpit, Jastrz{\k{e}}bski, Ballas, Krueger, Bengio,
  Kanwal, Maharaj, Fischer, Courville, Bengio, and
  Lacoste-Julien]{arpit2017closer}
D.~Arpit, S.~Jastrz{\k{e}}bski, N.~Ballas, D.~Krueger, E.~Bengio, M.~S. Kanwal,
  T.~Maharaj, A.~Fischer, A.~Courville, Y.~Bengio, and S.~Lacoste-Julien.
\newblock A closer look at memorization in deep networks.
\newblock In \emph{ICML}, 2017.

\bibitem[Ay et~al.(2017)Ay, Jost, V{\^a}n~L{\^e}, and
  Schwachh{\"o}fer]{ay2017information}
N.~Ay, J.~Jost, H.~V{\^a}n~L{\^e}, and L.~Schwachh{\"o}fer.
\newblock \emph{Information geometry}.
\newblock Springer, 2017.

\bibitem[Bartlett(1998)]{bartlett1998sample}
P.~L. Bartlett.
\newblock The sample complexity of pattern classification with neural networks:
  the size of the weights is more important than the size of the network.
\newblock \emph{IEEE Trans. Information Theory}, 44\penalty0 (2):\penalty0
  525--536, 1998.

\bibitem[Bartlett et~al.(2017)Bartlett, Foster, and Telgarsky]{Bartlett2017}
P.~L. Bartlett, D.~J. Foster, and M.~Telgarsky.
\newblock Spectrally-normalized margin bounds for neural networks.
\newblock In \emph{NeurIPS}, 2017.

\bibitem[Brown et~al.(2020)Brown, Mann, Ryder, Subbiah, Kaplan, Dhariwal,
  Neelakantan, Shyam, Sastry, Askell, et~al.]{brown2020language}
T.~B. Brown, B.~Mann, N.~Ryder, M.~Subbiah, J.~Kaplan, P.~Dhariwal,
  A.~Neelakantan, P.~Shyam, G.~Sastry, A.~Askell, et~al.
\newblock Language models are few-shot learners.
\newblock \emph{arXiv:2005.14165}, 2020.

\bibitem[Clevert et~al.(2015)Clevert, Unterthiner, and
  Hochreiter]{clevert2015fast}
D.-A. Clevert, T.~Unterthiner, and S.~Hochreiter.
\newblock Fast and accurate deep network learning by exponential linear units
  ({ELUs}).
\newblock \emph{arXiv:1511.07289}, 2015.

\bibitem[Cohen et~al.(2018)Cohen, Sapiro, and Giryes]{cohen2018dnn}
G.~Cohen, G.~Sapiro, and R.~Giryes.
\newblock {DNN} or {k-NN}: That is the generalize vs. memorize question.
\newblock \emph{arXiv:1805.06822}, 2018.

\bibitem[Collins et~al.(2018)Collins, Bigdeli, and
  S{\"u}sstrunk]{collins2018detecting}
E.~Collins, S.~A. Bigdeli, and S.~S{\"u}sstrunk.
\newblock Detecting memorization in {ReLU} networks.
\newblock \emph{arXiv:1810.03372}, 2018.

\bibitem[Costa et~al.(2015)Costa, Santos, and Strapasson]{costa2015fisher}
S.~I. Costa, S.~A. Santos, and J.~E. Strapasson.
\newblock Fisher information distance: a geometrical reading.
\newblock \emph{Discrete Applied Mathematics}, 197:\penalty0 59--69, 2015.

\bibitem[Dehmamy et~al.(2019)Dehmamy, Rohani, and Katsaggelos]{Dehmamy2019}
N.~Dehmamy, N.~Rohani, and A.~Katsaggelos.
\newblock Direct estimation of weights and efficient training of deep neural
  networks without sgd.
\newblock \emph{Proceedings - ICASSP, IEEE International Conference on
  Acoustics, Speech and Signal Processing}, 02 2019.

\bibitem[Deng et~al.(2009)Deng, Dong, Socher, Li, Li, and
  Fei-Fei]{deng2009imagenet}
J.~Deng, W.~Dong, R.~Socher, L.-J. Li, K.~Li, and L.~Fei-Fei.
\newblock Imagenet: A large-scale hierarchical image database.
\newblock In \emph{CVPR}, 2009.

\bibitem[Dosovitskiy et~al.(2014)Dosovitskiy, Springenberg, Riedmiller, and
  Brox]{Dosovitskiy14}
A.~Dosovitskiy, J.~T. Springenberg, M.~Riedmiller, and T.~Brox.
\newblock Discriminative unsupervised feature learning with convolutional
  neural networks.
\newblock In \emph{NeurIPS}, 2014.

\bibitem[Douglas and Yu(2018)]{douglas2018relu}
S.~C. Douglas and J.~Yu.
\newblock Why {ReLU} units sometimes die: Analysis of single-unit error
  backpropagation in neural networks.
\newblock In \emph{52nd ACSSC}, 2018.

\bibitem[Frankle et~al.(2020)Frankle, Schwab, and Morcos]{frankle2020early}
J.~Frankle, D.~J. Schwab, and A.~S. Morcos.
\newblock The early phase of neural network training.
\newblock In \emph{ICLR}, 2020.

\bibitem[Gan et~al.(2015)Gan, Liu, Dong, and Zhong]{Gan15}
Y.~Gan, J.~Liu, J.~Dong, and G.~Zhong.
\newblock A {PCA}-based convolutional network.
\newblock \emph{arXiv:1505.03703}, 2015.

\bibitem[Gu and Tresp(2019)]{gu2019neural}
J.~Gu and V.~Tresp.
\newblock Neural network memorization dissection, 2019.

\bibitem[Han et~al.(2018)Han, Yao, Yu, Niu, Xu, Hu, Tsang, and
  Sugiyama]{han2018co}
B.~Han, Q.~Yao, X.~Yu, G.~Niu, M.~Xu, W.~Hu, I.~Tsang, and M.~Sugiyama.
\newblock Co-teaching: Robust training of deep neural networks with extremely
  noisy labels.
\newblock In \emph{NeurIPS}, 2018.

\bibitem[He et~al.(2015)He, Zhang, Ren, and Sun]{he2015}
K.~He, X.~Zhang, S.~Ren, and J.~Sun.
\newblock Delving deep into rectifiers: Surpassing human-level performance on
  imagenet classification.
\newblock In \emph{ICCV}, 2015.

\bibitem[He et~al.(2016)He, Zhang, Ren, and Sun]{He2016}
K.~He, X.~Zhang, S.~Ren, and J.~Sun.
\newblock Identity mappings in deep residual networks.
\newblock In \emph{ECCV}, 2016.

\bibitem[Helmholtz(1896)]{Helmholtz1896}
H.~v. Helmholtz.
\newblock \emph{Handbuch der Physiologischen Optik. 2.Auflage.}
\newblock Leopold Voss, 1896.

\bibitem[Huang et~al.(2019)Huang, Cheng, Bapna, Firat, Chen, Chen, Lee, Ngiam,
  Le, Wu, and Chen]{huang2018gpipe}
Y.~Huang, Y.~Cheng, A.~Bapna, O.~Firat, M.~X. Chen, D.~Chen, H.~Lee, J.~Ngiam,
  Q.~V. Le, Y.~Wu, and Z.~Chen.
\newblock {GPipe}: Efficient training of giant neural networks using pipeline
  parallelism.
\newblock In \emph{NeurIPS}, 2019.

\bibitem[Huber(2011)]{Huber2011}
P.~J. Huber.
\newblock Robust statistics.
\newblock In \emph{International Encyclopedia of Statistical Science}, pages
  1248--1251. Springer, 2011.

\bibitem[Ioffe and Szegedy(2015)]{ioffe2015}
S.~Ioffe and C.~Szegedy.
\newblock Batch normalization: Accelerating deep network training by reducing
  internal covariate shift.
\newblock In \emph{ICML}, 2015.

\bibitem[Jiang et~al.(2018)Jiang, Zhou, Leung, Li, and
  Fei-Fei]{jiang2018mentornet}
L.~Jiang, Z.~Zhou, T.~Leung, L.-J. Li, and L.~Fei-Fei.
\newblock Mentornet: Learning data-driven curriculum for very deep neural
  networks on corrupted labels.
\newblock In \emph{ICML}, 2018.

\bibitem[Keskar et~al.(2017)Keskar, Mudigere, Nocedal, Smelyanskiy, and
  Tang]{keskar2016large}
N.~S. Keskar, D.~Mudigere, J.~Nocedal, M.~Smelyanskiy, and P.~T.~P. Tang.
\newblock On large-batch training for deep learning: Generalization gap and
  sharp minima.
\newblock In \emph{{ICLR}}, 2017.

\bibitem[Kivinen and Warmuth(1997)]{kivinen1997exponentiated}
J.~Kivinen and M.~K. Warmuth.
\newblock Exponentiated gradient versus gradient descent for linear predictors.
\newblock \emph{Information and Computation}, 132\penalty0 (1):\penalty0 1--63,
  1997.

\bibitem[Kolesnikov et~al.(2019)Kolesnikov, Beyer, Zhai, Puigcerver, Yung,
  Gelly, and Houlsby]{alex2019big}
A.~Kolesnikov, L.~Beyer, X.~Zhai, J.~Puigcerver, J.~Yung, S.~Gelly, and
  N.~Houlsby.
\newblock Big transfer ({BiT}): General visual representation learning.
\newblock \emph{arXiv:1912.11370}, 2019.

\bibitem[Krizhevsky(2009)]{Krizhevsky09learningmultiple}
A.~Krizhevsky.
\newblock Learning multiple layers of features from tiny images.
\newblock Technical report, University of Toronto, 2009.

\bibitem[Li et~al.(2017)Li, Yang, Song, Cao, Luo, and Li]{li2017learning}
Y.~Li, J.~Yang, Y.~Song, L.~Cao, J.~Luo, and L.-J. Li.
\newblock Learning from noisy labels with distillation.
\newblock In \emph{CVPR}, 2017.

\bibitem[Liu and Tao(2015)]{liu2015classification}
T.~Liu and D.~Tao.
\newblock Classification with noisy labels by importance reweighting.
\newblock \emph{IEEE TPAMI}, 38\penalty0 (3):\penalty0 447--461, 2015.

\bibitem[Lu et~al.(2019)Lu, Shin, Su, and Karniadakis]{lu2019dying}
L.~Lu, Y.~Shin, Y.~Su, and G.~E. Karniadakis.
\newblock Dying {ReLU} and initialization: Theory and numerical examples.
\newblock \emph{arXiv:1903.06733}, 2019.

\bibitem[Ma et~al.(2018)Ma, Wang, Houle, Zhou, Erfani, Xia, Wijewickrema, and
  Bailey]{pmlr-v80-ma18d}
X.~Ma, Y.~Wang, M.~E. Houle, S.~Zhou, S.~Erfani, S.~Xia, S.~Wijewickrema, and
  J.~Bailey.
\newblock Dimensionality-driven learning with noisy labels.
\newblock In \emph{ICML}, 2018.

\bibitem[Maas et~al.(2013)Maas, Hannun, and Ng]{maas2013rectifier}
A.~L. Maas, A.~Y. Hannun, and A.~Y. Ng.
\newblock Rectifier nonlinearities improve neural network acoustic models.
\newblock In \emph{{ICML}}, 2013.

\bibitem[Montavon et~al.(2011)Montavon, Braun, and
  M{\"u}ller]{montavon2011kernel}
G.~Montavon, M.~L. Braun, and K.-R. M{\"u}ller.
\newblock Kernel analysis of deep networks.
\newblock \emph{JMLR}, 12\penalty0 (78):\penalty0 2563--2581, 2011.

\bibitem[Natarajan et~al.(2013)Natarajan, Dhillon, Ravikumar, and
  Tewari]{natarajan2013learning}
N.~Natarajan, I.~S. Dhillon, P.~K. Ravikumar, and A.~Tewari.
\newblock Learning with noisy labels.
\newblock In \emph{NeurIPS}, 2013.

\bibitem[Neyshabur et~al.(2015)Neyshabur, Tomioka, and
  Srebro]{neyshabur2015norm}
B.~Neyshabur, R.~Tomioka, and N.~Srebro.
\newblock Norm-based capacity control in neural networks.
\newblock In \emph{CoLT}, 2015.

\bibitem[Neyshabur et~al.(2017)Neyshabur, Bhojanapalli, McAllester, and
  Srebro]{neyshabur2017exploring}
B.~Neyshabur, S.~Bhojanapalli, D.~McAllester, and N.~Srebro.
\newblock Exploring generalization in deep learning.
\newblock In \emph{NeurIPS}, 2017.

\bibitem[Novak et~al.(2018)Novak, Bahri, Abolafia, Pennington, and
  Sohl-dickstein]{novak2018sensitivity}
R.~Novak, Y.~Bahri, D.~Abolafia, J.~Pennington, and J.~Sohl-dickstein.
\newblock Sensitivity and generalization in neural networks: an empirical
  study.
\newblock In \emph{{ICLR}}, 2018.

\bibitem[Pondenkandath et~al.(2018)Pondenkandath, Alberti, Puran, Ingold, and
  Liwicki]{pondenkandath2018leveraging}
V.~Pondenkandath, M.~Alberti, S.~Puran, R.~Ingold, and M.~Liwicki.
\newblock Leveraging random label memorization for unsupervised pre-training.
\newblock \emph{arXiv:1811.01640}, 2018.

\bibitem[Raghu et~al.(2019)Raghu, Zhang, Kleinberg, and Bengio]{Raghu19}
M.~Raghu, C.~Zhang, J.~Kleinberg, and S.~Bengio.
\newblock Transfusion: Understanding transfer learning for medical imaging.
\newblock In \emph{NeurIPS}, 2019.

\bibitem[{Ren} et~al.(2016){Ren}, {Guo}, {He}, {Xu}, {Di}, and {Li}]{Ren16}
X.-D. {Ren}, H.-N. {Guo}, G.-C. {He}, X.~{Xu}, C.~{Di}, and S.-H. {Li}.
\newblock Convolutional neural network based on principal component analysis
  initialization for image classification.
\newblock In \emph{IEEE DSC}, 2016.

\bibitem[Saxe et~al.(2014)Saxe, McClelland, and Ganguli]{saxe2014}
A.~Saxe, J.~McClelland, and S.~Ganguli.
\newblock Exact solutions to the nonlinear dynamics of learning in deep linear
  neural networks.
\newblock In \emph{ICLR}, 2014.

\bibitem[Simonyan and Zisserman(2014)]{simonyan2014very}
K.~Simonyan and A.~Zisserman.
\newblock Very deep convolutional networks for large-scale image recognition.
\newblock \emph{arXiv:1409.1556}, 2014.

\bibitem[Sirignano and Spiliopoulos(2018)]{sirignano2018}
J.~Sirignano and K.~Spiliopoulos.
\newblock Mean field analysis of neural networks: A central limit theorem.
\newblock \emph{arXiv:1808.09372}, 2018.

\bibitem[Sukhbaatar and Fergus(2014)]{sukhbaatar2014learning}
S.~Sukhbaatar and R.~Fergus.
\newblock Learning from noisy labels with deep neural networks.
\newblock \emph{arXiv:1406.2080}, 2014.

\bibitem[Wagner et~al.(2013)Wagner, Thom, Schweiger, Palm, and
  Rothermel]{Wagner13}
R.~Wagner, M.~Thom, R.~Schweiger, G.~Palm, and A.~Rothermel.
\newblock Learning convolutional neural networks from few samples.
\newblock In \emph{IJCNN}, 2013.

\bibitem[Wasserman(2004)]{wasserman2004all}
L.~Wasserman.
\newblock \emph{All of statistics: a concise course in statistical inference}.
\newblock Springer, 2004.

\bibitem[Wu and He(2018)]{wu2018}
Y.~Wu and K.~He.
\newblock Group normalization.
\newblock In \emph{ECCV}, 2018.

\bibitem[Ye et~al.(2020)Ye, Evanusa, He, Mitrokhin, Goldstein, Yorke,
  Fermuller, and Aloimonos]{Ye2020deconv}
C.~Ye, M.~Evanusa, H.~He, A.~Mitrokhin, T.~Goldstein, J.~A. Yorke,
  C.~Fermuller, and Y.~Aloimonos.
\newblock Network deconvolution.
\newblock In \emph{ICLR}, 2020.

\bibitem[Yosinski et~al.(2014)Yosinski, Clune, Bengio, and
  Lipson]{yosinski2014transferable}
J.~Yosinski, J.~Clune, Y.~Bengio, and H.~Lipson.
\newblock How transferable are features in deep neural networks?
\newblock In \emph{NeurIPS}, 2014.

\bibitem[Zagoruyko and Komodakis(2016)]{zagor2016}
S.~Zagoruyko and N.~Komodakis.
\newblock Wide residual networks.
\newblock In \emph{BMVC}, 2016.

\bibitem[Zhang et~al.(2017)Zhang, Bengio, Hardt, Recht, and Vinyals]{Zhang17}
C.~Zhang, S.~Bengio, M.~Hardt, B.~Recht, and O.~Vinyals.
\newblock Understanding deep learning requires rethinking generalization.
\newblock In \emph{ICLR}, 2017.

\bibitem[Zhang and Sabuncu(2018)]{zhang2018generalized}
Z.~Zhang and M.~Sabuncu.
\newblock Generalized cross entropy loss for training deep neural networks with
  noisy labels.
\newblock In \emph{NeurIPS}, 2018.

\end{thebibliography}

%==========================================================================
\newpage
\appendix

%%%%%%%%%%%
% In the appendix, we may have lots of floats.
% To avoid them being spread out too much, let's
% be more allowing in the appendix.
% https://robjhyndman.com/hyndsight/latex-floats/
\setcounter{topnumber}{10} %5
\setcounter{bottomnumber}{10} %5
\setcounter{totalnumber}{20} %10
\renewcommand{\topfraction}{1.0} %0.9
\renewcommand{\bottomfraction}{1.0} %0.9
\renewcommand{\textfraction}{0.0} %0.1
\renewcommand{\floatpagefraction}{1.0} %0.9
%%%%%%%%%%%

\section{Experimental details}
\label{sec:Experiments}
%This section describes all the details of experiments reported in the main text.

\subsection{Upstream and downstream datasets}
We use two datasets: CIFAR10 \cite{Krizhevsky09learningmultiple} and
ImageNet ILSVRC-2012 \cite{deng2009imagenet}.

For each run of a transfer experiment we 
\begin{enumerate}
    \item
    randomly sample \texttt{num\_examples\_upstream} examples uniformly from the training split of the dataset to form the (upstream) training set for pre-training;
    \item
    randomly sample \texttt{num\_examples\_downstream}
    examples from the remainder of the training split of the dataset. 
    They form the (downstream) training set for fine-tuning.
    This guarantees that the upstream and downstream examples do not intersect;
    \item
    upstream training examples are labelled with \texttt{num\_classes\_upstream} classes randomly and uniformly;
    \item 
    when fine-tuning on random labels, we randomly and uniformly label the downstream examples with \texttt{num\_classes\_downstream} classes. 
\end{enumerate}

\subsection{Neural network architectures}
Throughout the main text we use three different architectures: a simple convolutional architecture ``Simple CNN'', VGG16, and ResNet18.
The Simple CNN can be further configured by specifying the number of convolutional layers and units in the dense layer.
We use the same architectures for
both CIFAR10 and ImageNet datasets, by only adjusting the number of outputs (logits).

{\bf Simple CNN}
is a convolutional architecture consisting of:
\begin{enumerate}
    \item \texttt{num\_conv\_layers} convolutional layers with $3\times 3$ filters, each followed by the ReLU activation. Each convolutional layer contains \texttt{num\_filters} filters (with biases) that are applied with 
    %\texttt{VALID} padding and 
    stride 1.
    \item
    The outputs of the final convolutional layer are flattened and passed to a
    dense layer (with biases) of \texttt{num\_units} units, followed by the ReLU activation.
    \item
    The classifier head, i.e.\ a dense layer with \texttt{num\_output} units (logits).
\end{enumerate}

The {\bf VGG16} architecture that we use is ``Configuration D'' from Table 1 of \cite{simonyan2014very} with two dense layers (``FC-4096'') removed:
\begin{Verbatim}[fontsize=\small]
       conv0:   64 filters 
       conv1:   64 filters 
       maxpool 
       conv2:  128 filters 
       conv3:  128 filters 
       maxpool
       conv4:  256 filters 
       conv5:  256 filters 
       conv6:  256 filters 
       maxpool
       conv7:  512 filters 
       conv8:  512 filters 
       conv9:  512 filters 
       maxpool
       conv10: 512 filters 
       conv11: 512 filters 
       conv12: 512 filters 
       maxpool
       dense layer with num_outputs units (classifier head)
\end{Verbatim}
All convolutional filters (with biases) are of size $3\times3$ and applied with \texttt{SAME} padding and stride~1.
ReLU activation is applied after every convolutional layer.
Max-pooling is performed over a $2\times2$ pixel window,
with {\sc SAME} padding and stride 2.

{\bf ResNet18} We use vanilla ResNet-v2 architecture \cite{He2016}
with batch normalization \cite{ioffe2015} replaced by the group normalization \cite{wu2018}.

\subsection{Training}
All biases in the models are initialized with zeros, while all the other parameters (convolutional filters and weight matrices of the dense layers) are initialized using He normal algorithm~\cite{he2015} with \texttt{init\_scale} scaling.

Model outputs (logits) are passed through the softmax function and we minimize the cross-entropy (i.e. the negative log-likelihood) during training.
We use SGD with momentum 0.9 and batch size 256 to train our models.
We start training with the specified \texttt{learning\_rate} and divide it by 3 two times during the training: after $1/3\times$\texttt{total\_steps} and $2/3\times$\texttt{total\_steps} steps.

When training the models we report the accuracy on \emph{the entire training set}, not on the mini-batch.

We do not use data augmentation in our experiments.
We do not use weight decay, dropout, or any other regularization.
For CIFAR10 we scale the inputs to the $[-1, 1]$ range.
For ImageNet we resize the examples, take a $224\times224$ central crop, and scale inputs to the $[-1, 1]$ range.

\subsection{Transferring the model from the upstream to the downstream task}
After the upstream pre-training finishes, we replace the classifier head with a freshly initialized one.

Unless otherwise stated, for Simple CNN and VGG16 architectures we also re-scale the model parameters after the pre-training.
We store the per-layer parameter $\ell_2$ norms at the initialization and re-scale each layer of the trained model to match the stored $\ell_2$ norms.
Re-scaling does not affect the classifier accuracy and predictions, since it reduces to multiplying all parameters of a given layer by a strictly positive constant.
However, re-scaling changes the logits and, therefore, the cross-entropy loss.
It is not immediately obvious how to re-scale the residual architecture without changing its predictions so we decided not to re-scale ResNet18.

When fine-tuning downstream we use the same initial learning rate and schedule as in upstream pre-training.

\subsection{Figure \ref{fig:image1}: Positive and negative transfer with VGG16 on CIFAR10}

Experiments 1 and 2 use:
\begin{verbatim}
init_scale = 0.526 
learning_rate = 0.01
num_classes_upstream = 5
num_examples_upstream = 20000
epochs_upstream = 120
\end{verbatim}

Experiments 3 and 4 use:
\begin{verbatim}
init_scale = 0.612
learning_rate = 0.009
num_classes_upstream = 50
num_examples_upstream = 20000
epochs_upstream = 80
\end{verbatim}

Error bars correspond to $\pm 1$\,std.\:over 12 independent runs.

\subsection{Figure \ref{fig:Experiment_alignment}: Misalignment plots}
To measure misalignment, we use the misalignment score in Definition \ref{def:alignment} using the closed-from expression provided in Section \ref{subsec:Gaussian}. First, the eigenvectors of the $3\times 3$ patches of images are computed, which reside in a space of dimension 27 due to the three input channels. Let $v_i$ be the eigenvectors of data, which have distinct eigenvalues almost surely, the alignment score is then:
\begin{equation*}
    \sum_{i=1}^d \sqrt{(v_i^T\Sigma_wv_i)\cdot (v_i^T\Sigma_w^{-1}v_i)} - 1,
\end{equation*}
where the summation is taken over all 27 eigenvectors. To gain an intuition behind this formula, note that if $v_i$ is itself an eigenvector of $\Sigma_w$, then $(v_i^T\Sigma_wv_i)\cdot (v_i^T\Sigma_w^{-1}v_i)=1$. The covariance of weights $\Sigma_w$ is estimated in a single run by computing the covariance of the filters in the first layer. 

To ensure that alignment depends indeed on the eigenvectors of data, we also compute alignment in which the set of data eigenvectors $\{v_i\}$ is replaced by some random orthonormal basis of the plane. 
The network architecture is a 2-layer CNN: convolutional followed by a fully-connected layer. The experiment setup used to produce Figure \ref{fig:Experiment_alignment} is:
\begin{verbatim}
num_conv_layers = 1
num_filters = 256
num_units = 64
learning_rate = 0.01
num_classes = 10
num_examples = 50000
epochs_upstream = 40
\end{verbatim}

\subsection{Figure \ref{fig:filters}: ResNet convolutional filter alignment}
\label{sec:fig3settings}
Figure~\ref{fig:filters} was produced from 70 runs of a wide ResNet~\cite{zagor2016} on CIFAR10 with random labels. We used 4 blocks per group and a width factor of 4. As mentioned in the main text, we replaced the initial $3\times 3$ convolution with a $5\times 5$ convolution for better visualization. Training was run with batch normalization \cite{ioffe2015} and the following parameters:
\begin{verbatim}
init_scale = 0.01
learning_rate = 0.005
num_examples = 50000
epochs = 1800
\end{verbatim}
%LR_SCHED_STEPS='[[600, 0.001], [900, 0.0005], [1200, 0.0002]]'
%%% not exactly the default, but maybe close enough?
%'momentum', default=0.9,
%'batch_size', default=256,

\subsection{Figure \ref{fig:Experiment2_f_sigma}: Plots of the function \texorpdfstring{$f(\sigma)$}{f(sigma)}}
These figures were generated using the procedure described in Section \ref{subsect::mapping_eigenvalues}. For every eigenvalue $\sigma_i^2$ of the data with eigenvector $v_i$, the corresponding $\tau_i^2$ is computed using Eq. (\ref{eq:general_tau}). The pairs $(\sigma_i^2,\tau_i^2)$ define a mapping from $\mathbb{R}^+$ to $\mathbb{R}^+$, which is plotted in Figure \ref{fig:Experiment2_f_sigma}. 

The network architecture is a 2-layer CNN: convolutional followed by a fully-connected layer. The experiment setup used to produce Figure \ref{fig:Experiment_alignment} is:
\begin{verbatim}
num_conv_layers = 1
num_filters = 256
num_units = 64
learning_rate = 0.01
num_classes = 10
num_examples = 50000
epochs_upstream = 40
\end{verbatim}

\subsection{Figure \ref{fig:PCA_init}: Explaining the positive transfer}
Network: {SimpleCNN}
\begin{verbatim}num_conv_layers = 1
num_filters = 64
num_units = 256
\end{verbatim}

\pagebreak
Training:
\begin{verbatim}
num_examples_upstream = 10000
num_classes_upstream = 10
num_examples_downstream = 10000
num_classes_downstream = 10
init_scaling = 1.0
learning_rate = 0.0005
total_steps = 10000
\end{verbatim}

For all curves with the exception of the ``no rescaling'' curve, each layer is scaled down by a factor 
between pre-training and training to match the $\ell_2$ norm of the weights at initialization.

For all runs, the head layer is re-initialized after pre-training.
For ``pretrained conv'' and ``pretrained conv, no bias'' also the fully connected layer is re-initialized.
For ``pretrained conv, no bias'' also the bias of the convolutional layer is reset to zero. This is the
fairest comparison to ``covariance'': For ``covariance'' the filters in the convolutional layer
are random samples from a Gaussian distribution with mean 0 and the covariance obtained
from training on random labels. The bias is set to zero and the dense and head layers are
initialized randomly.

Each line is the average of 12 runs that differ in the random initializations.
In the left and center image, the ``covariance'', ``pretrained'', and ``from scratch'' 
curves are surrounded by a colored area indicating $\pm 1$ standard deviations.
The ``no rescaling'' curve is without this area, it would touch the ``pretrained'' curve below it.
In the right image no error bounds are plotted since the curves are too close together.

\subsection{Figure \ref{fig:TransferKLayers}: Transferring more layers improves the downstream performance}

\begin{figure}[tbp]
  \centering \sffamily \footnotesize
  \begin{minipage}{0.32\textwidth}
    \centering 
    Simple CNN    
  \end{minipage}
  \begin{minipage}{0.32\textwidth}
    \centering
    VGG16
  \end{minipage}
  \begin{minipage}{0.32\textwidth}
    \centering
    ResNet18
  \end{minipage}\\
  \includegraphics[width=0.32\textwidth]{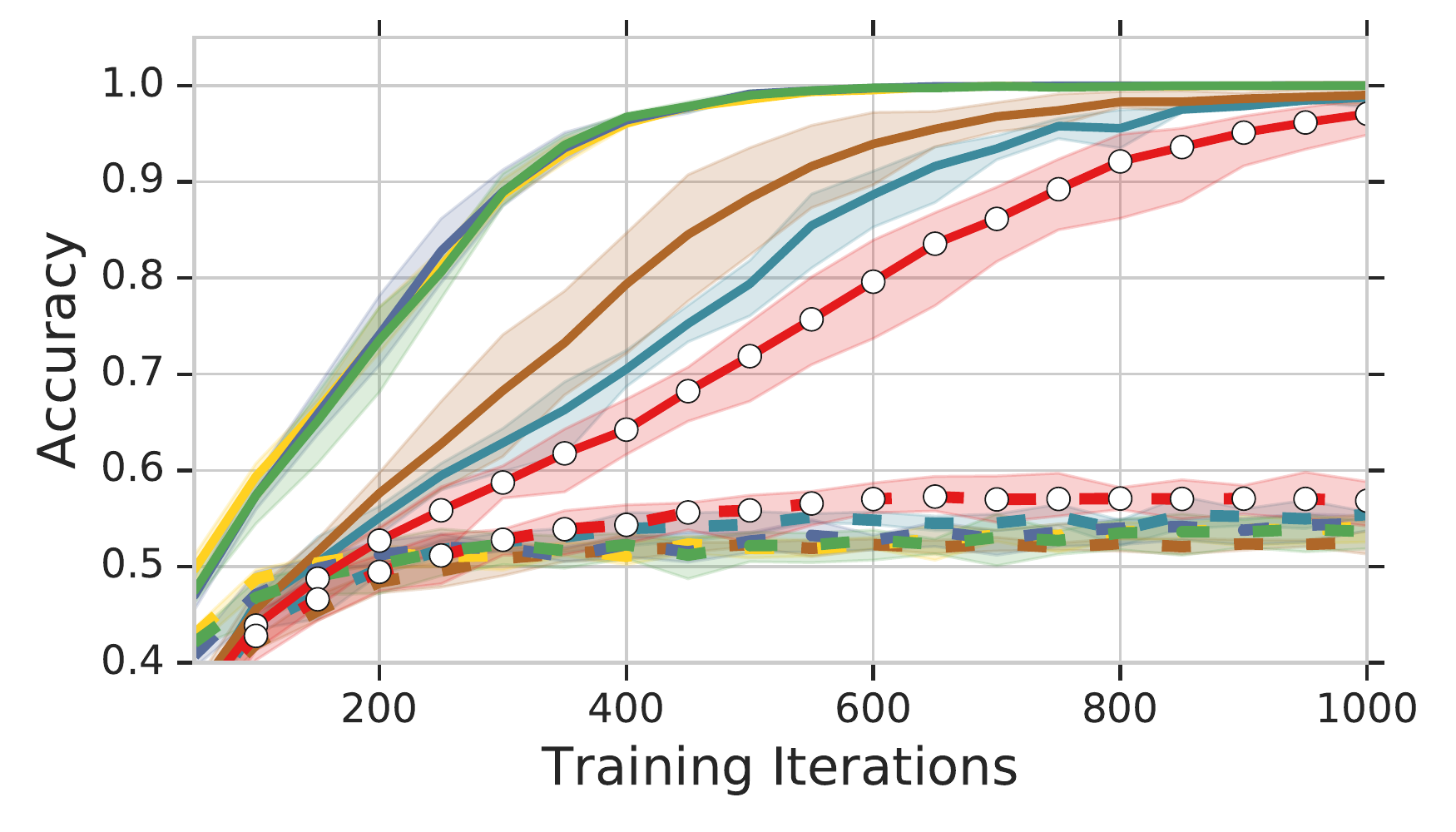}\hspace{.32cm}
  \includegraphics[width=0.32\textwidth]{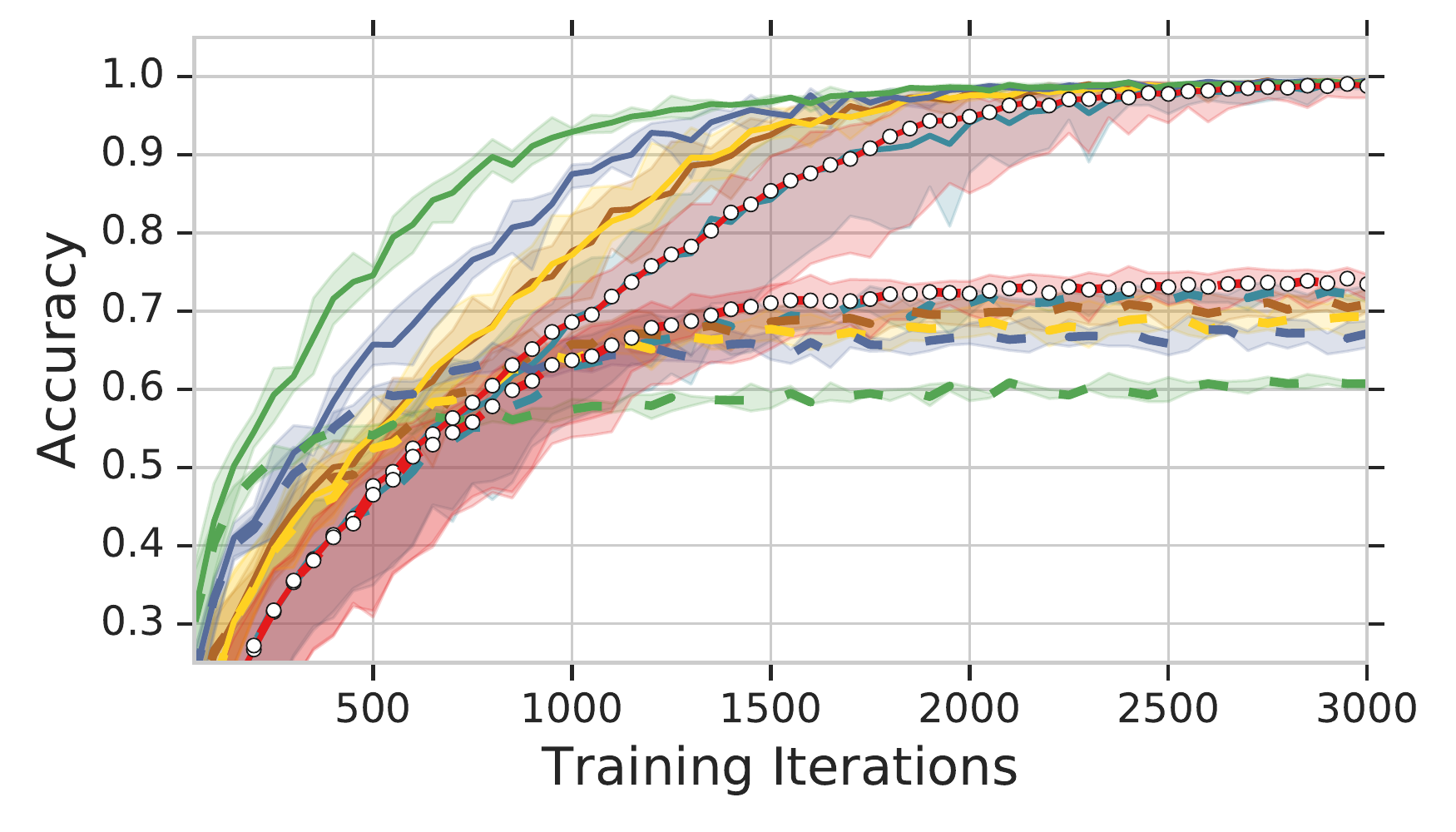}
  \includegraphics[width=0.32\textwidth]{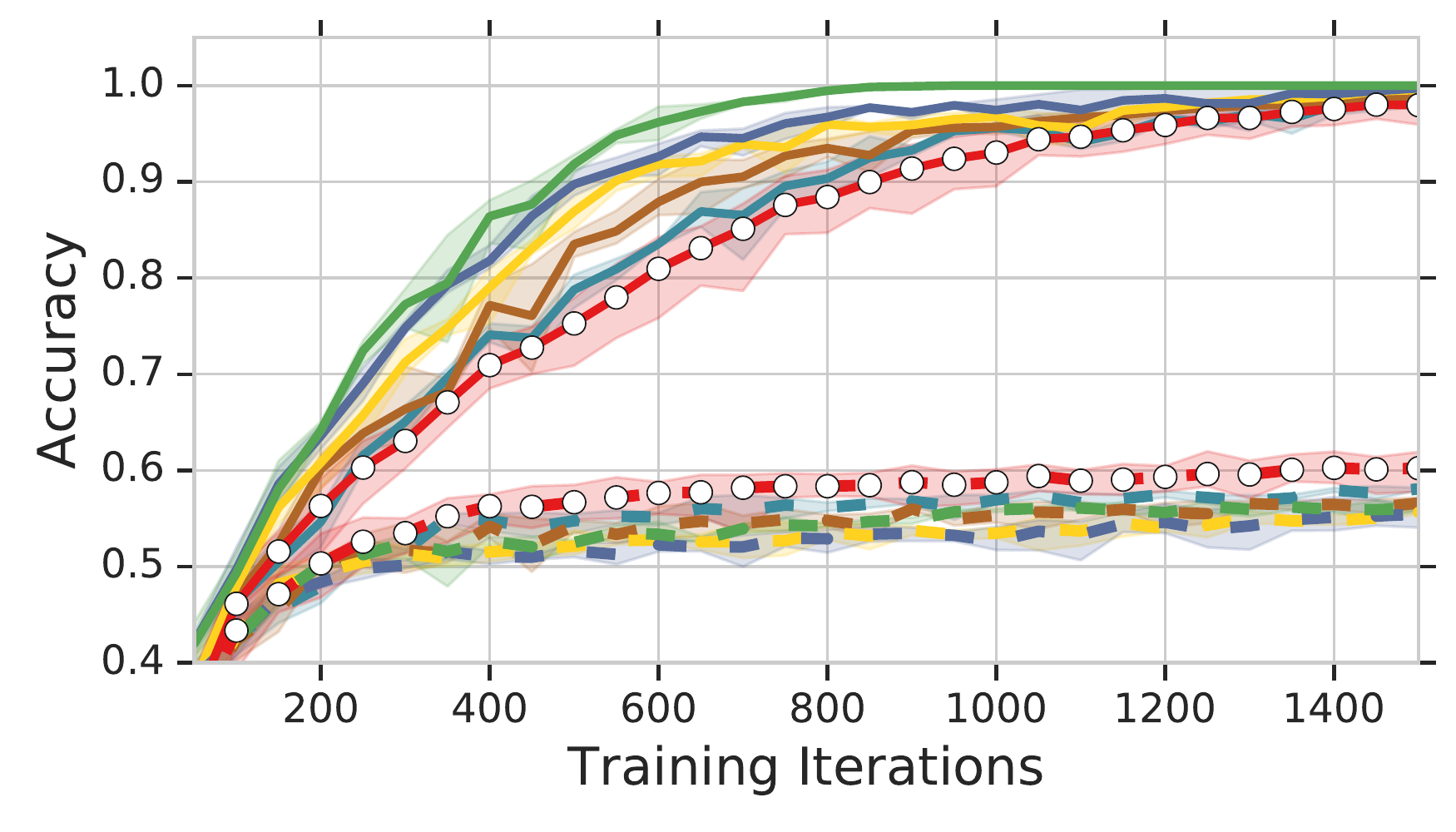}\\
  \includegraphics[width=0.32\textwidth]{images/cifar10_simplecnn_random_k_layers.pdf}
  \includegraphics[width=0.32\textwidth]{images/cifar10_vgg16_random_k_layers.pdf}
  \includegraphics[width=0.32\textwidth]{images/cifar10_resnet18_random_k_layers.pdf}
  \caption{Transferring more layers improves the downstream performance.
  Simple CNN architecture with 3 conv.\:layers ({\sc left}),
  VGG16 ({\sc center}), and
  ResNet18 ({\sc right})
  pre-trained on CIFAR10 examples with random labels and subsequently
  fined-tuned on 25k fresh CIFAR10 examples with real labels ({\sc top}) and 10 random labels ({\sc bottom}).
  Lines with circular markers correspond to training from scratch.
  %Dotted lines (top) depict hold out test accuracy.
  Error bars correspond to min/max over 3 independent runs.
  % NB: we removed one failed run for the Simple CNN.
  }
  \label{fig:TransferKLayersSupplementary}
\end{figure}

Figure \ref{fig:TransferKLayersSupplementary} is the extended version of Figure \ref{fig:TransferKLayers} that includes the models fine-tuned on real labels (apart from the models fine-tuned on the random labels).

The left column (Simple CNN architecture) uses:
\begin{verbatim}
init_scale = 0.518
learning_rate = 0.01
num_classes_upstream = 25
num_examples_upstream = 15000
epochs_upstream = 40
num_conv_layers = 3
num_filters = 16
num_units = 1024
\end{verbatim}

The center column (VGG16) corresponds to the same setup as in Experiments 1 and 2 in Figure \ref{fig:image1}. It~uses:
\begin{verbatim}
init_scale = 0.526
learning_rate = 0.01
num_classes_upstream = 5
num_examples_upstream = 20000
epochs_upstream = 120
\end{verbatim}

The right column (ResNet18) uses:
\begin{verbatim}
init_scale = 0.671
learning_rate = 0.01
num_classes_upstream = 50
num_examples_upstream = 25000
epochs_upstream = 80
\end{verbatim}

\subsection{Figure \ref{fig:VGGNeuronActivation}: Neuron activation plots}\label{subsect::nuron_act_plots_vgg}

\begin{figure}[tbp]
  \centering
  \includegraphics[width=.46\textwidth]{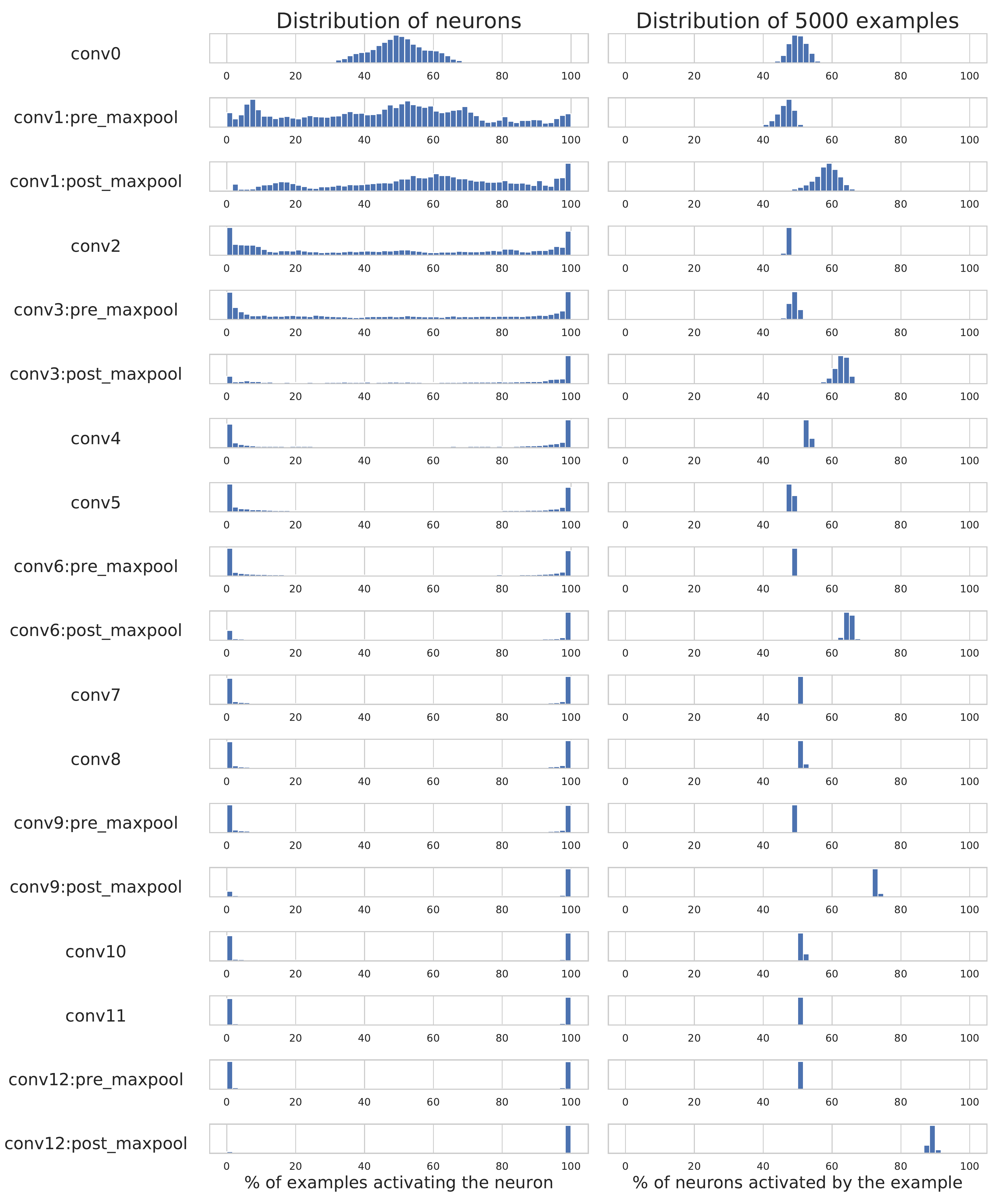}\hfill
  \includegraphics[width=.46\textwidth]{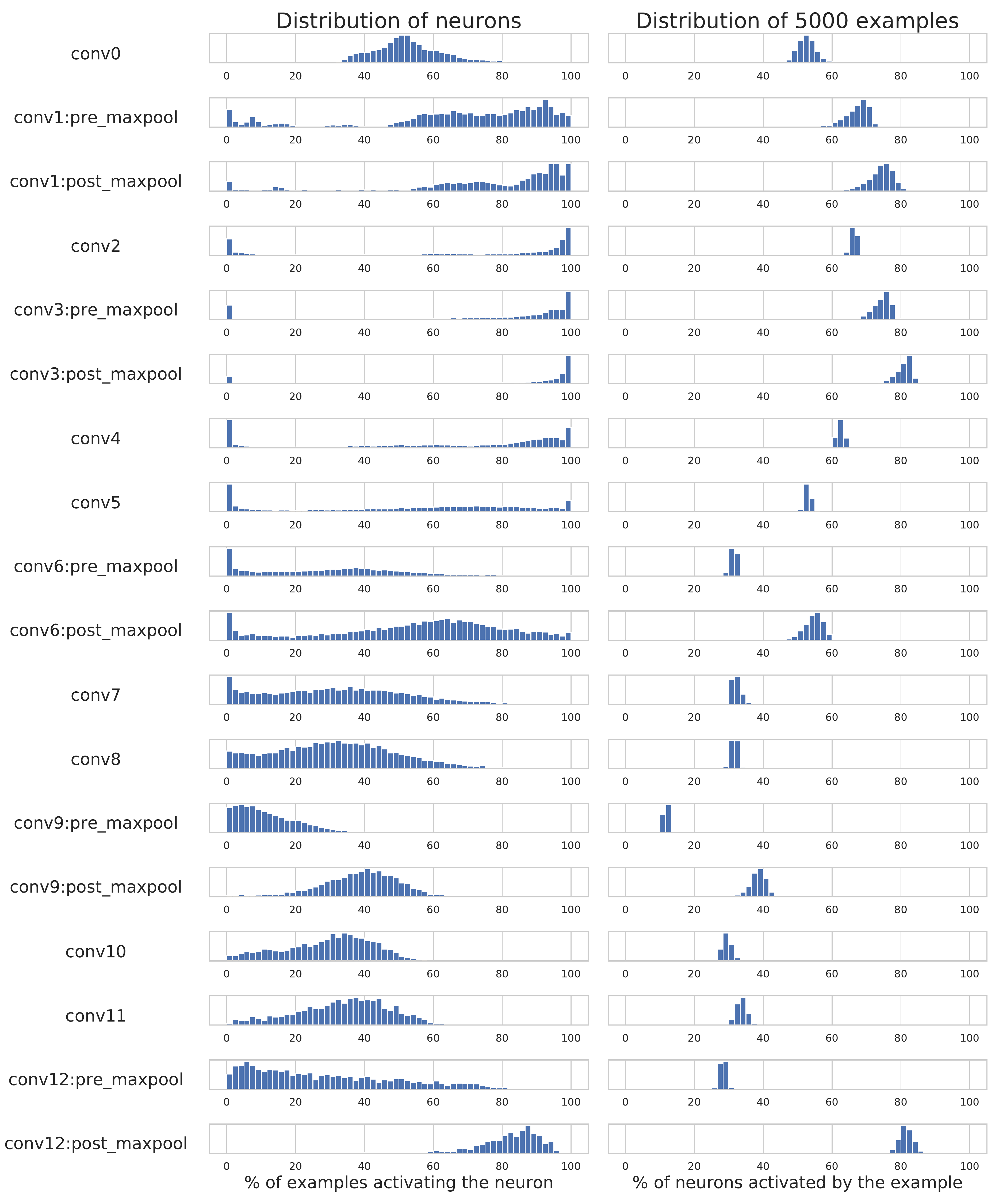}\\[1ex]
  \includegraphics[width=.46\textwidth]{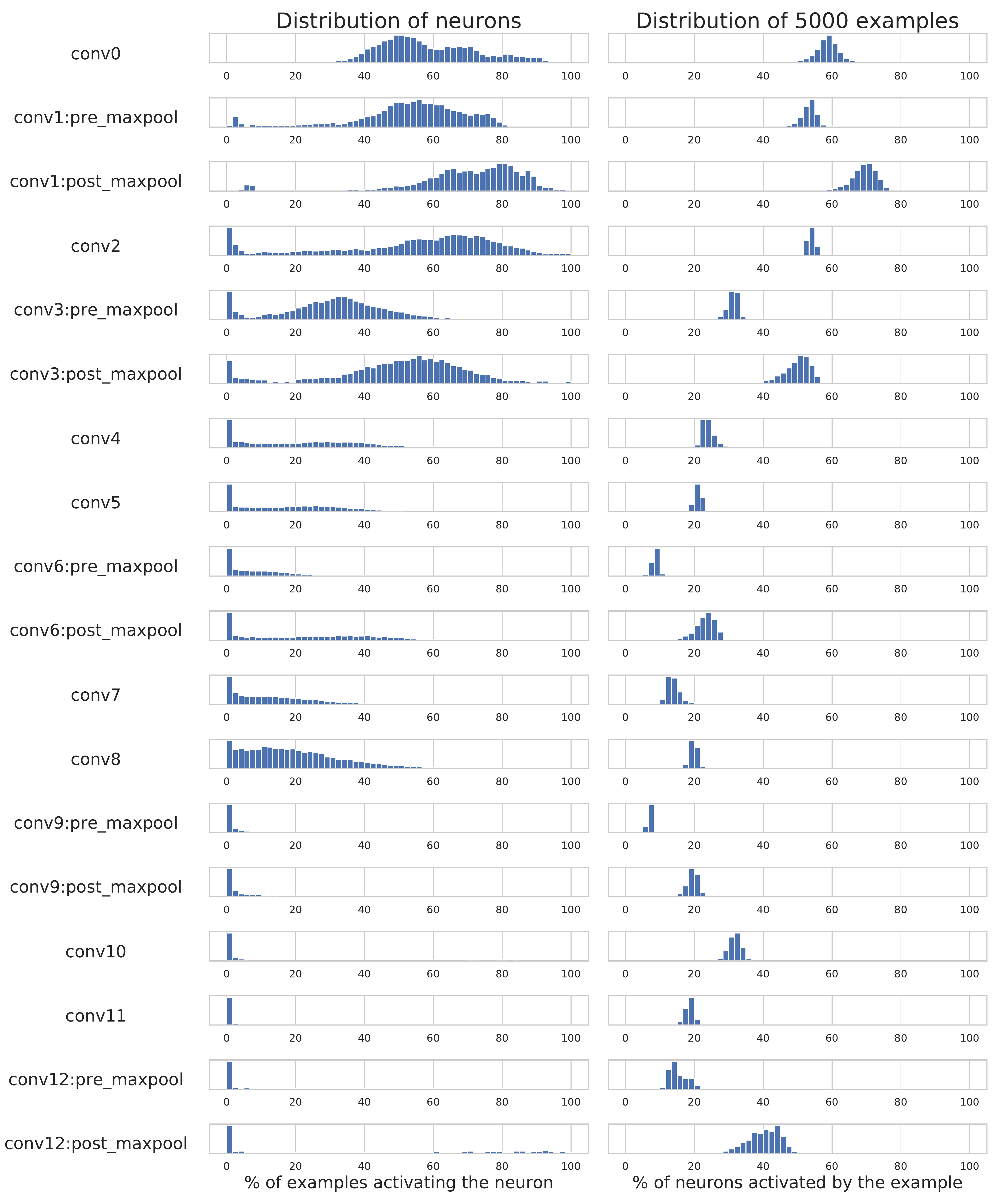}\hfill
  \includegraphics[width=.46\textwidth]{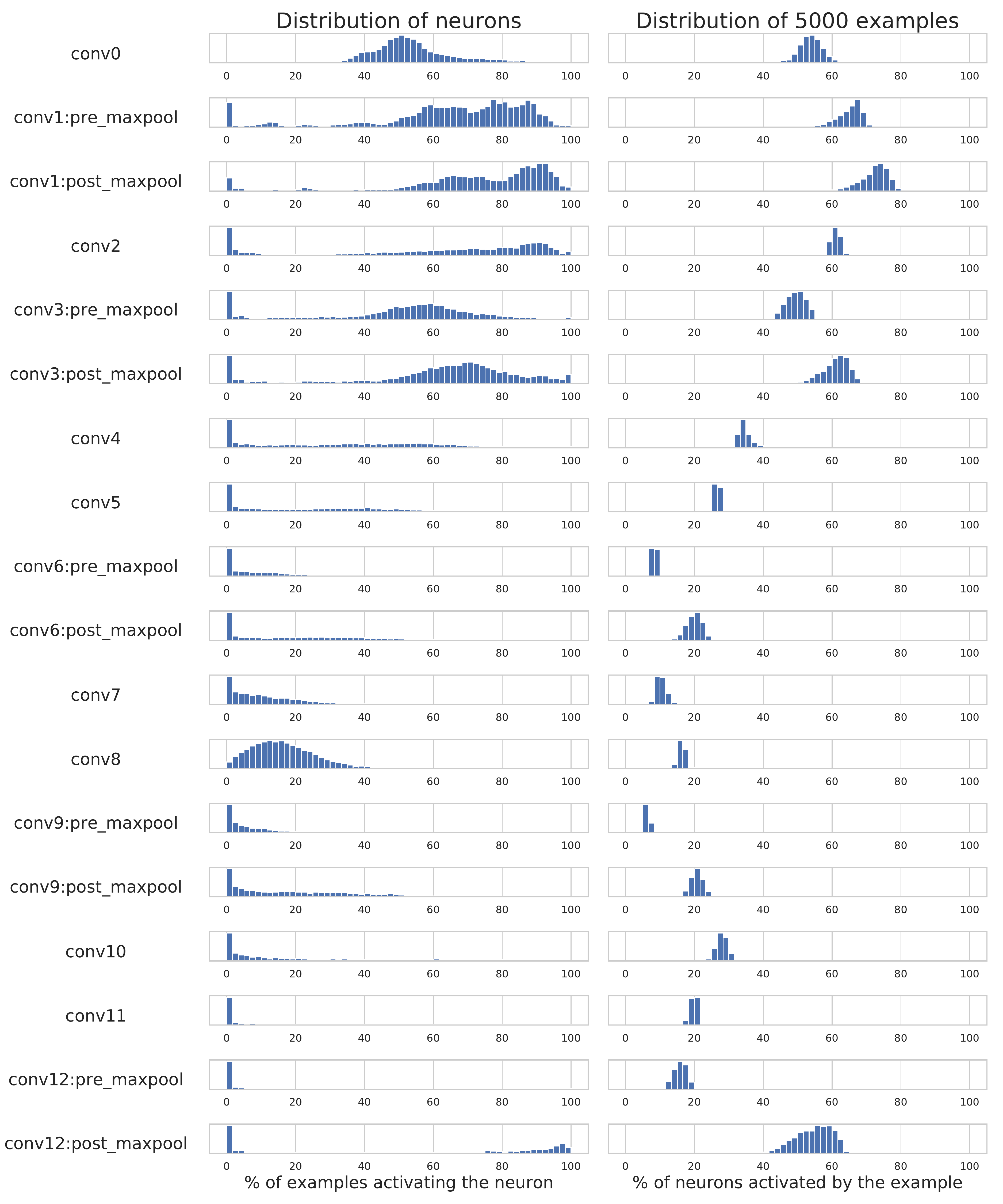}
  \caption{
    Neuron activations in case of the negative transfer.
    The VGG16 model
    pre-trained for 1 ({\sc top-left}) and 6240 ({\sc top-right}) training iterations on 20k examples from CIFAR10 with 50 random classes and
    subsequently fine-tuned for 200 epochs on fresh 25k examples from CIFAR10 with real labels ({\sc bottom-left}) and 10 random labels ({\sc bottom-right}).
    In each subplot, the left column depicts distributions of neurons over the fraction of input examples that activate them.
    The right column depicts distribution of the input examples over the fraction of neurons that are activated by them.
    The 5k input examples were taken from the holdout test split of CIFAR10.
    }
  \label{fig:VGGNeuronActivationSupplementary}
\end{figure}

\begin{figure}[tbp]
  \centering
  \includegraphics[width=.46\textwidth]{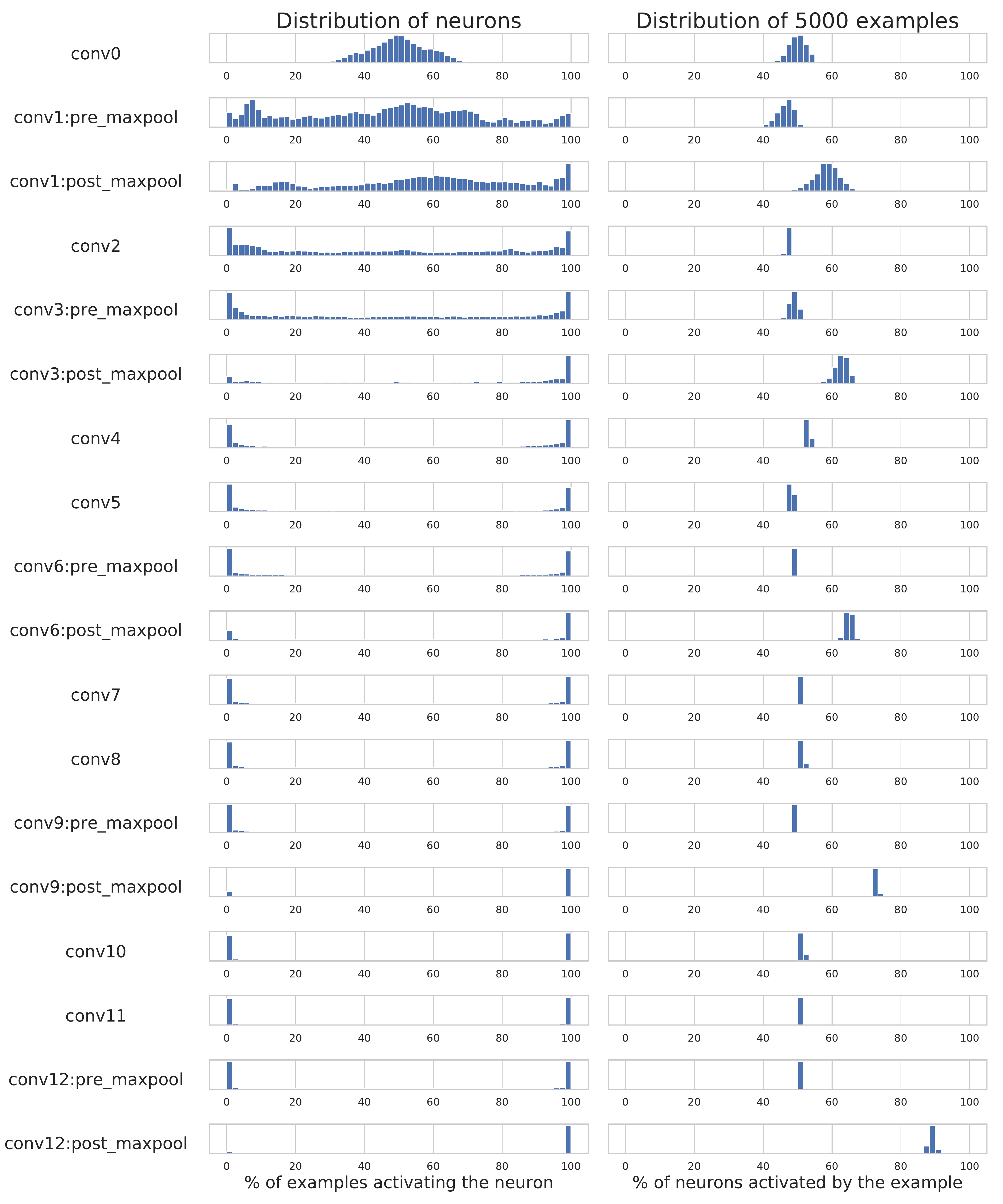}\hfill
  \includegraphics[width=.46\textwidth]{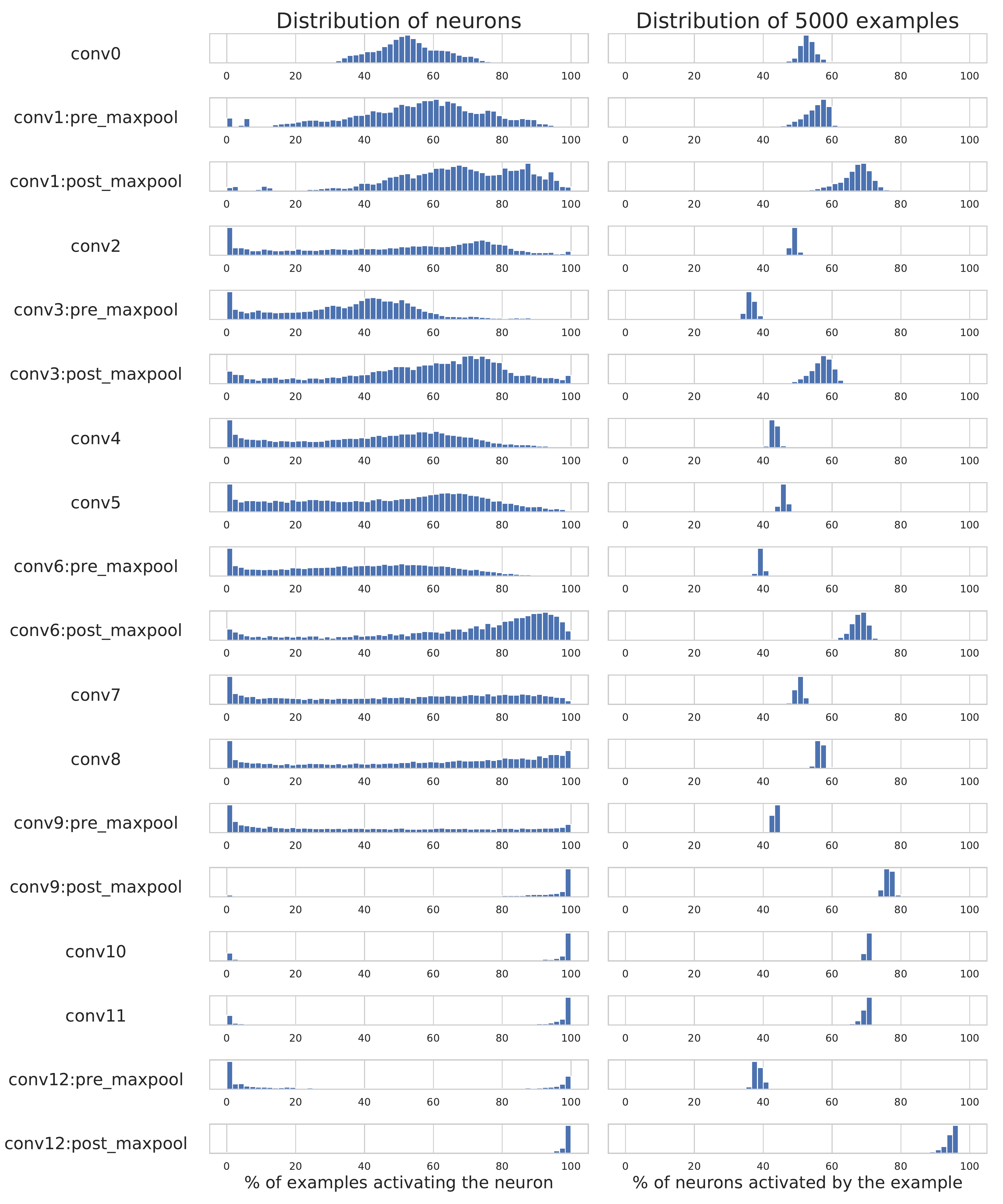}\\[1ex]
  \includegraphics[width=.46\textwidth]{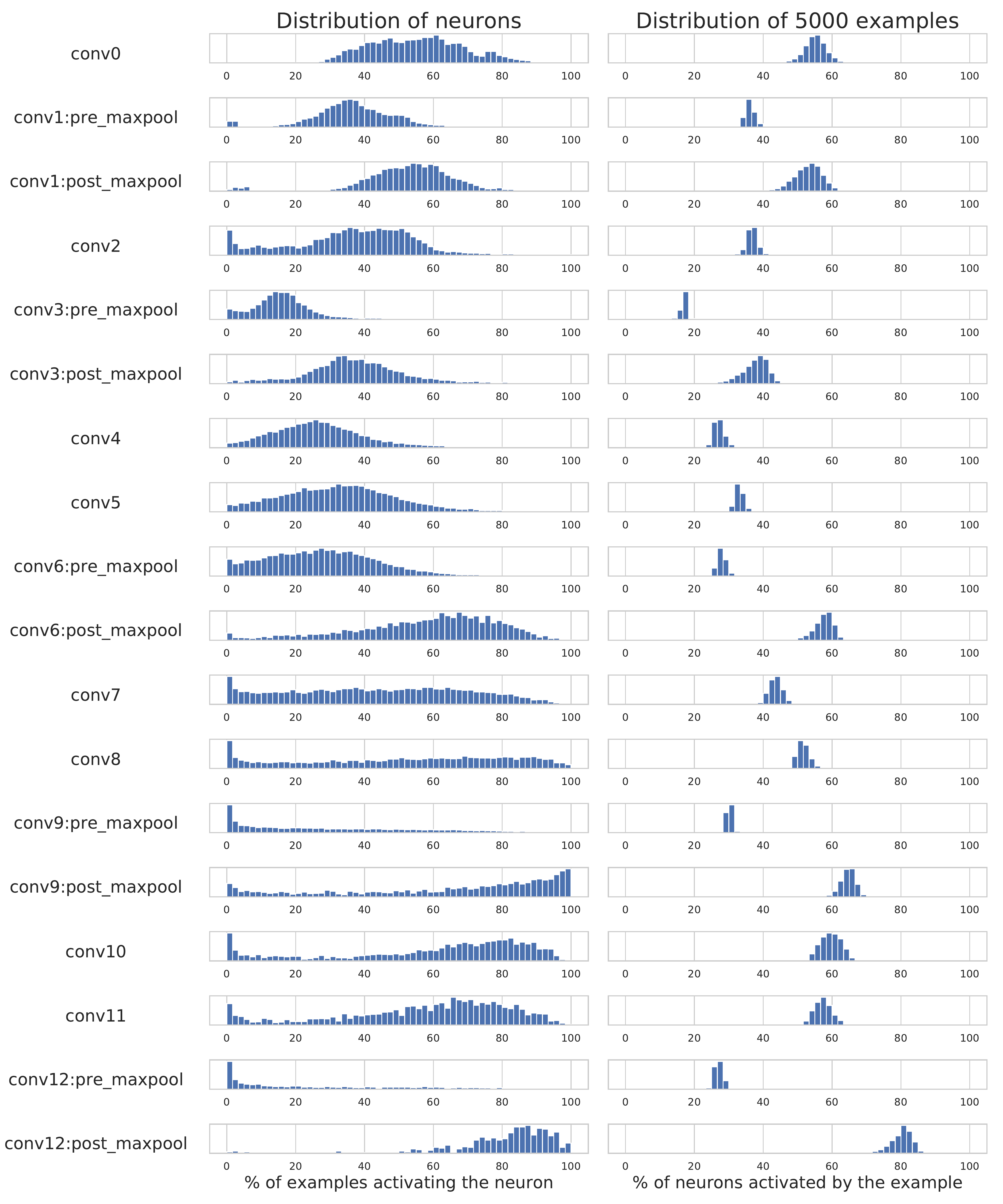}\hfill
  \includegraphics[width=.46\textwidth]{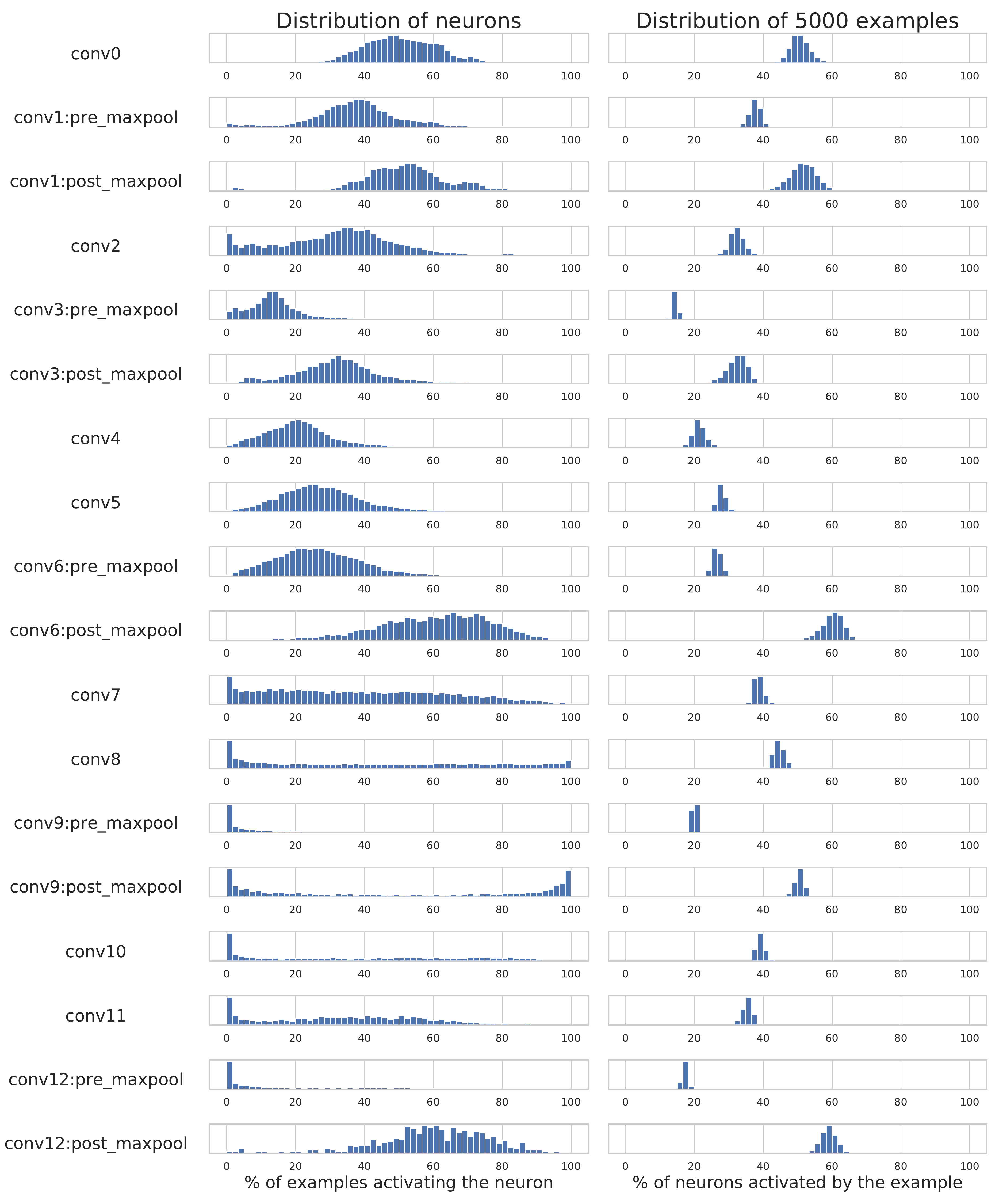}
  \caption{
    Neuron activations in case of the positive transfer.
    The VGG16 model
    pre-trained for 1 ({\sc top-left}) and 9360 ({\sc top-right}) training iterations on 20k examples from CIFAR10 with 5 random classes and
    subsequently fine-tuned for 200 epochs on fresh 25k examples from CIFAR10 with real labels ({\sc bottom-left}) and 10 random labels ({\sc bottom-right}).
    In each subplot, the left column depicts distributions of neurons over the fraction of input examples that activate them.
    The right column depicts distribution of the input examples over the fraction of neurons that are activated by them.
    The 5k input examples were taken from the holdout test split of CIFAR10.
    }
  \label{fig:VGGNeuronActivationSupplementaryPositive}
\end{figure}

Figures \ref{fig:VGGNeuronActivationSupplementary} and \ref{fig:VGGNeuronActivationSupplementaryPositive} are the extended versions of Figure~\ref{fig:VGGNeuronActivation}.
They include activation plots for \emph{all intermediate layers} of VGG16 models
(i) at initialization, 
(ii) in the end of the pre-training, 
(iii) in the end of fine-tuning on real labels,
(iv) in the end of fine-tuning on 10 random labels.
Figures \ref{fig:VGGNeuronActivationSupplementary} and \ref{fig:VGGNeuronActivationSupplementaryPositive} illustrate the negative and positive transfer examples respectively.

The VGG16 model from Figure \ref{fig:VGGNeuronActivationSupplementary} (top row in Figure \ref{fig:SimpleCNNIncreasingWIdth}) uses the same setup as in Experiments 3 and 4 in Figure \ref{fig:image1}:
\begin{verbatim}
init_scale = 0.612
learning_rate = 0.01
num_classes_upstream = 50
num_examples_upstream = 20000
epochs_upstream = 80
\end{verbatim}

The VGG16 model from Figure \ref{fig:VGGNeuronActivationSupplementaryPositive} (bottom row in Figure \ref{fig:SimpleCNNIncreasingWIdth}) uses the same setup as in Experiments 1 and 2 in Figure \ref{fig:image1}:
\begin{verbatim}
init_scale = 0.526 
learning_rate = 0.01
num_classes_upstream = 5
num_examples_upstream = 20000
epochs_upstream = 120
\end{verbatim}

\subsection{Figure \ref{fig:SimpleCNNIncreasingWIdth}: Increasing the width mitigates the negative transfer}

The models from all three subplots use Simple CNN architecture and share the same parameters:
\begin{verbatim}
init_scale = 1.218
learning_rate = 0.012
num_classes_upstream = 2
num_examples_upstream = 10000
epochs_upstream = 100
num_conv_layers = 2
num_filters = 16
\end{verbatim}
The models from {\sc left}, {\sc center}, and {\sc right} subplots use \texttt{num\_units} of 64, 128, and 1\,024, respectively.
Error bars correspond to max/min over 3 independent runs.

%=========================
\clearpage
\section{Empirical evidence with diverse real-world settings}
%=========================

We argued theoretically that the alignment effect holds under certain idealized conditions (Proposition~\ref{prop:1})
and demonstrated that it entirely explains the positive transfer in a real-world setting (Figure~\ref{fig:PCA_init}).
In this appendix, we confirm that the positive transfer is reproducible and can be frequently observed in common real-world settings with various popular 
network architectures,
datasets, and different
hyperparameters.

{\bf Experimental setup}
We run multiple transfer experiments, repeating each run with several different seeds.
We consider three network architectures: 
(a) a simple convolutional architecture with a configurable number of conv.\:layers, filters, and units in one dense layer,
(b) VGG16 \cite{simonyan2014very} with two final dense layers of width 4096 removed, 
and (c) ResNet-v2 architecture \cite{He2016}.
The \emph{disjoint} upstream and downstream training sets are sampled randomly from one of the two datasets: 
CIFAR10~\cite{Krizhevsky09learningmultiple} or
ImageNet ILSVRC-2012~\cite{deng2009imagenet}. 
We never transfer \emph{between different} datasets. 
The models are pre-trained with random labels and fine-tuned for a fixed number of epochs with either real or random labels using SGD with momentum 0.9. 
We randomly explore various 
initial learning rates,
initialization types and scales,
numbers of random classes and examples upstream,
duration of the pre-training,
and configurations of the Simple CNN architecture.

We collect two sets of experiments for CIFAR10 (Experiments A and B) and two sets for ImageNet (Experiments A and B) reported below.
Each set consists of multiple \emph{groups} of experiments. 
Experiments are gathered in a \emph{group} if they share same 
(1)~architecture, 
(2)~learning rate,
(3)~initialization type and scale,
(4)~number of examples upstream, 
(5)~number of classes upstream, and
(6)~number of epochs upstream. 
For each configuration of these 6~parameters we explore \emph{all possible combinations} of the following choices:
(a)~the random seed,
(b)~pre-train / train from scratch on downstream,
(c)~transfer one layer / all layers [optional],
(d)~train downstream with real / random labels.
Gathering experiments in groups like this allows to compare the downstream performance of the pre-trained models to that of models trained from scratch \emph{with same hyperparameters}.

{\bf Visualizing the experiments}
In order to visualize the experiments we summarize each group with two numbers: one characterizing the downstream performance of the pre-trained models, and one for the models trained from scratch.
In order to capture the speed of training we use the \emph{area under the curve} (AUC) to sketch the training with a single number. 
For instance, the pre-trained model in Experiment 2 of Figure \ref{fig:image1}  (blue line) trains faster than the one trained from scratch (orange line).
Accordingly, the area under the blue curve is larger than the area under the orange curve.
We can now visualize all the groups of experiments on a single scatter plot.
Depending on what exact curves we summarize with AUC, we get three scatter plots: 
(1)~training accuracy when using real labels downstream (areas under the solid lines in Experiments 1 and 3 in Figure \ref{fig:image1}),
(2)~test accuracy when using real labels downstream (areas under the dotted lines in Experiments 1 and 3 in Figure \ref{fig:image1}),
(3)~training accuracy when using random labels downstream (areas under the solid lines in Experiments 2 and 4 in Figure \ref{fig:image1}).

\subsection{CIFAR10, Experiments A}
This set of experiments counts 20 groups per architecture.
We use 2 different random seeds, resulting in 16 experiments per group, or $3 \times 20 \times 16 =960$ trained models in total.

The following parameters are shared between all runs:
\begin{verbatim}
num_examples_downstream = 25000
num_epochs_downstream = 200
init_algorithm = "he"
\end{verbatim}
For each group of experiments we sample the following parameters randomly:
\begin{verbatim}
init_scale = random log_uniform(0.5, 1.35)
learning_rate = random log_uniform(0.008, 0.0125)
num_examples_upstream = random uniform([5000, 10000, 15000, 20000, 25000])
num_classes_upstream = random uniform([2, 5, 10, 25, 50, 100])
num_epochs_upstream = random uniform([20, 40, 80, 100, 120, 150, 200])
\end{verbatim}
For the Simple CNN architecture we also randomly sample
\begin{verbatim}
num_conv_layers = random uniform([1, 2, 3, 4, 5])
num_units = random uniform([64, 128, 256, 512, 1024])
num_filters = random uniform([16, 32])
\end{verbatim}

Exmeriments A are depicted on Figure \ref{fig:CIFAR10SweepV2}.
Two clusters of points in the {\sc center} row correspond to models with VGG16 architecture ($y > 0.7$) and models with SimpleCNN or ResNet18 architecture ($y \leq 0.7$).
Surprisingly, several experiments with transferring one (first) layer of VGG16 when fine-tuning with real labels lead to improved test accuracy compared to the same models trained from scratch (blue squares, {\sc center} row).

\subsection{CIFAR10, Experiments B}

This set of experiments contains 20 groups per architecture.
We use 2 different random seeds, resulting in 16 experiments per group, or $3 \times 20 \times 16 =960$ trained models in total.

We use slightly wider hyperparameter ranges compared to the CIFAR10 Experiments A.
Also, we include the orthogonal initialization algorithm \cite{saxe2014}.
The following parameters are shared between all runs:
\begin{verbatim}
num_examples_downstream = 25000
num_epochs_downstream = 200
\end{verbatim}
For each group of experiments we sample the following parameters randomly:
\begin{verbatim}
init_algorithm = random uniform(["he", "orthogonal"])
init_scale = random log_uniform(0.5, 2.)
learning_rate = random log_uniform(0.005, 0.02)
num_examples_upstream = random uniform([5000, 10000, 15000, 20000, 25000])
num_classes_upstream = random uniform([2, 5, 10, 25, 50, 100])
num_epochs_upstream = random uniform([20, 40, 80, 100, 120, 150, 200])
\end{verbatim}
For the Simple CNN architecture we also randomly sample
\begin{verbatim}
num_conv_layers = random uniform([1, 2, 3, 4, 5])
num_units = random uniform([512, 1024, 2048])
num_filters = random uniform([16, 32])
\end{verbatim}

Exmeriments B are depicted on Figure \ref{fig:CIFAR10SweepV1}.
The models with $y > 0.7$ on {\sc center} row again correspond to the VGG16 architecture.
This time we do not observe improvements in the test performance when pre-training.

\subsection{ImageNet, Experiments A}
\label{section:imagenet-experiments-a}

This set of experiments counts 10 groups per architecture.
We use 5 different random seeds.
We always transfer all layers.
This results in 20 experiments per group, or $3 \times 10 \times 20 = 600$ trained models in total.

The following parameters are shared between all runs:
\begin{verbatim}
num_examples_downstream = 200000
num_epochs_downstream = 100
num_epochs_upstream = 100
num_classes_upstream = 1000
init_algorithm = "he"
\end{verbatim}
For each group of experiments we sample the following parameters randomly:
\begin{verbatim}
init_scale = random log_uniform(0.5, 1.5)
learning_rate = random log_uniform(0.008, 0.0125)
num_examples_upstream = random uniform([50000, 100000])
\end{verbatim}
For the Simple CNN architecture we also randomly sample
\begin{verbatim}
num_conv_layers = random uniform([1, 2, 3])
num_units = random uniform([32, 64, 128, 256, 512, 1024])
num_filters = random uniform([16, 32, 64])
\end{verbatim}

ImageNet Exmeriments A are depicted on Figure \ref{fig:ImageNetSweepV1}.
All the ResNet18 runs result in positive transfer in terms of the downstream training performance, i.e. all the green points on {\sc top} and {\sc bottom} plots are below the diagonal.
This may be caused by the fact that we do not re-scale ResNet18 models after pre-training, which means the effective downstream learning rate of the pre-trained ResNets18 may be affected.

\subsection{ImageNet, Experiments B}

This set of experiments counts 10 groups per architecture.
We use 5 different random seeds.
We always transfer all layers.
This results in 20 experiments per group, or $3 \times 10 \times 20 = 600$ trained models in total.

We use smaller upstream and downstream training sizes compared to ImageNet Experiments A.

The following parameters are shared between all runs:
\begin{verbatim}
num_examples_downstream = 50000
num_epochs_downstream = 100
num_epochs_upstream = 100
num_classes_upstream = 1000
init_algorithm = "he"
\end{verbatim}
For each group of experiments we sample the following parameters randomly:
\begin{verbatim}
init_scale = random log_uniform(0.5, 1.5)
learning_rate = random log_uniform(0.008, 0.0125)
num_examples_upstream = random uniform([25000, 50000])
\end{verbatim}
For the Simple CNN architecture we also randomly sample
\begin{verbatim}
num_conv_layers = random uniform([1, 2, 3])
num_units = random uniform([32, 64, 128, 256, 512, 1024])
num_filters = random uniform([16, 32, 64])
\end{verbatim}

ImageNet Exmeriments B are depicted on Figure \ref{fig:ImageNetSweepV2}.
We see again that all the ResNet18 runs result in positive transfer in terms of the downstream training performance.
However, this time we do not observe the positive transfer with architectures other than ResNet18.

\begin{figure}[tbp]
  \centering \sffamily
  {\large Real Labels Downstream (Training Accuracy)}\\
  \includegraphics[width=\textwidth]{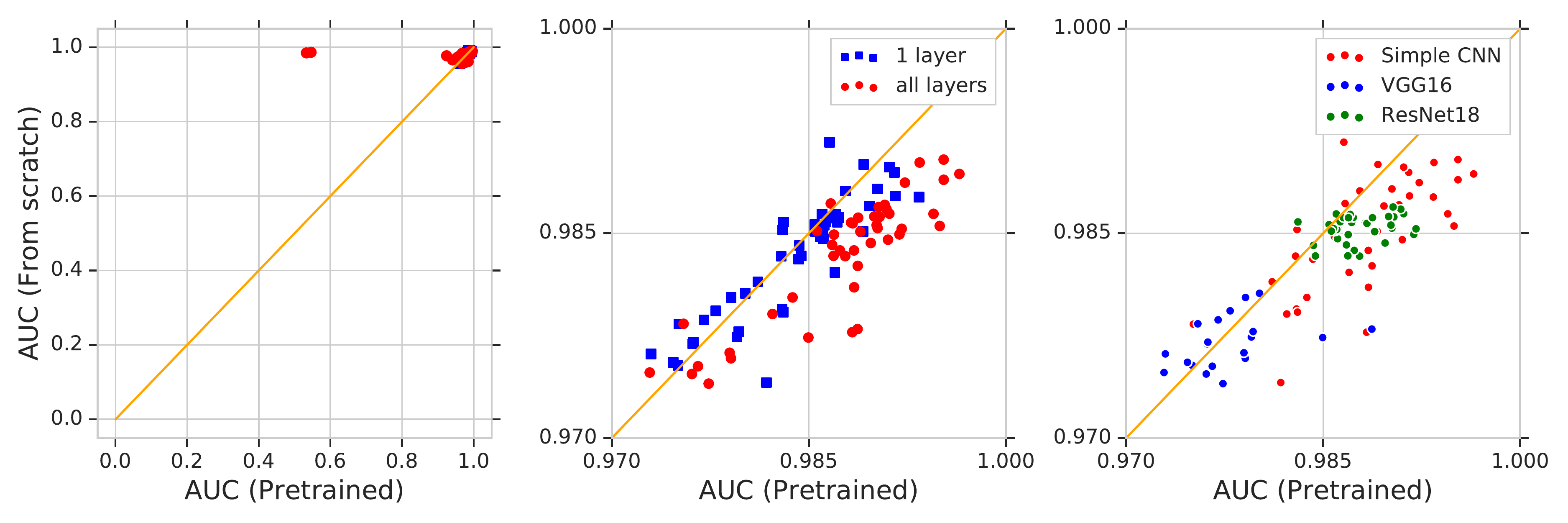}\\
  {\large Real Labels Downstream (Test Accuracy)}\\
  \includegraphics[width=\textwidth]{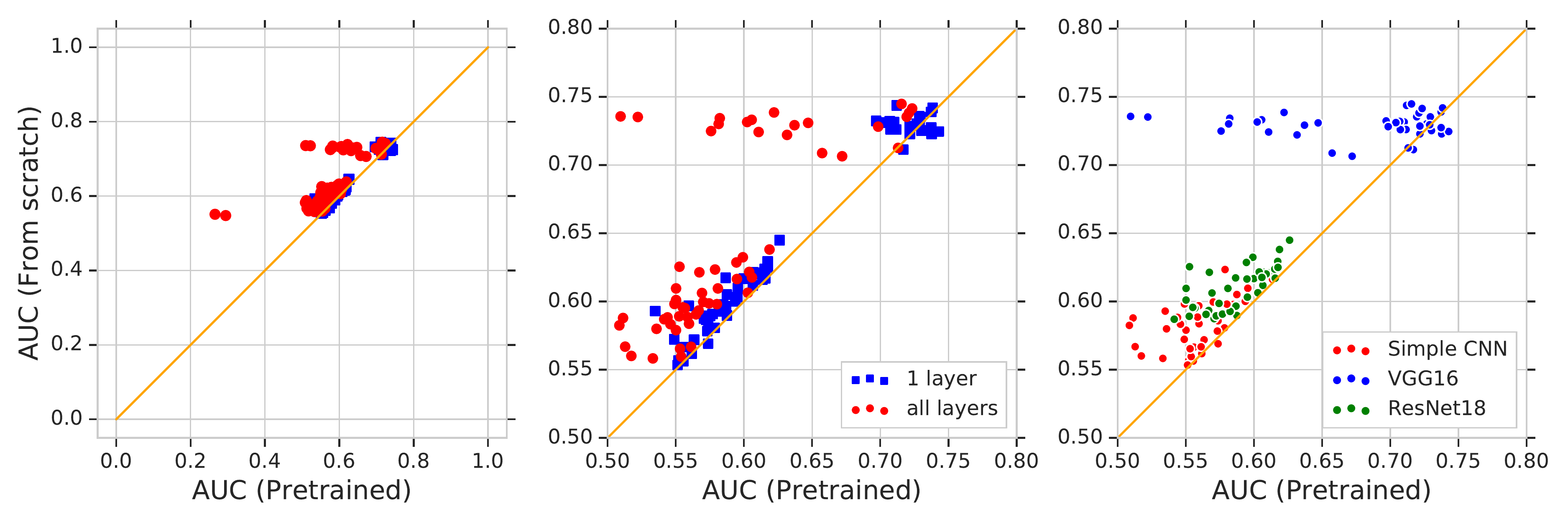}\\
  {\large Random Labels Downstream (Training Accuracy)}\\
  \includegraphics[width=\textwidth]{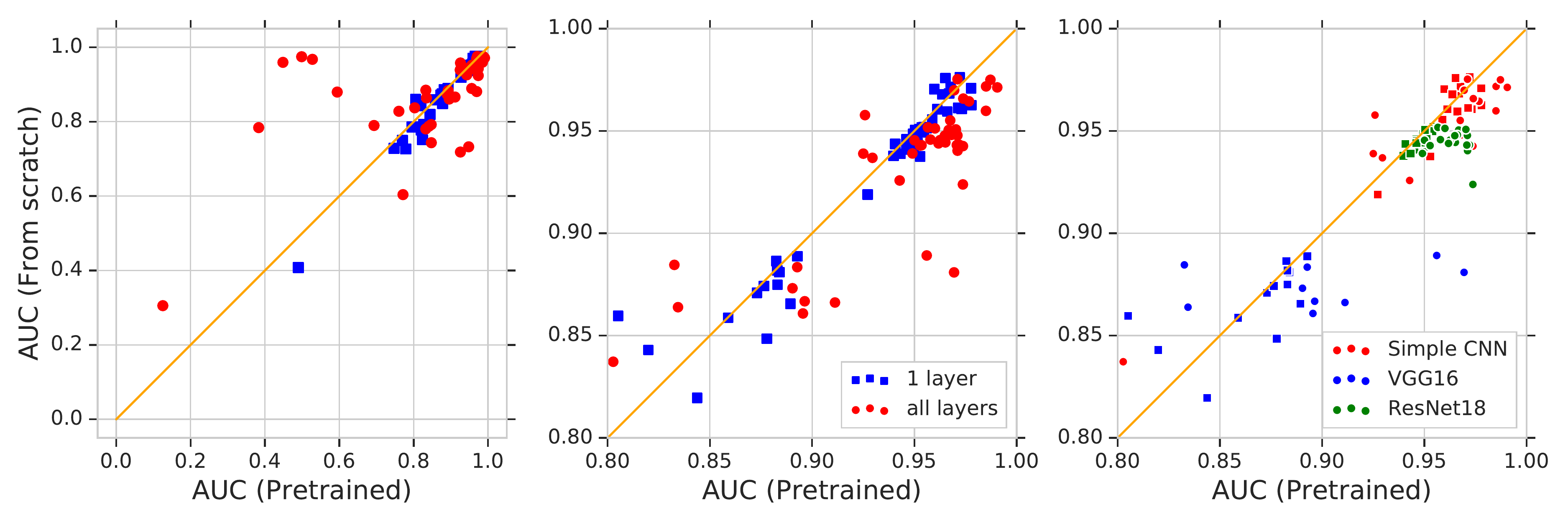}
  \caption{
    Scatter plots for CIFAR10 Experiments A.
    Models fine-tuned on real labels ({\sc top} and {\sc center}) and on random labels ({\sc bottom}).
    Each dot corresponds to one \emph{group of experiments}.
    Points below the orange $x=y$ line on {\sc top} correspond to experiments where models pre-trained with random labels train faster downstream (with real labels) compared to models trained from scratch \emph{with same hyperparameters}.
    {\sc Center and right} columns contain zoomed in versions of the plots from {\sc left} column. 
    }
  \label{fig:CIFAR10SweepV2}
\end{figure}

\clearpage
\begin{figure}[tbp]
  \centering \sffamily
  {\large Real Labels Downstream (Training Accuracy)}\\
  \includegraphics[width=\textwidth]{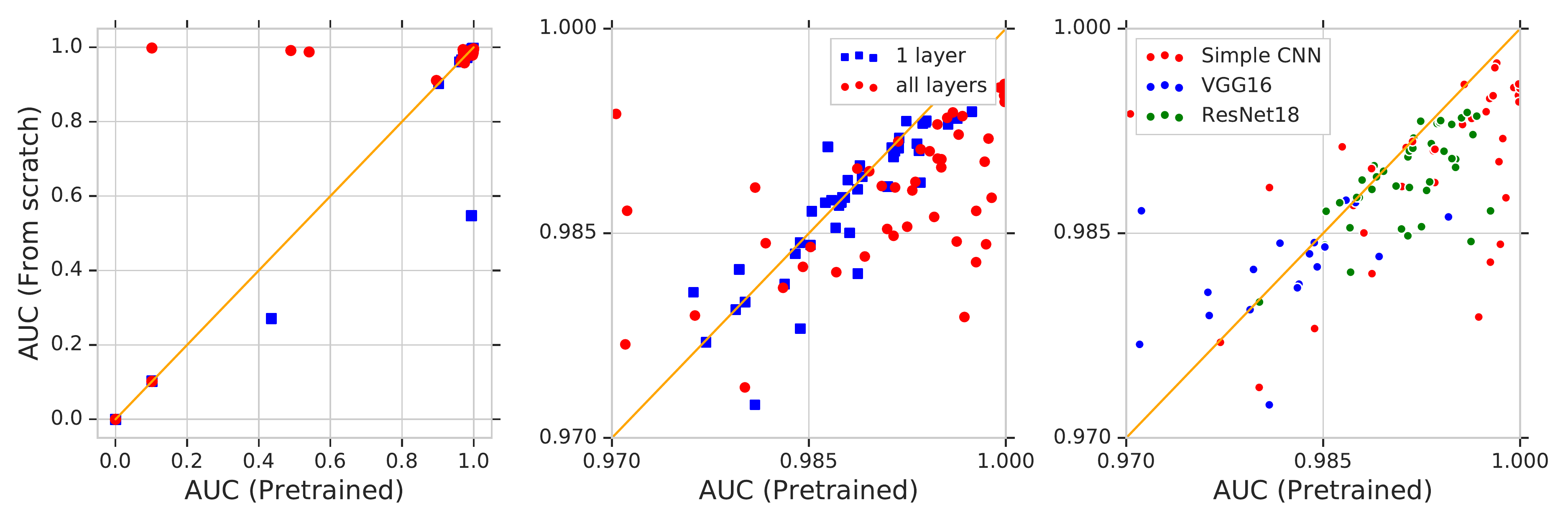}\\
  {\large Real Labels Downstream (Test Accuracy)}\\
  \includegraphics[width=\textwidth]{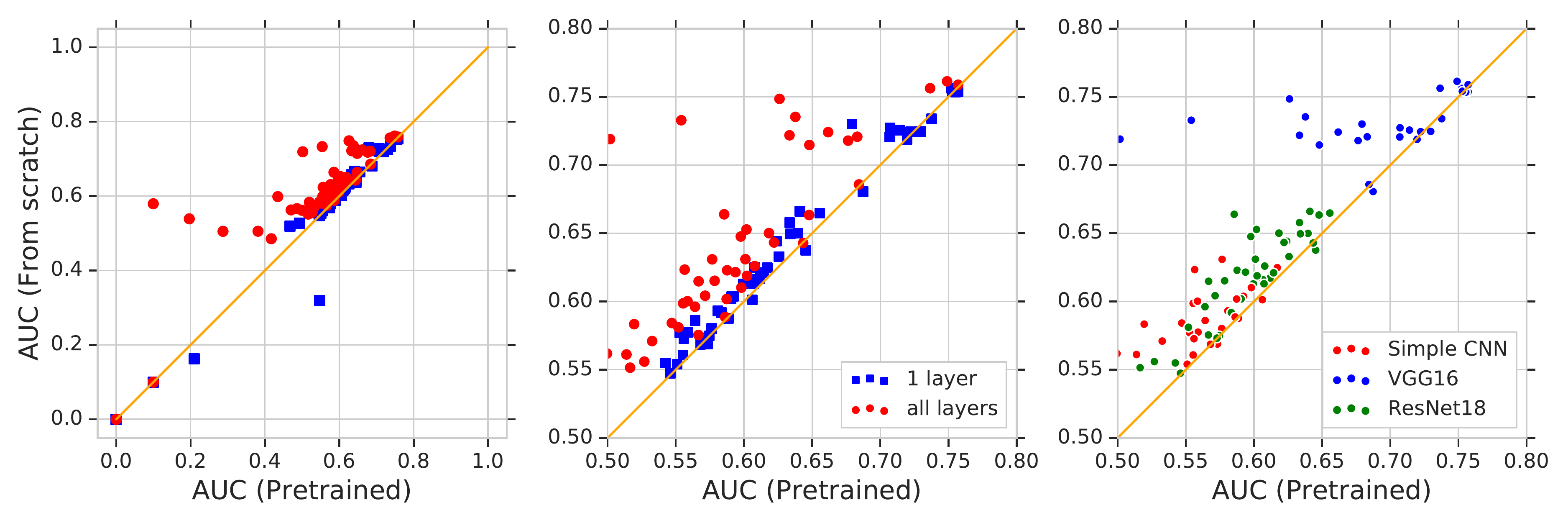}\\
  {\large Random Labels Downstream (Training Accuracy)}\\
  \includegraphics[width=\textwidth]{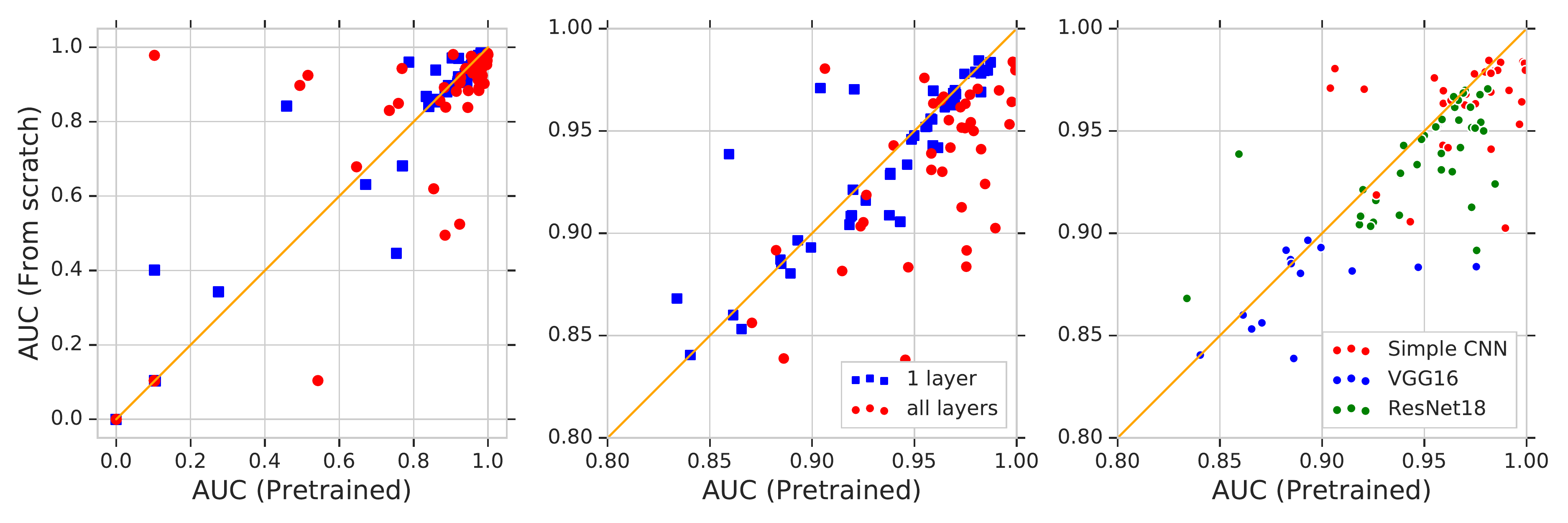}
  \caption{
    Scatter plots for CIFAR10 Experiments B.
    Models fine-tuned on real labels ({\sc top} and {\sc center}) and on random labels ({\sc bottom}).
    Each dot corresponds to one \emph{group of experiments}.
    Points below the orange $x=y$ line on {\sc top} correspond to experiments where models pre-trained with random labels train faster downstream (with real labels) compared to models trained from scratch \emph{with same hyperparameters}.
    {\sc Center and right} columns contain zoomed in versions of the plots from {\sc left} column.
    }
  \label{fig:CIFAR10SweepV1}
\end{figure}

\clearpage
\begin{figure}[tbp]
  \centering \sffamily
  {\large Real Labels Downstream (Training Accuracy)}\\
  \includegraphics[width=.8\textwidth]{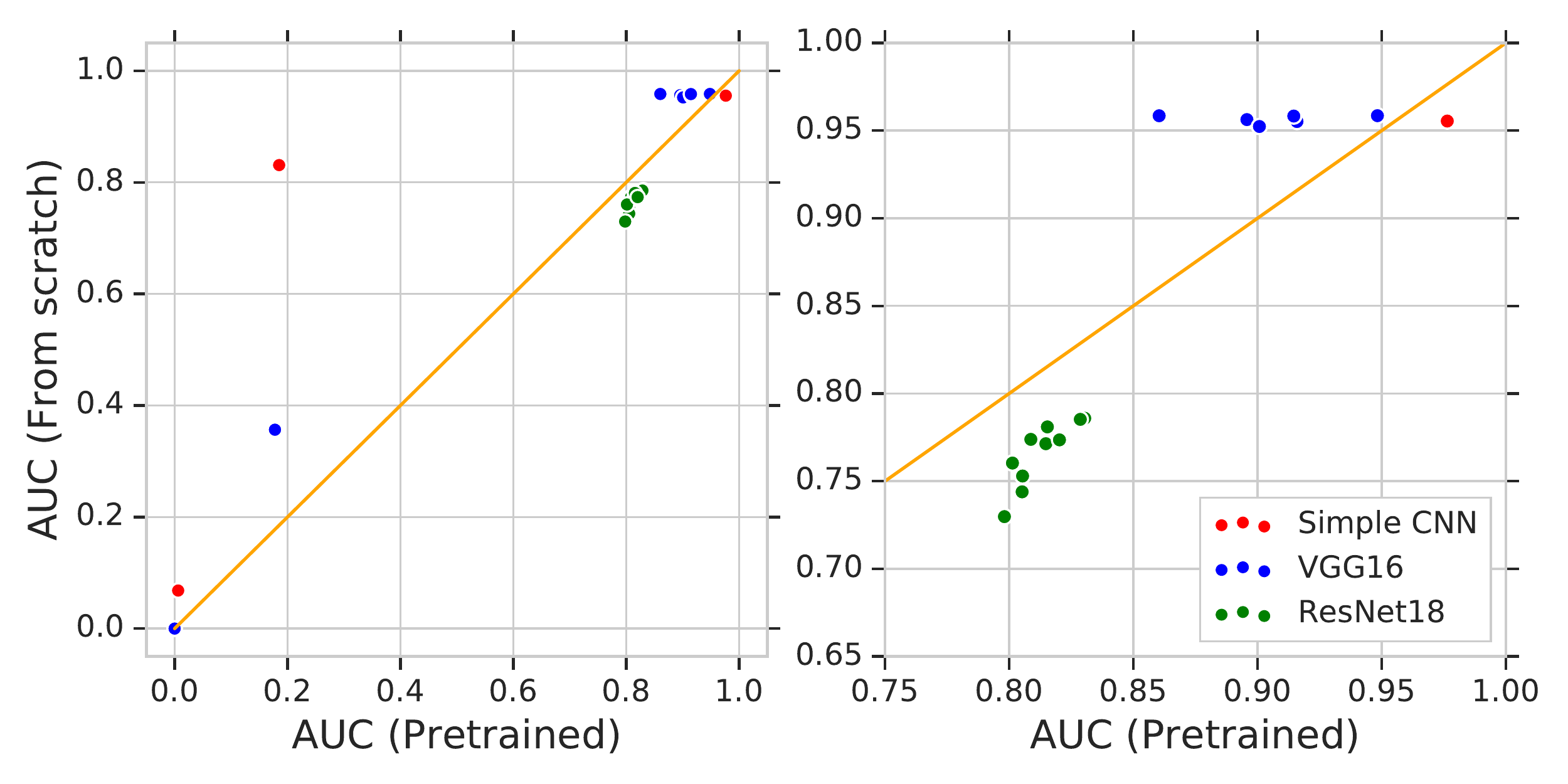}\\
  {\large Real Labels Downstream (Test Accuracy)}\\
  \includegraphics[width=.8\textwidth]{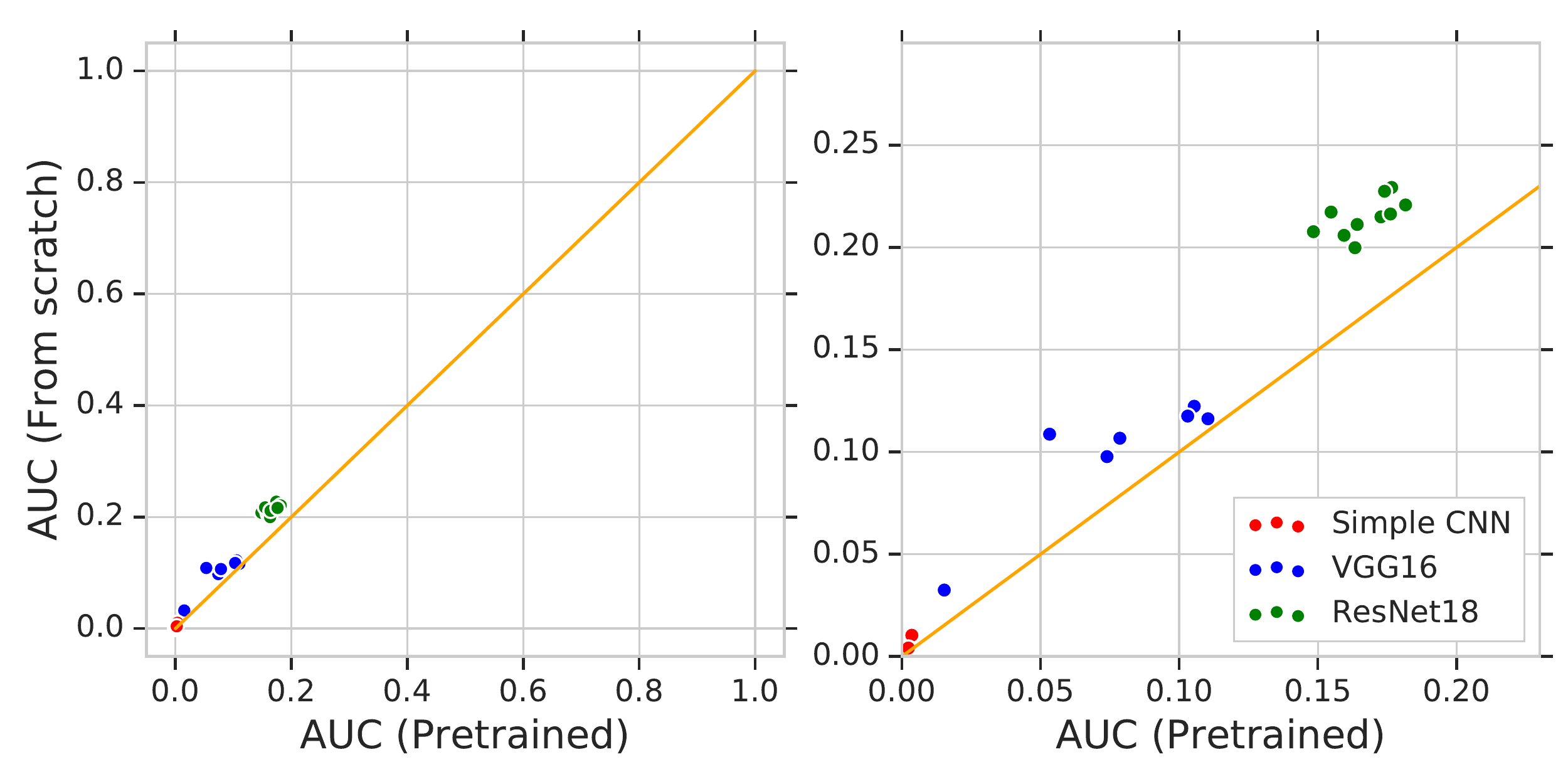}\\
  {\large Random Labels Downstream (Training Accuracy)}\\
  \includegraphics[width=.8\textwidth]{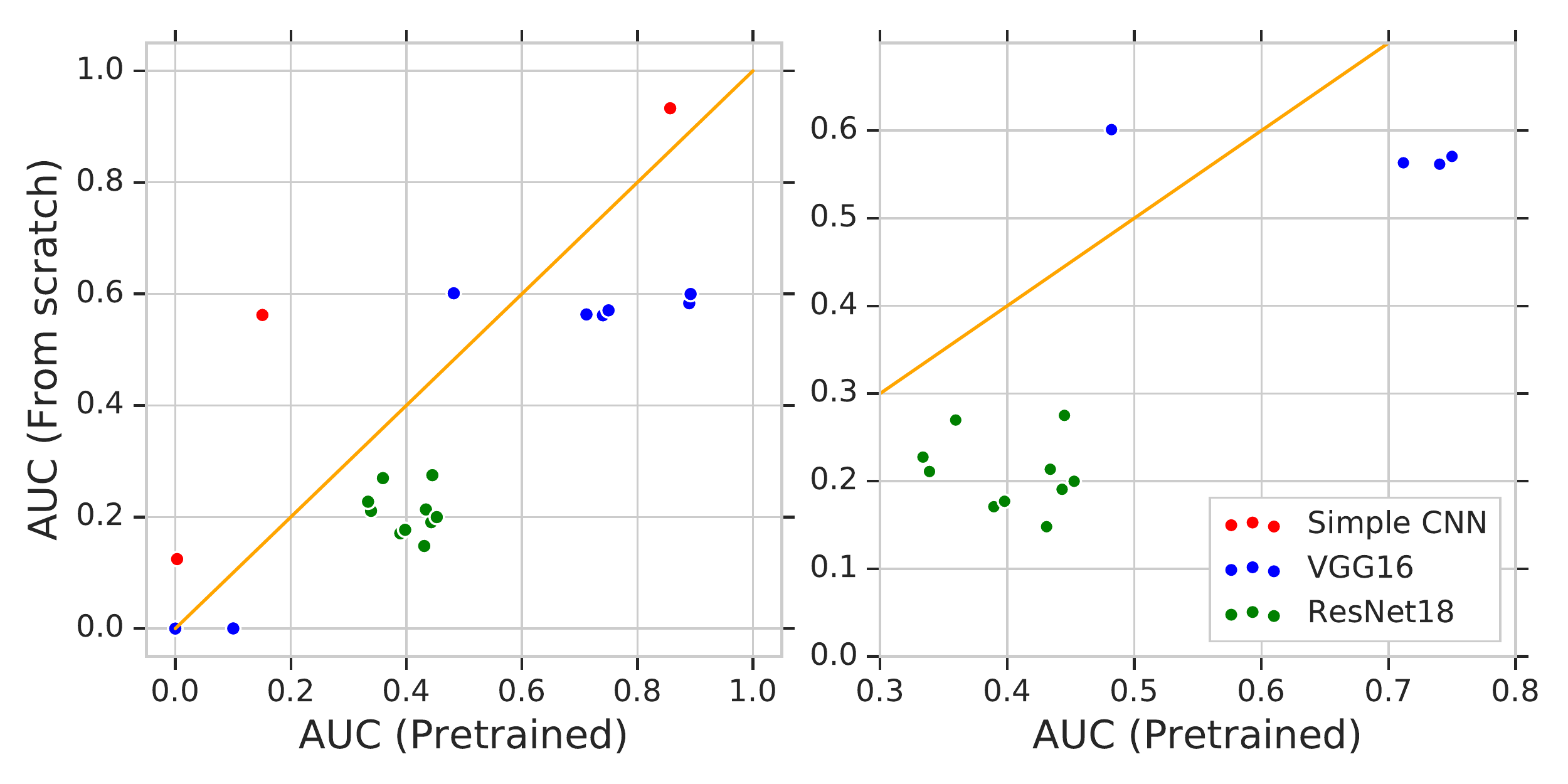}
  \caption{
    Scatter plots for ImageNet Experiments A.
    Models fine-tuned on real labels ({\sc top} and {\sc center}) and on random labels ({\sc bottom}).
    Each dot corresponds to one \emph{group of experiments}.
    Points below the orange $x=y$ line on {\sc top} correspond to experiments where models pre-trained with random labels train faster downstream (with real labels) compared to models trained from scratch \emph{with same hyperparameters}.
    {\sc Right} column contains zoomed in versions of the plots from {\sc left} column.
    }
  \label{fig:ImageNetSweepV1}
\end{figure}

\clearpage
\begin{figure}[tbp]
  \centering \sffamily
  {\large Real Labels Downstream (Training Accuracy)}\\
  \includegraphics[width=.8\textwidth]{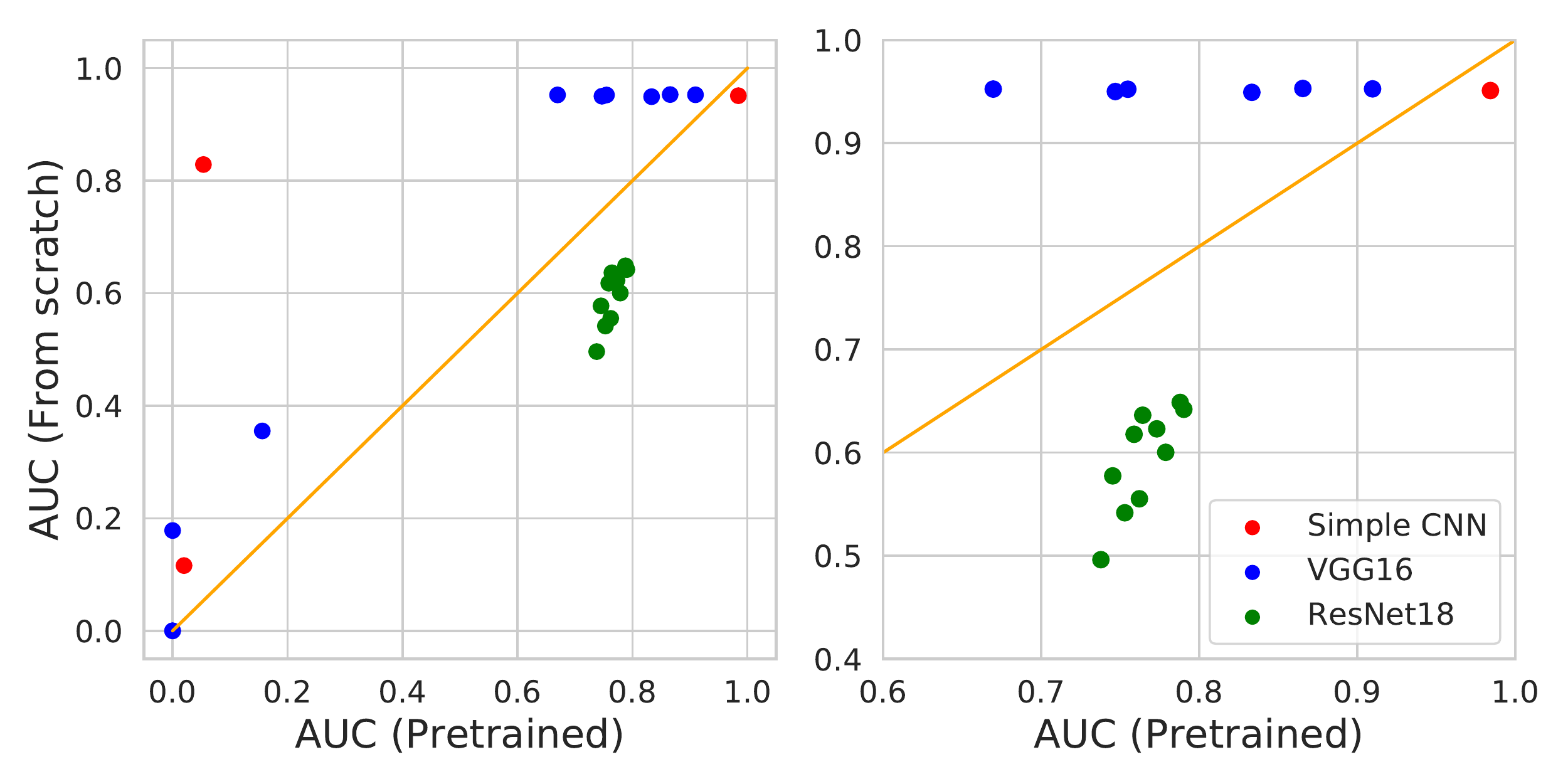}\\
  {\large Real Labels Downstream (Test Accuracy)}\\
  \includegraphics[width=.8\textwidth]{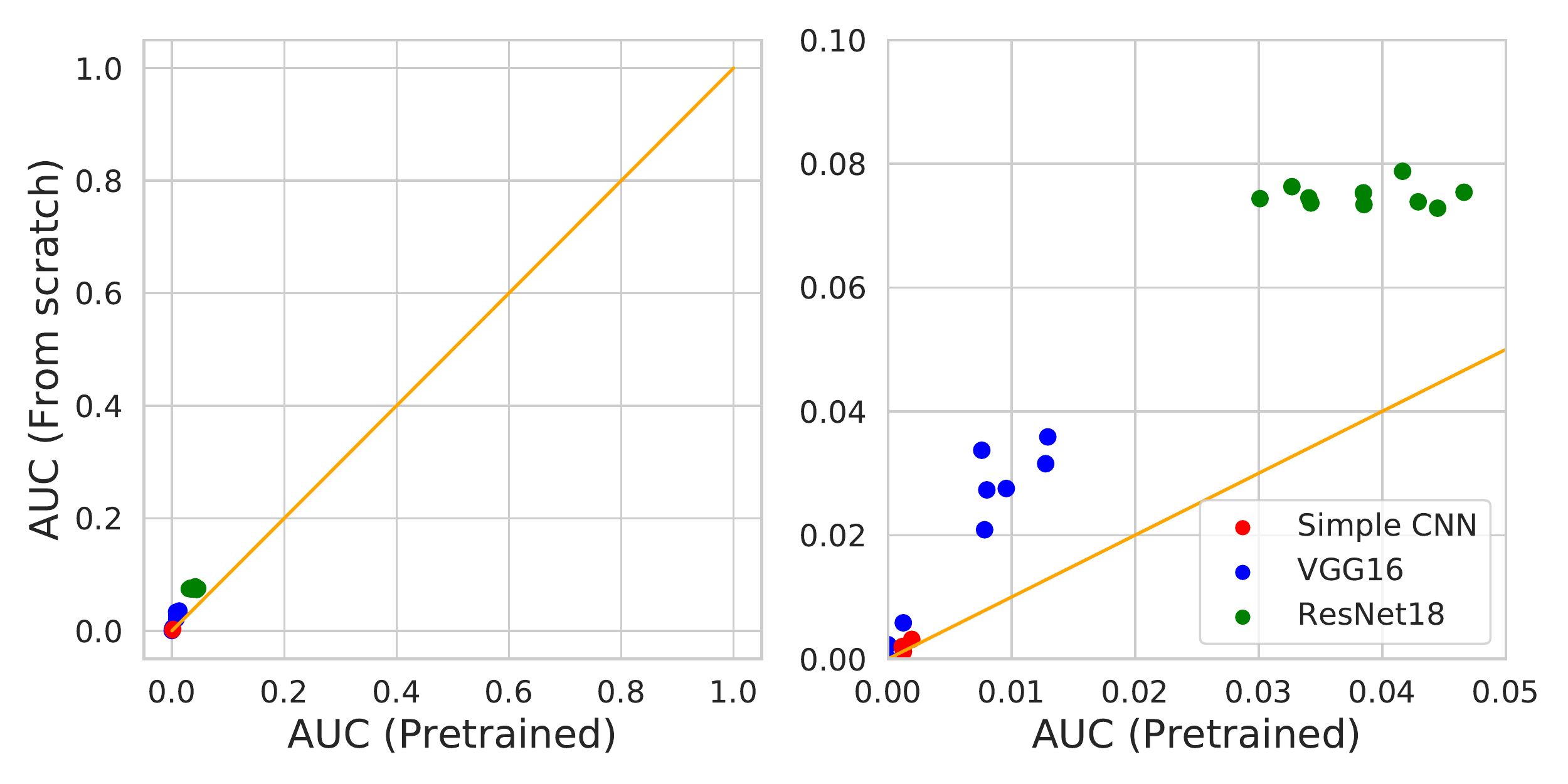}\\
  {\large Random Labels Downstream (Training Accuracy)}\\
  \includegraphics[width=.8\textwidth]{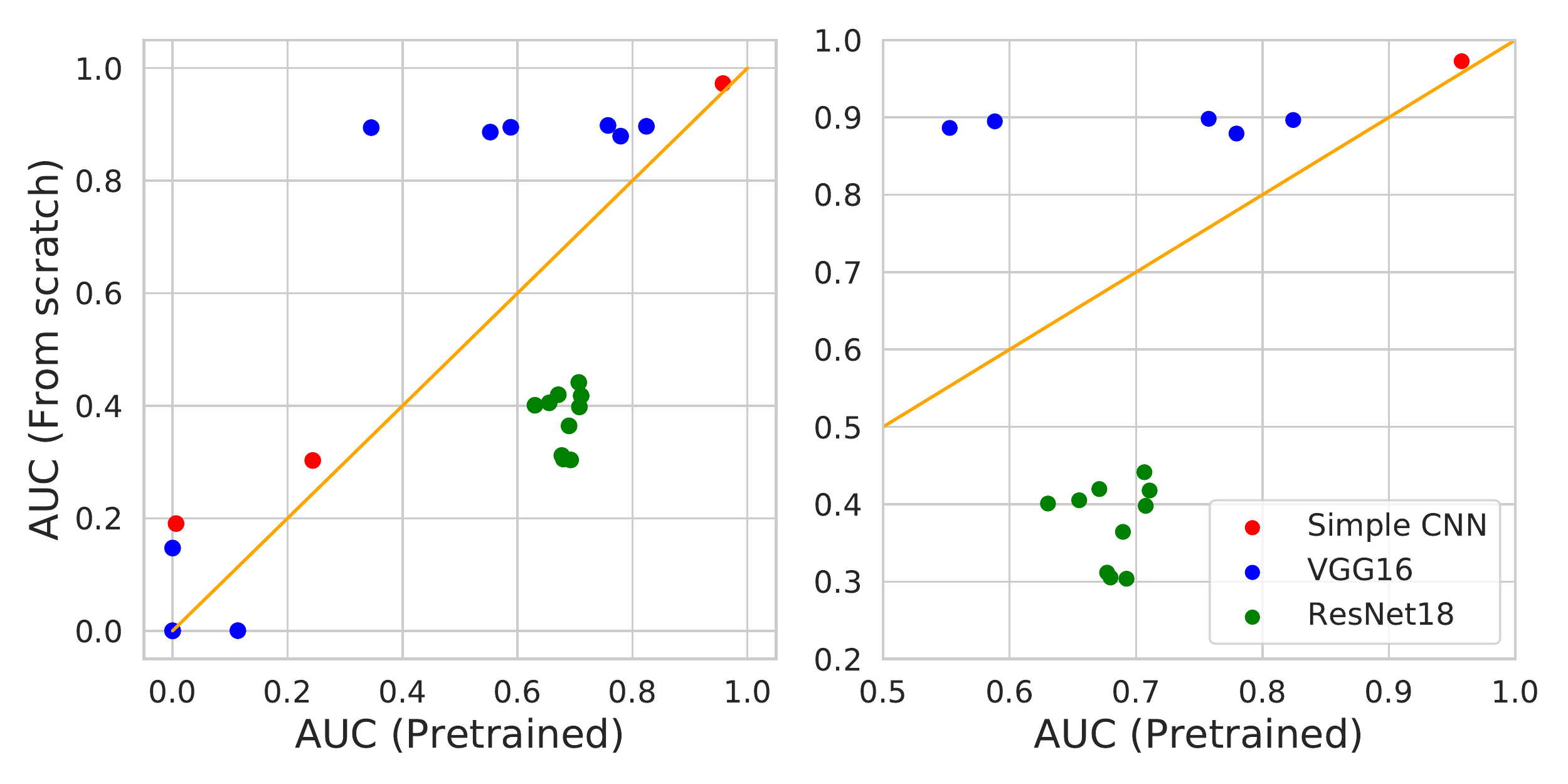}
  \caption{
    Scatter plots for ImageNet Experiments B.
    Models fine-tuned on real labels ({\sc top} and {\sc center}) and on random labels ({\sc bottom}).
    Each dot corresponds to one \emph{group of experiments}.
    Points below the orange $x=y$ line on {\sc top} correspond to experiments where models pre-trained with random labels train faster downstream (with real labels) compared to models trained from scratch \emph{with same hyperparameters}.
    {\sc Right} column contains zoomed in versions of the plots from {\sc left} column.
    }
  \label{fig:ImageNetSweepV2}
\end{figure}

\clearpage
\subsection{Summary of the experiments}
Analysis of experiments presented in the previous four sections reveals a large number of positive transfers where the pre-trained model \emph{trains faster} on the downstream task compared to the model trained from scratch using the \emph{same hyperparameters}.
Generally it looks like transferring more layers leads to stronger positive effects.
All ResNet18 runs performed on ImageNet (and most of the ones on CIFAR10) result in positive transfer (in terms of the downstream training accuracy), which may be caused by the fact that we do not re-scale them after pre-training.

Most likely there are other important factors, yet to be discovered, playing a role in all these experiments. 
However, the evidence presented earlier (Figures~\ref{fig:Experiment_alignment} and~\ref{fig:filters}, Table~\ref{table:experiment_3conv}) suggests that alignment is responsible for the observed positive transfer in at least some of these cases.

\subsection{Detailed results for some of the experiments}

Figure \ref{fig:imagenet-resnet-1} reports experiments with ResNet18 on ImageNet where pre-training helps. These experiments differ only in the in initialization scale and otherwise use:
\begin{verbatim}
learning_rate = 0.01
num_classes_upstream = 1000
num_examples_upstream = 500000
num_epochs_upstream = 100
num_examples_downstream = 500000
num_epochs_downstream = 100
\end{verbatim}

% widths below here were 0.45, changed to get rid of a 
% 'float too large' TeX warning, introduced a macro
\def\pltwidth{0.35\textwidth}
\begin{figure}[tbp]
  \centering
  \sffamily
  \begin{minipage}{\pltwidth}
    \centering 
    Pre-training helps\\
    Init scale 1.3\\
    (real labels)
  \end{minipage}
  \begin{minipage}{\pltwidth}
    \centering
    Pre-training helps\\
    Init scale 1.3\\
    (random labels)
  \end{minipage}\\
  \includegraphics[width=\pltwidth]{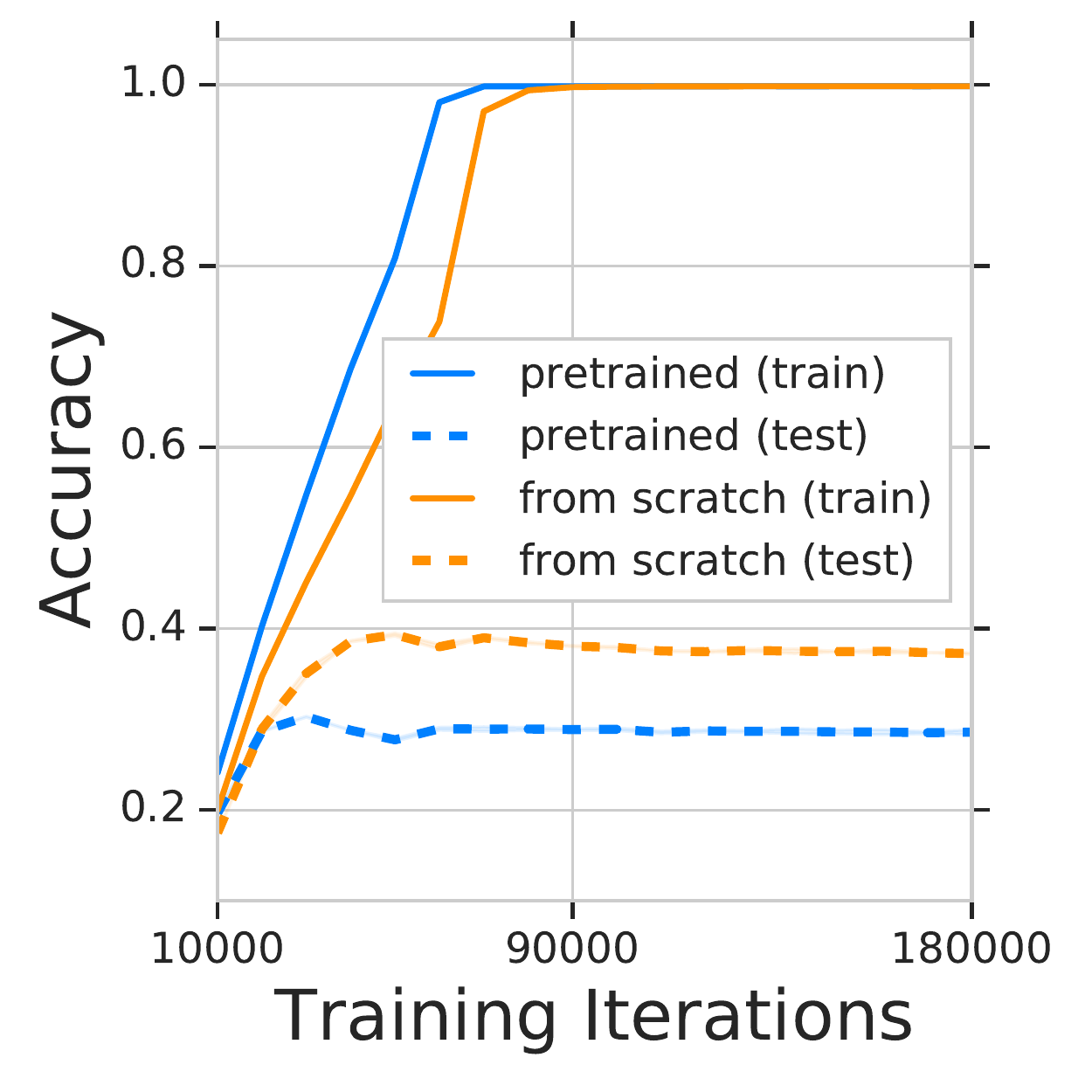}%
  \includegraphics[width=\pltwidth]{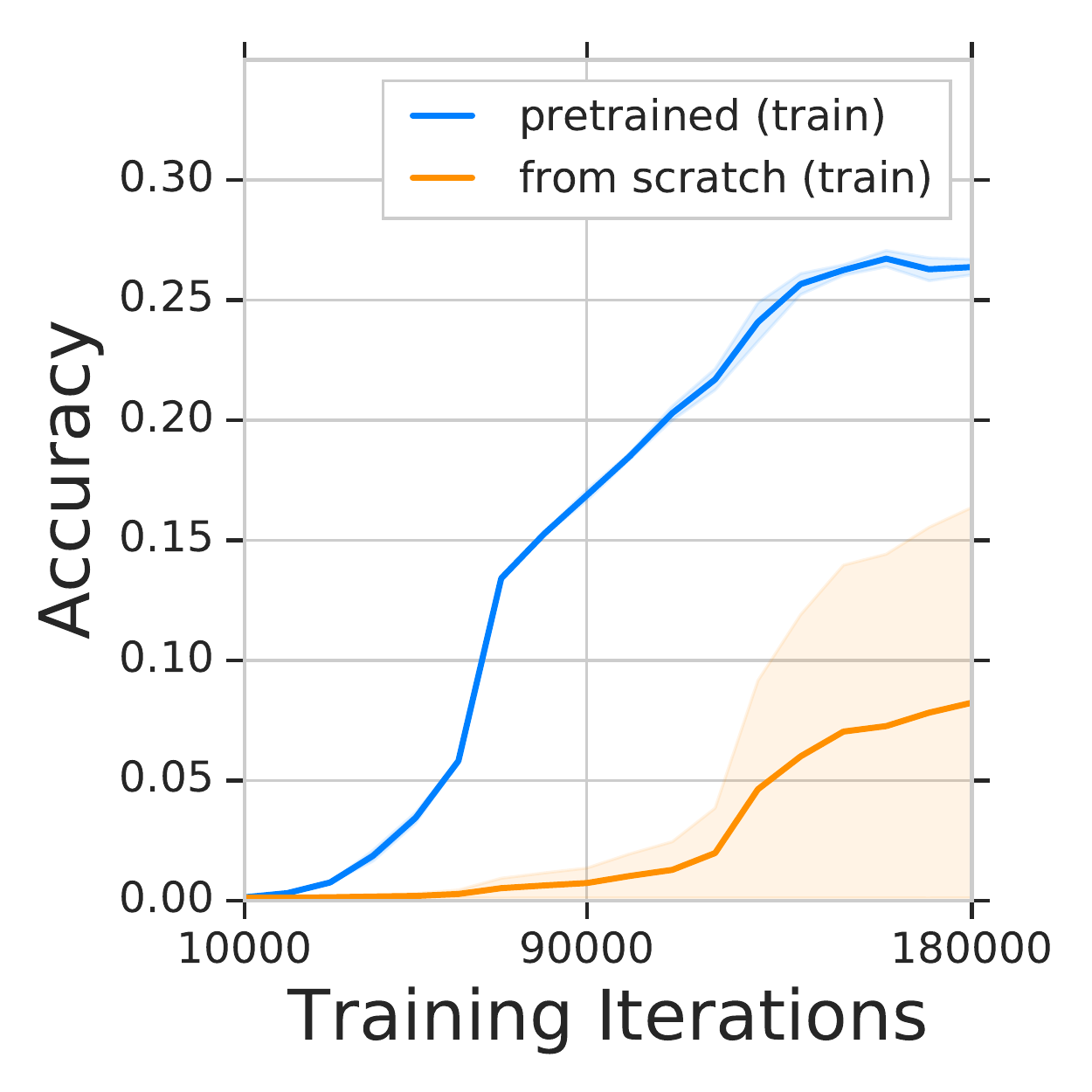}\\
  \begin{minipage}{\pltwidth}
    \centering 
    Pre-training helps\\
    Init scale 1.6\\
    (real labels)
  \end{minipage}
  \begin{minipage}{\pltwidth}
    \centering
    Pre-training helps\\
    Init scale 1.6\\
    (random labels)
  \end{minipage}\\
  \includegraphics[width=\pltwidth]{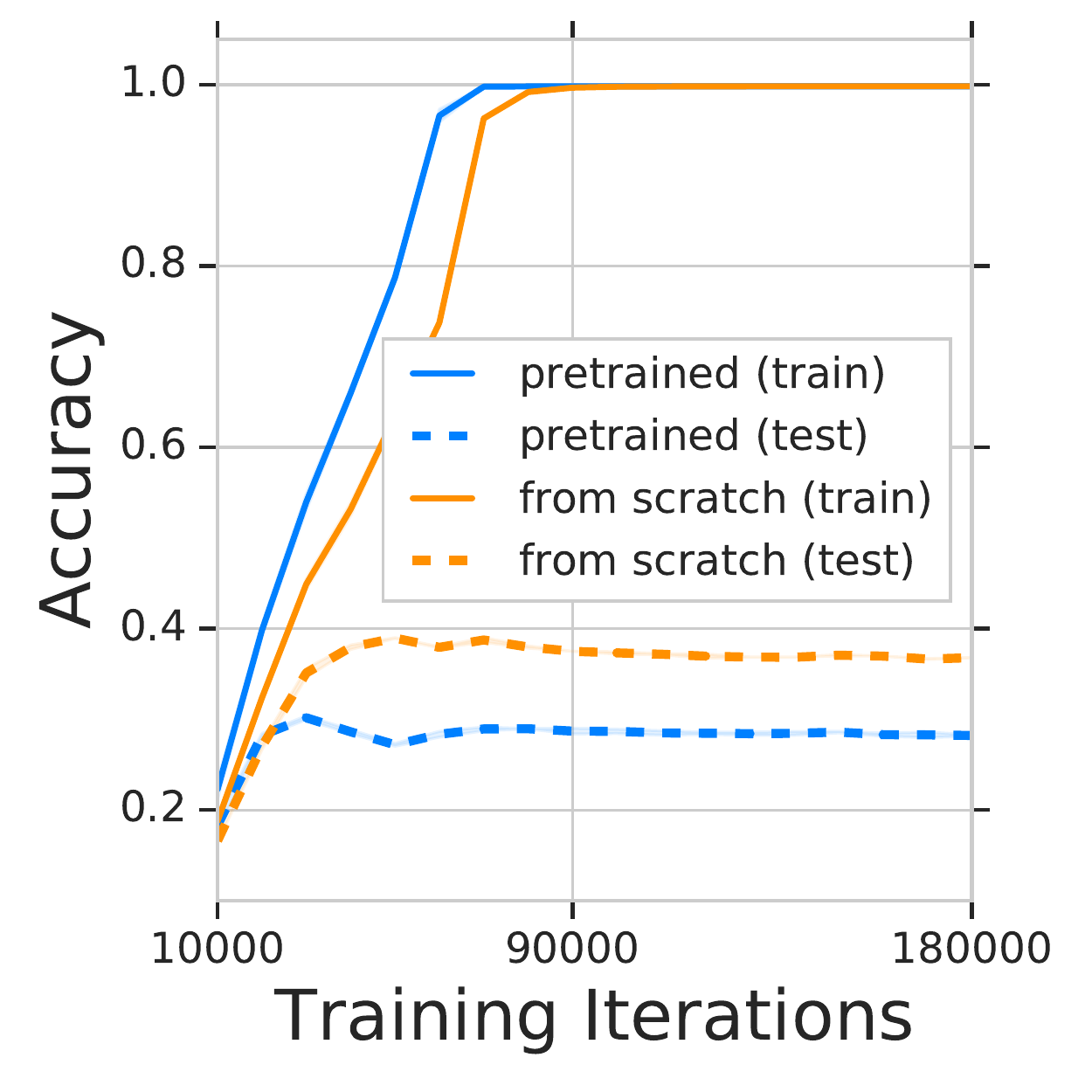}%
  \includegraphics[width=\pltwidth]{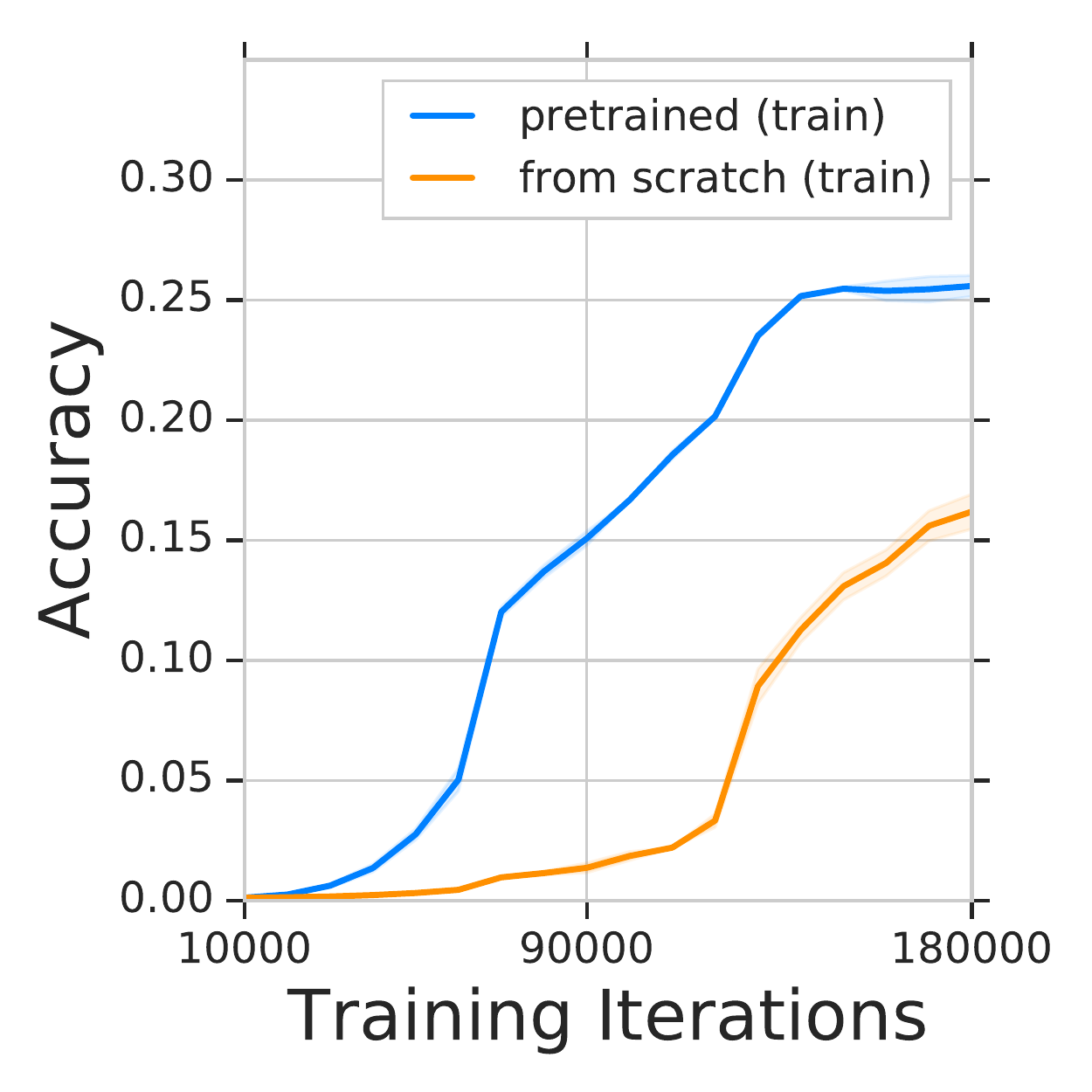}\\
  \begin{minipage}{\pltwidth}
    \centering 
    Pre-training helps\\
    Init scale 1.9\\
    (real labels)
  \end{minipage}
  \begin{minipage}{\pltwidth}
    \centering
    Pre-training helps\\
    Init scale 1.9\\
    (random labels)
  \end{minipage}\\
  \includegraphics[width=\pltwidth]{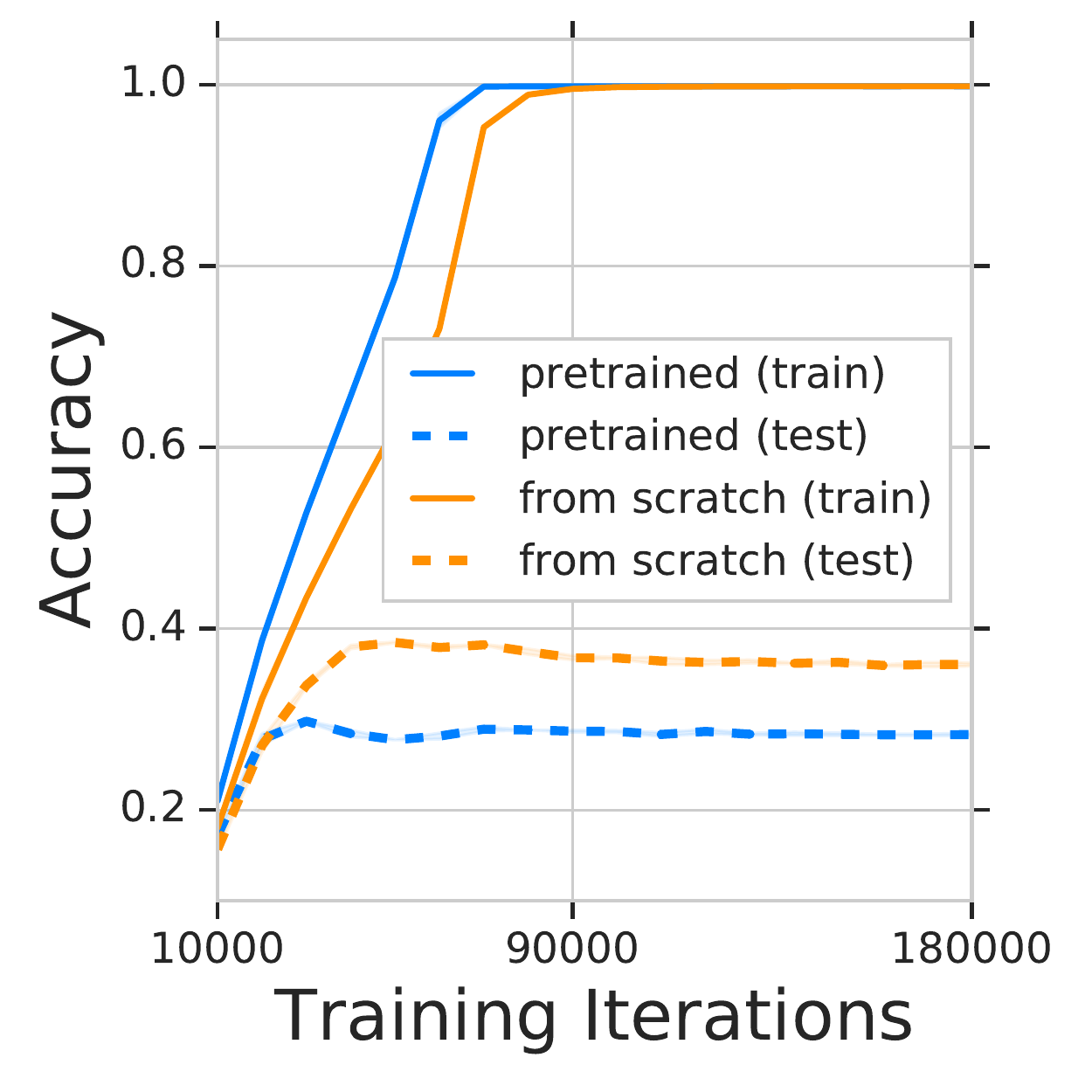}%
  \includegraphics[width=\pltwidth]{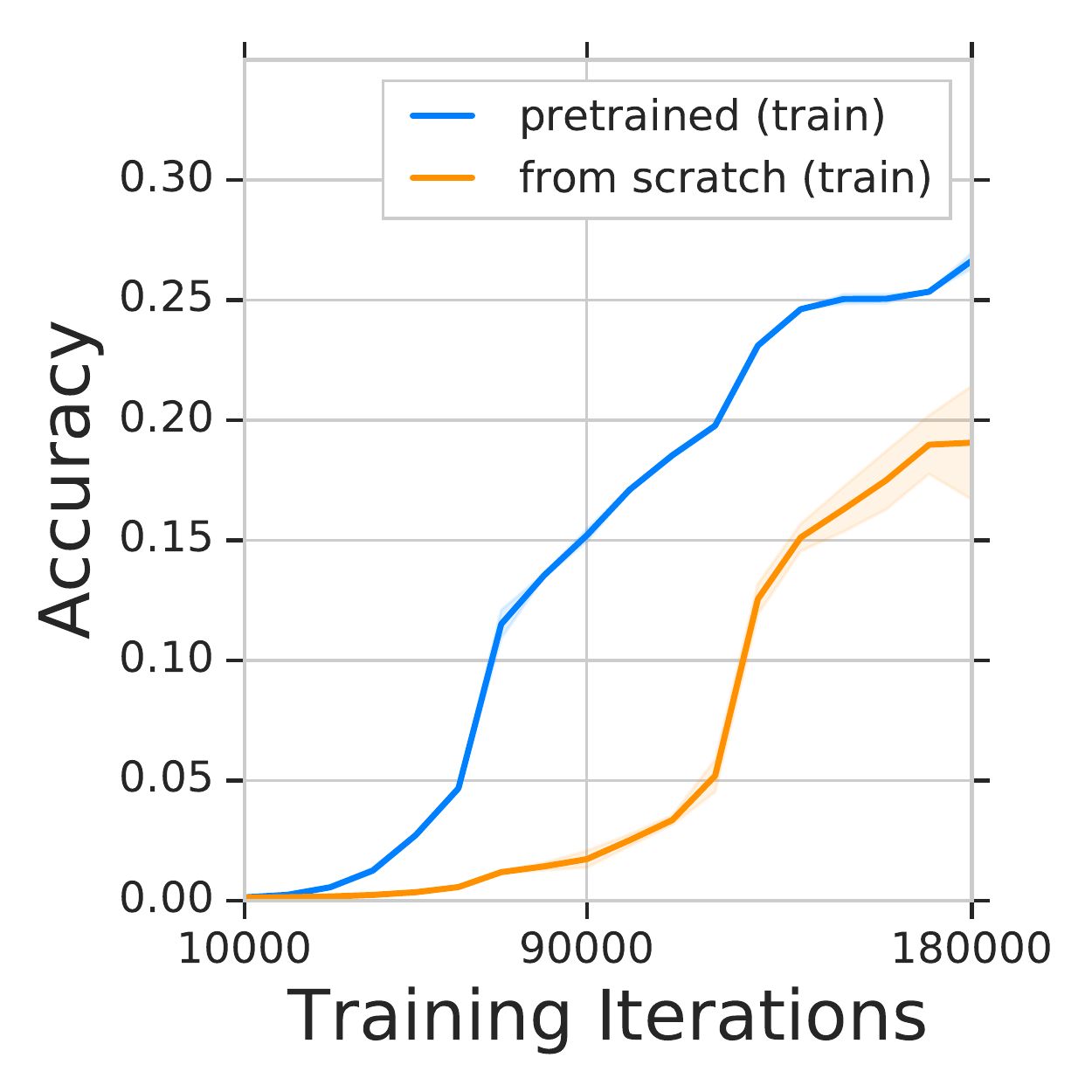}
  \caption{Pre-training on random labels often accelerates the downstream training compared to training from scratch with same hyperparameters.
  ResNet18 models are pre-trained on 500k
  ImageNet examples with 1000 random labels and subsequently fine-tuned on the fresh 500k ImageNet examples with either real labels or 1000 random labels using learning rate 0.01 and different initialization scales.
  Error bars correspond to $\pm 1$\:std.\:over 2 independent runs. 
  }
  \label{fig:imagenet-resnet-1}
\end{figure}

\begin{figure}[tbp]
  \centering \sffamily
  \begin{minipage}{\pltwidth}
    \centering 
    Pre-training slows down training\\
    (real labels)
  \end{minipage}
  \begin{minipage}{\pltwidth}
    \centering
    Pre-training helps\\
    (random labels)
  \end{minipage}\\
  \includegraphics[width=\pltwidth]{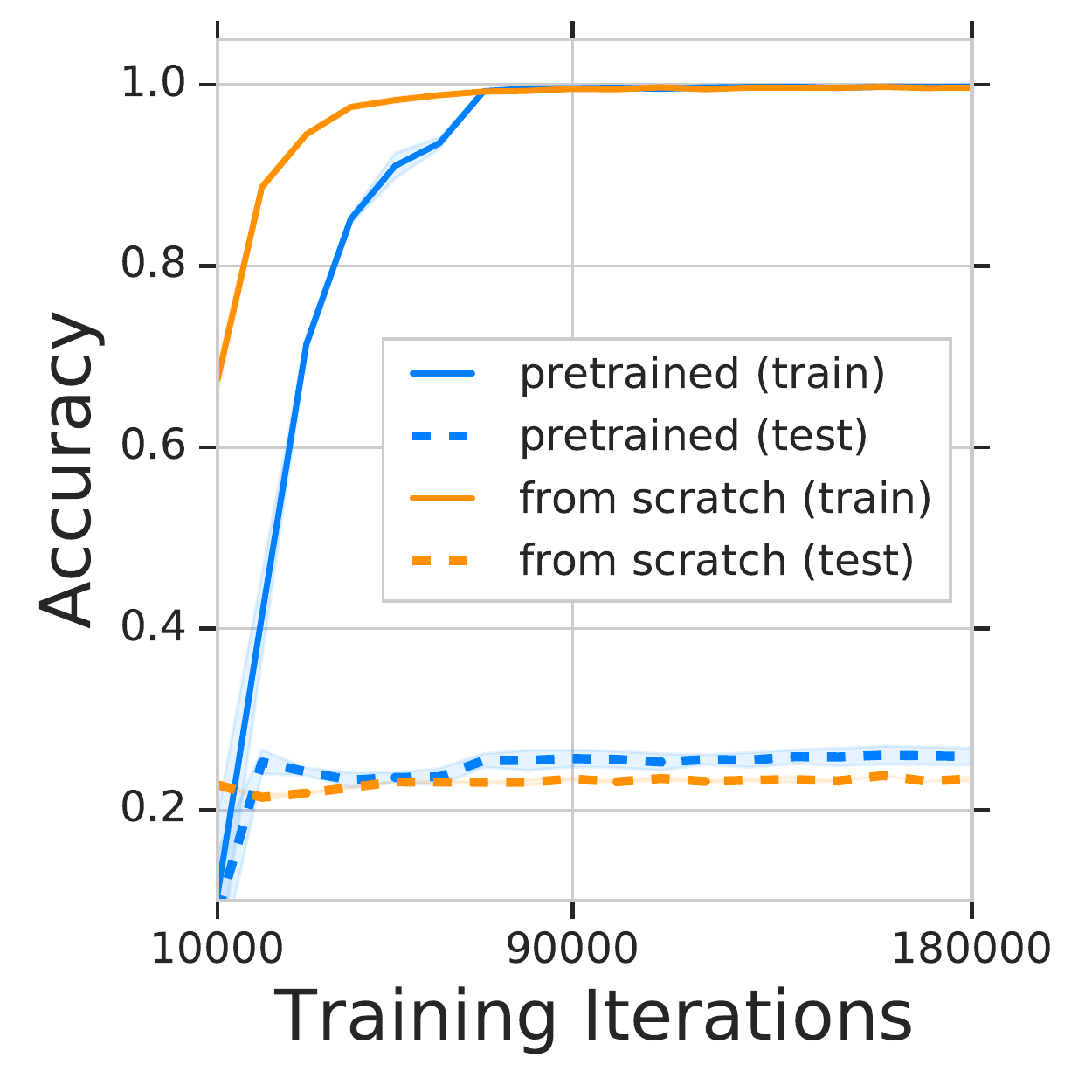}%
  \includegraphics[width=\pltwidth]{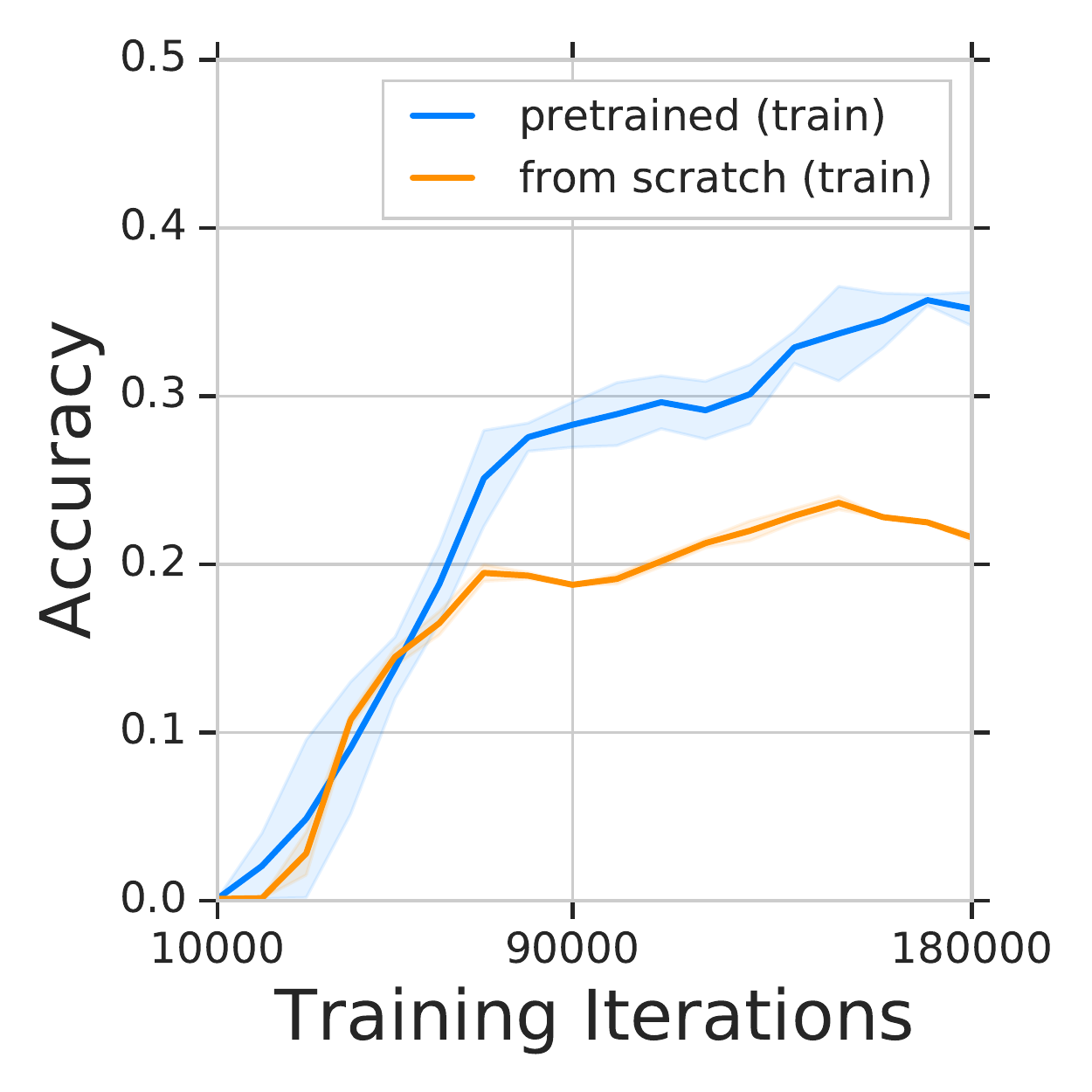}
  \caption{Pre-training on random labels may both accelerate or slow down the downstream training compared to training from scratch with same hyperparameters.
  VGG16 models are pre-trained on 500k
  ImageNet examples with 1000 random labels and subsequently fine-tuned on the fresh 500k ImageNet examples with either real labels or 1000 random labels.
  {\sc Left:} Surprisingly, pre-training improves the holdout test accuracy.
  Error bars correspond to $\pm 1$\:std.\:based on 2 independent runs.
  }
  \label{fig:imagenet-vgg-1}
\end{figure}

\begin{figure}[tbp]
  \centering \sffamily
  \begin{minipage}{\pltwidth}
    \centering 
    Pre-training hurts\\
    (real labels)
  \end{minipage}
  \begin{minipage}{\pltwidth}
    \centering
    Pre-training hurts\\
    (random labels)
  \end{minipage}\\
  \includegraphics[width=\pltwidth]{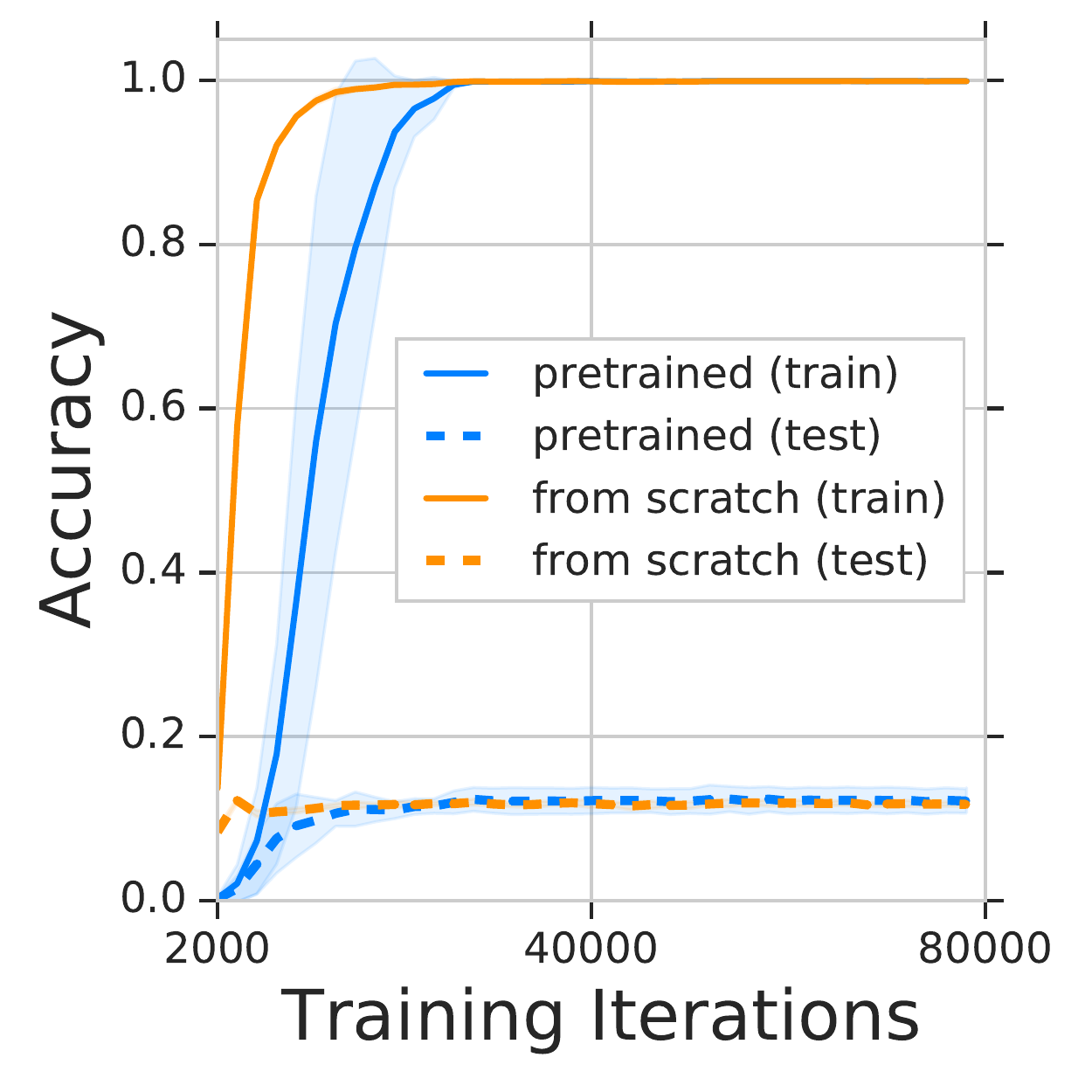}%
  \includegraphics[width=\pltwidth]{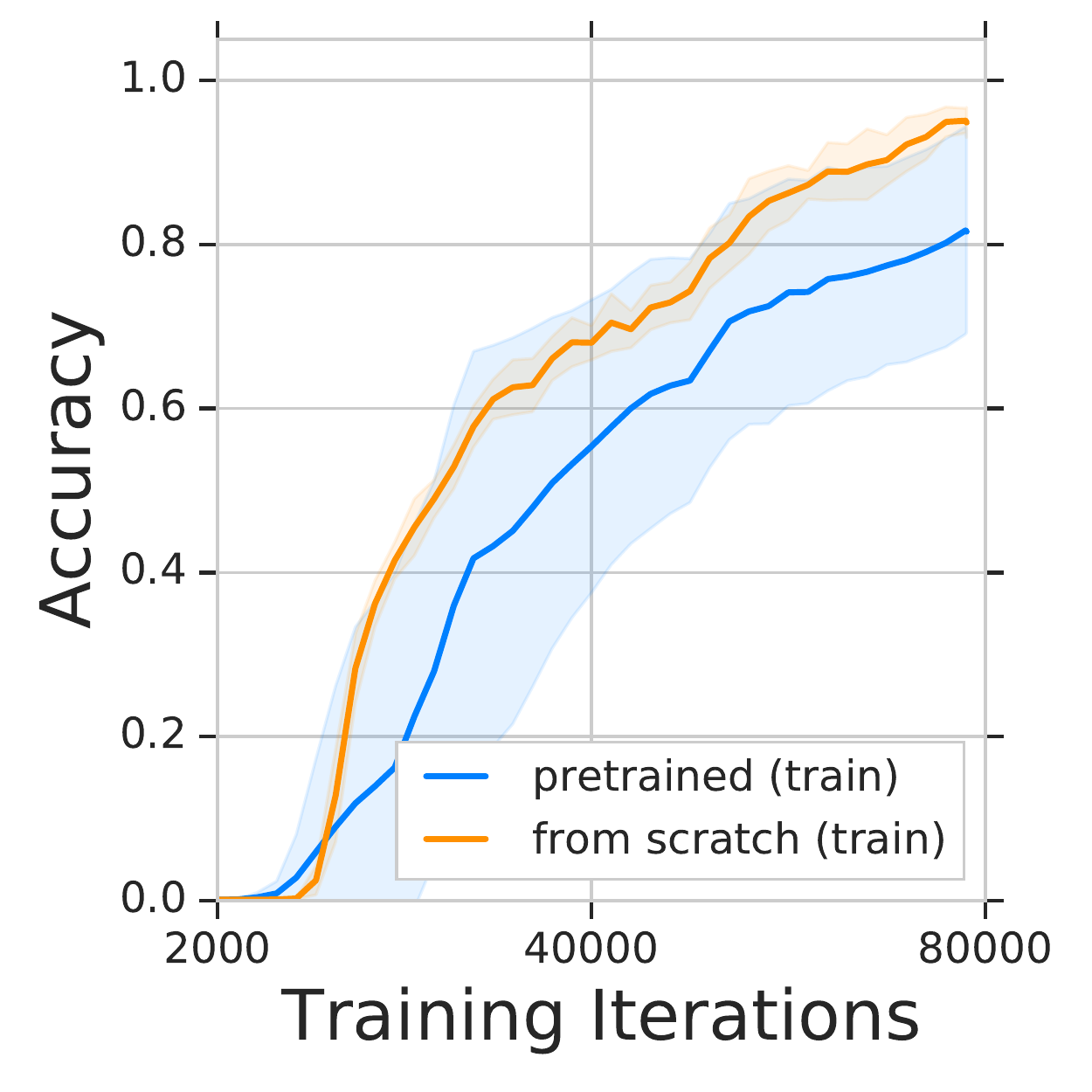}
  \caption{
  VGG16 models are pre-trained on 50k
  ImageNet examples with 1000 random labels and subsequently fine-tuned on 200k fresh ImageNet examples with real labels or 1000 random labels.
  }
  \label{fig:imagenet-vgg-2}
\end{figure}
%\afterpage{\clearpage}

Figure \ref{fig:imagenet-vgg-1} reports experiments with VGG16 on ImageNet where pre-training both helps and hurts.
It may speed up the training, or make it slower.
Surprisingly, in some cases it may slightly improve the holdout test accuracy.
These experiments use:
\begin{verbatim}
init_scale = 1.0
learning_rate = 0.01
num_classes_upstream = 1000
num_examples_upstream = 500000
num_epochs_upstream = 100
num_examples_downstream = 500000
num_epochs_downstream = 100
\end{verbatim}

Figure \ref{fig:imagenet-vgg-2} reports experiments with VGG16 on ImageNet where pre-training hurts both real and random label downstream tasks.
These experiment are taken from the ImageNet Experiments A reported in Section \ref{section:imagenet-experiments-a}.
These experiments use:
\begin{verbatim}
init_scale = 0.747
learning_rate = 0.008
num_classes_upstream = 1000
num_examples_upstream = 50000
num_epochs_upstream = 100
num_examples_downstream = 200000
num_epochs_downstream = 100
\end{verbatim}

\clearpage

%==========================================================================
\section{Proof of Proposition 1}
%==========================================================================
\label{app:Theorem1}

Proposition 1 makes the following assumptions:
\begin{enumerate}
    \item The first layer is either
    \begin{itemize}
        \item fully connected, and the input from $\BR^d$ is normally distributed with mean $ \mu_x=0$ and covariance $\Sigma_x$, or
        \item convolutional, with patches that are not overlapping and the data in
        each position is independent normally distributed with mean 
        $ \mu_x=0$ and covariance $\Sigma_x$.
    \end{itemize}
    \item The first layer weights $ w\in\BR^d$ are initialized i.i.d.\:at random from
        some distribution on $\BR^d$ that is invariant under the orthogonal group $O(d)$. 
    \item The sampled inputs are labelled randomly according to some distribution over the target set $\mathcal{Y}=\{1,2,\ldots,c\}$, independently of the input sample.
\end{enumerate}

Let $\calG := \{ G \in O(d)\, | \, G^T \Sigma_x G = \Sigma_x \}$ be the group of
orthogonal matrices that leave the distribution of the data $ x_i$ invariant.
Proposition 1 follows from these two claims:

\begin{claim}\label{claim:1}
A probability distribution $\calD_w$ on $\BR^d$ 
with mean $\mu_w$ and covariance matrix $\Sigma_w$ that is 
invariant under $\calG$ has $\mu_w = 0$ and $\Sigma_w$ aligned with $\Sigma_x$.
\end{claim}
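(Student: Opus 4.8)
The plan is to reduce the whole statement to the group-theoretic structure of $\calG$ and then exploit the invariance of the first two moments of $\calD_w$. First I would observe that, since every $G\in\calG$ is orthogonal, the defining constraint $G^T\Sigma_x G=\Sigma_x$ is equivalent to $G\Sigma_x=\Sigma_x G$; that is, $\calG$ is exactly the group of orthogonal matrices commuting with $\Sigma_x$. Such a $G$ must map each eigenspace $V_i$ of $\Sigma_x$ into itself, and conversely any orthogonal map acting independently on each $V_i$ commutes with $\Sigma_x$. Hence $\calG = O(V_1)\times\cdots\times O(V_r)$, the direct product of the full orthogonal groups of the eigenspaces, and this description is what all subsequent steps will use.

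Next I would translate the invariance of $\calD_w$ into statements about $\mu_w$ and $\Sigma_w$. If $\textbf{w}\sim\calD_w$ then $G\textbf{w}\sim\calD_w$ for every $G\in\calG$, so matching means and covariances gives $G\mu_w=\mu_w$ and $G\Sigma_w G^T=\Sigma_w$ for all $G\in\calG$. Since $-I_d\in\calG$, the first identity yields $-\mu_w=\mu_w$, hence $\mu_w=0$, which is part (1). The second identity is equivalent to saying that $\Sigma_w$ commutes with every $G\in\calG$, and this is the only information I will need for part (2).

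For part (2) I would work in the block decomposition of $\Sigma_w$ along $\BR^d=V_1\oplus\cdots\oplus V_r$, writing its blocks as $\Sigma_w^{(ij)}:V_j\to V_i$. Substituting the sign-flip elements $G=\mathrm{diag}(\epsilon_1 I_{V_1},\dots,\epsilon_r I_{V_r})\in\calG$ with $\epsilon_k\in\{\pm1\}$ into $G\Sigma_w=\Sigma_w G$ forces $\epsilon_i\Sigma_w^{(ij)}=\epsilon_j\Sigma_w^{(ij)}$, so choosing opposite signs for $i\ne j$ kills every off-diagonal block. Thus $\Sigma_w$ is block-diagonal and preserves each $V_i$. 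It then remains only to show that each diagonal block $\Sigma_w^{(ii)}$ (which is symmetric because $\Sigma_w$ is) is a scalar multiple of the identity on $V_i$, using that $\Sigma_w^{(ii)}$ commutes with the entire orthogonal group $O(V_i)$.

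The hard part will be this last step: showing that a symmetric operator commuting with all of $O(V_i)$ must be scalar. I would argue that any eigenspace of $\Sigma_w^{(ii)}$ is $O(V_i)$-invariant, and since $O(V_i)$ acts transitively on the unit sphere of $V_i$ the only invariant subspaces are $\{0\}$ and $V_i$ (the standard representation is irreducible for $\dim V_i\ge 1$); a symmetric, hence diagonalizable, operator with no proper nonzero invariant eigenspace can have only one eigenvalue $\lambda_i$, so $\Sigma_w^{(ii)}=\lambda_i I_{V_i}$. Consequently $\Sigma_w=\bigoplus_i\lambda_i I_{V_i}$, so each $V_i$ lies inside the $\Sigma_w$-eigenspace $\bigoplus_{j:\lambda_j=\lambda_i}V_j$; by Definition \ref{def:alignment} this is precisely the assertion that $\Sigma_w$ is aligned with $\Sigma_x$, completing the proof. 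One-dimensional eigenspaces require no work, and the only place genuine care is needed is justifying this irreducibility argument for eigenspaces of dimension $\ge 2$.
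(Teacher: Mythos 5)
Your proof is correct, and while it rests on the same symmetry ingredients as the paper's — the identification $\calG = O(d_1)\times\cdots\times O(d_r)$ acting blockwise on $V_1\oplus\cdots\oplus V_r$, the sign-flip elements, and the transitivity of $O(d_i)$ on spheres in $V_i$ — it organizes them into a genuinely different argument. You compute the commutant explicitly: sign flips annihilate the off-diagonal blocks of $\Sigma_w$, and a Schur-type irreducibility argument (eigenspaces of the symmetric block $\Sigma_w^{(ii)}$ are $O(V_i)$-invariant, hence trivial or all of $V_i$) forces each diagonal block to be scalar, giving the complete classification $\Sigma_w = \bigoplus_i \lambda_i I_{V_i}$ from which alignment is read off. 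The paper never establishes block-diagonality; instead it argues containment directly: it takes an eigenvector $u$ of $\Sigma_w$, lying in an eigenspace $U$, with $pr_i(u)\neq 0$, uses the fact that every $G\in\calG$ preserves eigenspaces of $\Sigma_w$ together with the sign-flip element $M_i$ (equal to $I$ on $V_i$, $-I$ elsewhere) to conclude $pr_i(u)=(u+M_iu)/2\in U$, and then uses transitivity of $O(d_i)$ to sweep the entire sphere of radius $|pr_i(u)|$ in $V_i$ into $U$, whence $V_i\subseteq U$ by closure of $U$ under scalar multiplication. Your route is the more standard representation-theoretic one and yields strictly more explicit information — the exact form of every $\calG$-invariant covariance — at the cost of block-matrix bookkeeping; the paper's is a leaner, self-contained argument that proves only what the claim needs. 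Both are complete proofs of the statement.
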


\begin{claim} \label{claim:2}
The probability distribution of the weights $ w$ in the first
layer after $t$ iterations of training is invariant under $\calG$ (for any $t=0,1,...$).
\end{claim}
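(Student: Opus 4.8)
The plan is to prove Claim~\ref{claim:2} by induction on the iteration count $t$, establishing that the entire (stochastic) SGD training map is \emph{equivariant} under the action of $\calG$, and then combining this equivariance with the invariance of both the initialization and the data distribution. For $G\in\calG$, let $\Phi_G$ denote the linear action on the full parameter tuple $\theta=(w_1,\dots,w_m,b,\theta_{\geq 2})$ that sends each first-layer weight vector $w_k\mapsto Gw_k$ while fixing the biases and all deeper parameters $\theta_{\geq 2}$. Since $G$ is orthogonal, $\Phi_G$ is an orthogonal map of parameter space with $\Phi_G^{-1}=\Phi_G^{T}=\Phi_{G^{T}}$.

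The base case $t=0$ is immediate: the first-layer weights are initialized i.i.d.\ from an $O(d)$-invariant distribution, which is in particular invariant under $\calG\subseteq O(d)$, and the remaining parameters are untouched by $\Phi_G$. For the inductive step I would first record the \textbf{invariance of the pre-activations}: because $\langle Gw_k,\,Gx\rangle+b_k=\langle w_k,\,x\rangle+b_k$ for orthogonal $G$ (and, in the convolutional case, applying $G$ to every patch), the loss satisfies $L(\Phi_G\theta;\,Gx,y)=L(\theta;\,x,y)$ for every sample $(x,y)$. Differentiating this identity in $\theta$ and using $\Phi_G^{-1}=\Phi_G^{T}$ yields the \textbf{gradient equivariance} $\nabla_\theta L(\Phi_G\theta;\,Gx,y)=\Phi_G\,\nabla_\theta L(\theta;\,x,y)$. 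Since each SGD step is $\theta\mapsto\theta-\eta\,\nabla_\theta L$, a short induction then shows that running SGD from $\Phi_G\theta_0$ on the transformed data $\{(Gx_i,y_i)\}$ produces exactly $\Phi_G\theta_t$ at every step, for any fixed sequence of minibatches.

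To pass from this deterministic equivariance to a statement about distributions, I would observe that the transformed inputs are equal in distribution to the originals, $Gx_i\sim\mathcal{N}(0,G\Sigma_x G^{T})=\mathcal{N}(0,\Sigma_x)$ since $G\in\calG$, that the labels are independent of the inputs, and that the minibatch randomness is independent of $G$; hence the pair $(\Phi_G\theta_0,\{(Gx_i,y_i)\})$ has the same joint law as $(\theta_0,\{(x_i,y_i)\})$. Applying the deterministic training map to both and invoking the equivariance just established gives $\Phi_G\theta_t\stackrel{d}{=}\theta_t$; restricting to the first-layer weights and sampling $\textbf{w}$ uniformly from $\{w_1,\dots,w_m\}$ shows that $\calD_w$ is invariant under every $G\in\calG$, which is Claim~\ref{claim:2}. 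Combined with Claim~\ref{claim:1}---where $-I\in\calG$ forces $\mu_w=0$, the identity $G\Sigma_w G^{T}=\Sigma_w$ says $\Sigma_w$ commutes with $\calG=O(V_1)\times\cdots\times O(V_r)$, and Schur's lemma over $\BR$ (the standard representation of each $O(V_i)$ is real-irreducible with commutant $\BR$) forces $\Sigma_w$ to be block-scalar, i.e.\ aligned with $\Sigma_x$---this proves Proposition~\ref{prop:1}.

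I expect the main obstacle to be making the gradient equivariance fully rigorous while correctly threading through the internal randomness of SGD (minibatch selection and any reshuffling) and the convolutional case, where the same filter is shared across non-overlapping patches so that the transformation $x^{(p)}\mapsto Gx^{(p)}$ must be applied consistently at every spatial position; the distributional step also relies on the assumed independence of patch content across positions. The representation-theoretic core of Claim~\ref{claim:1}---identifying $\calG$ with the product of the orthogonal groups of the eigenspaces and verifying that the commutant of each standard orthogonal representation is exactly $\BR$ (including the borderline $O(2)$ case)---is conceptually the other place that needs care, though it is standard once set up.
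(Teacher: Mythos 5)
Your proposal is correct and follows essentially the same route as the paper's proof: invariance of the initial data distribution under $\calG$, loss invariance from $\langle Gw,\,Gx\rangle=\langle w,\,x\rangle$, gradient equivariance of each training step, and then pushing the distributional equality through the (equivariant) training map by induction on $t$. The only cosmetic difference is that you obtain the gradient equivariance by differentiating the invariance identity and using $\Phi_G^{-1}=\Phi_G^{T}$, whereas the paper phrases the same fact via invariance of $dL$ and of the Euclidean metric; these are the same argument.
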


\begin{proof}[Proof of Claim \ref{claim:1}]
 Since $-I \in \calG$, we have $\BE[ w] = \BE[- w]$, so we must have 
 $\BE[ w]=0$. For the second part, assume
 \[
   \BR^d = V_1 \oplus V_2 \oplus ... \oplus V_r
 \]
 is the orthogonal decomposition of $\BR^d$ into eigenspaces of $\Sigma_x$ and
 $d_i := \dim(V_i)$ are the dimensions of its parts. Then 
 \[
    \calG = O(d_1) \times O(d_2) \times ... \times O(d_r)
 \]
 where each $O(d_i)$ operates in the canonical way on $V_i$ and leaves the other
 parts invariant.
 
 By definition of $\calG$ we have $G^T \Sigma_w G = \Sigma_w$ for all $G\in \calG$.
 Then each $G\in\calG$  must also leave eigenspaces of $\Sigma_w$ invariant since
 for an eigenvector $u$ with $\Sigma_w u=\lambda u$, we have
 \[
    \Sigma_w (G u) = (\Sigma_w G) u = (G \Sigma_w) u = \lambda \cdot G u.
 \]
 We have to show that each $V_i$ is contained in an eigenspace of $\Sigma_w$.
 Assume to the contrary that a particular $V_i$ is not contained in any eigenspace
 of $\Sigma_w$.
 Since the eigenvectors of $\Sigma_w$ span $\BR^d$, there must be 
 an eigenvector $ u$ in an eigenspace $U$ of $\Sigma_w$ that is not orthogonal
 to $V_i$. We will show that $V_i \subseteq U$: 
 Let $M_i \in \calG$ be the matrix that is $I$ on $V_i$ and $-I$ on all other 
 $V_j$, $j\neq i$, and let $pr_i:\BR^d\arrow V_i$ be the orthogonal projection 
 onto $V_i$. Then by  assumption $pr_i(u) \neq 0$, but since 
 $pr_i(u) = (u + M_i u)/2$, we must also have $pr_i(u)\in U$. Since $O(d_i)$
 operates transitively on the set of all vectors $v\in V_i$ of a given length,
 and $\{I\}\times ... \times O(d_i) \times ... \times \{I\} \subseteq \calG$, all vectors
 of length $|pr_i(u)|$ in $V_i$ must as well be in $U$. Since $U$ is closed under
 scalar multiplication and $pr_i(u) \neq 0$, this means that indeed $V_i\subseteq U$.
\end{proof}

\begin{proof}[Proof of Claim \ref{claim:2}]
Each run of the given network is determined by sampled initial data: 
\begin{itemize}
    \item The initial weights $ w_1,..., w_M\in W = \BR^d$ in the first layer,
    \item the inputs $ x_1,..., x_N\in V = \BR^d$,
    \item the initial weights in the later layers, and biases in all layers,\\
          we will write them as one large vector $w' \in W' = \BR^{d'}$,
    \item the targets $y_1,...,y_N\in\calY$.
\end{itemize}
We will consider the targets as fixed and write the other initial data as
\[
   ( w_i, x_j, w')\in \calV := W^M \times V^N\times W'.
\]
The group $\calG\subseteq O(d)$ operates on $V=\BR^d$ and $W=\BR^d$, we 
also define its operation on $\calV$ by
\begin{equation}
  G: \calV \arrow \calV, 
  ( w_i,  x_j, w') \mapsto (G  w_i, G  x_j, w')
  \label{eq:GradientDescent}
\end{equation}
By our assumptions and the definition of $\calG$, this operation 
leaves the distribution of initial data invariant for any $G\in \calG$.
This proves the ``$t=0$'' case in Claim \ref{claim:2}, it remains to show that this invariance
is kept when we do one step of Gradient Descent, this will follow if we can show
that the update commutes with the operation of $\calG$.

At each step of the training the current state is given by a point 
in the  vector space $\calV$, the loss is a function
$   L:\calV \arrow \BR$
and one step of Gradient Descent is given by
\begin{equation*}
  w_i \mapsto w_i - \eps \cdot \nabla_{w_i} L(w_i, x_j,  w'), \qquad  w' \mapsto  w''
\end{equation*}
for some $w''\in W'$.
We have to show that this update commutes with the operation of $\calG$, 
i.e. for all $G\in\calG$
\begin{equation}
   G w_i \mapsto G w_i - \eps \cdot G \nabla_{w_i} L (w_i, x_j,  w'), \qquad
    w' \mapsto  w''
  \label{eq:updateEquivariant}
\end{equation}
for the same $w''$.

Since the first layer only gets information about the $w_i$ and $x_j$ via 
their scalar products $\langle w_i, x_j \rangle$, and for any $G\in O(d)$
we have $\langle G w_i, G x_j \rangle = \langle w_i, x_j \rangle$, the operation 
\eqref{eq:GradientDescent} of $G\in\calG$ on
$\calV$ leaves the loss function $L:\calV\arrow \BR$ invariant. For the same
reason also the updates of the biases and later
weights $w'$ are unaffected by the operation of $\calG$.
So we only have to prove the update equations for $G w_i$, i.e. show that
\[
   \nabla_{w_i} L (G w_i, G x_j,  w') = G \nabla_{w_i} L(w_i, x_j,  w')
\]
The gradients $\nabla_{w_i} L$ in the $w_i$--directions are part of the full 
gradient $\nabla L$ (which is a vector in $\calV$, so it also contains 
the derivatives with respect
to the points $x_j$ and the weights $w'$ of the later layers).
Hence it is sufficient (or even stronger) if we can show
\begin{equation}
   \nabla L(G p) = G \nabla L(p)\qquad \hbox{for each point}\ p\in\calV
   \ \hbox{and}\ G\in\calG.
   \label{eq:GNabla}
\end{equation}

Since $L$ is invariant under $G$, also the differential form $dL$ is
invariant: $G^* dL = dL$.

The standard Euclidean metric on $\calV=\BR^{d\times M + d \times N + d''}$ 
provides a translation between differential forms and vector fields that 
determines $\nabla L$ from $dL$ and vice versa by the condition
\[
   \langle \nabla L (p), v \rangle = dL|_p(v) \qquad \hbox{for all }\ v\in\calV.
\]
Since the Euclidean metric on $\calV$ is invariant under $G\in\calG$, we have for each
vector $v\in\calV$
\[
   \langle \nabla L (G p) , G v \rangle
   = dL|_{G p} (G v)
   = dL|_p (v)
   = \langle \nabla L (p), v \rangle
   = \langle G \nabla L (p), G v \rangle
\]
Since $G\in \calG$ is invertible, $Gv$ can be any vector in $\calV$, so 
\eqref{eq:GNabla} follows.
\end{proof}

Note that the same proof argument holds for many other optimization techniques, not only for gradient descent. For example, it does not matter whether we use Momentum, AdaGrad, Adam, Nesterov, weight decay, etc. in the gradient descent.
However, the optimization needs to respect symmetries from $O(d)$, i.e. it cannot make 
use of the special coordinate system for the input. This excludes for example Exponentiated Gradient \cite{kivinen1997exponentiated}.

Similarly, the proof is also independent of the loss function used, if that
loss only involves the output of the network: Since the proof only makes use of 
symmetries $\langle G x, G  w \rangle = \langle  x,  w \rangle$ in
the first layer, it
is not affected by what we do in the later layers or at the output level.
However, if the loss function includes regularization terms that involve the 
weights in the first layer, these terms also need to be invariant under the 
orthogonal group. This means the proof still applies if we use $L2$ regularization, but $L1$ regularization does make use of the special coordinate system and is not
invariant under rotations, so the proof arguments would not hold anymore if we 
used it in the first layer.

\clearpage
%==========================================================================
\section{Measuring Alignment}
%==========================================================================

\label{sec:MeasuringAlignment}
\subsection{How well can we measure eigenvectors?}
We do not observe the covariance matrices directly, but estimate them from samples 
of the corresponding distributions. This estimate has a variance that results in a 
variance of the computed eigenvectors. As a first intuitive experiment we compare
the eigenvectors obtained from two disjoint samples:

{\bf Data}
    We compare the covariance matrices $\Sigma_x$ and $\Sigma_x'$ obtained from $5\times 5$ patches of the first and second 30\,000 images
    in CIFAR10. The first eigenvectors are extremely well aligned:

\vskip0.8ex    
\centerline{\small
\begin{tabular}{@{}rcc@{}}
\hline 
  i & $\sigma_i^2$ & $|\langle e_i, e'_i \rangle|$ \\
\hline
1 & 12.3\phantom{55} & 0.9999996 \\
2 & 1.45 & 0.99995 \\
3 & 1.18 & 0.99992 \\
4 & 1.06 & 0.99994 \\
5 & 0.37 & 0.99969 \\
~6 & 0.32 & 0.99970 \\
\hline
\end{tabular}\hspace{1em}
\begin{tabular}{@{}rcc@{}}
\hline 
  i & $\sigma_i^2$ & $|\langle e_i, e'_i \rangle|$ \\
\hline
7 & 0.28\phantom{5} & 0.99985 \\
8 & 0.23\phantom{5} & 0.99988 \\
9 & 0.128 & 0.99010 \\
10 & 0.126 & 0.99030 \\
11 & 0.115 & 0.99969 \\
12 & 0.096 & 0.99999 \\
\hline
\end{tabular}
}

Also the other eigenvectors are very well aligned, all scalar products obtained were 
above 0.975. So 30,000 images are sufficient to estimate the eigenvectors of $\Sigma_x$ to a high accuracy.

{\bf Weights}
We use the same ResNet experiment as for Figure~\ref{fig:filters}, described in Section~\ref{sec:fig3settings}. We use two disjoint groups 
of 30 randomly initialized networks, trained in the same way on CIFAR10 with random labels.

\centerline{
\small
\begin{tabular}{@{}rcc@{}}
\hline
  i & $\sigma_i^2$ & $|\langle e_i, e'_i \rangle|$ \\
\hline
1 & 0.281 & 0.974 \\
2 & 0.202 & 0.958 \\
3 & 0.154 & 0.916 \\
4 & 0.141 & 0.934 \\
5 & 0.107 & 0.861 \\
~6 & 0.104 & 0.861 \\
\hline
\end{tabular}\hspace{1em}
\begin{tabular}{@{}rcc@{}}
\hline
  i & $\sigma_i^2$ & $|\langle e_i, e'_i \rangle|$ \\
\hline
7 & 0.085 & 0.906 \\
8 & 0.078 & 0.946 \\
9 & 0.077 & 0.924 \\
10 & 0.072 & 0.811 \\
11 & 0.063 & 0.598 \\
12 & 0.059 & 0.645 \\
\hline
\end{tabular}
}

So we can only measure the most important eigenvectors with reasonable accuracy.
Of course it is not unexpected that we can determine the covariance of image patches much better 
than the covariance of weights: In the above example, we used 
$30000 \times 30 \times 30$ = 27 million image patches, but only $30\times 64 = 1920$ weight 
vectors (filters). However, apart from the number of examples, also the type of information
we want to extract from the covariance matrix determines the accuracy of our measurement.

In particular, when two eigenvalues are close together (like $\sigma_5, \sigma_6$ above), it may be
difficult to determine the exact eigenvector. However, the 2-dim space spanned by both eigenvectors
is relatively stable -- in the above example the expansion of $e_5, e_6$ in terms of the basis
$e'_i$ is
\begin{eqnarray*}
   e_5 &=& 0.86 e'_5 - 0.36 e'_6 + ...(\mathrm{smaller\ terms})...\\
   e_6 &=& 0.36 e'_5 + 0.86 e'_6 + ...(\mathrm{smaller\ terms})...
\end{eqnarray*}

A similar observation can be made in figure \ref{fig:filters}: 
The first eigenvalues of $\Sigma_w$ 
% and $\Sigma_x$ are
%
% \hskip 1cm
% \begin{tabular}{@{}rcc@{}}
% \hline
%   i & $\tau_i^2$ &  $\sigma_i^2$\\
% \hline
% 1 & 0.019 & 11.3 \\
% 2 & 0.017 & 1.45 \\
% 3 & 0.011 & 1.10 \\
% 4 & 0.010 & 1.05 \\
% 5 & 0.009 & 0.39 \\
% \hline
% \end{tabular}
are 0.019, 0.017, 0.011, 0.010, 0.009,..., with the first two close together; the corresponding
two eigenvectors of $\Sigma_w$ given on the right of figure \ref{fig:filters} were:

\hskip 1cm
\includegraphics[width=0.05\textwidth]{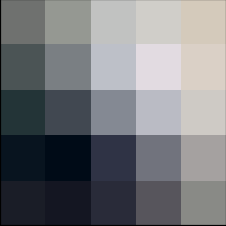} 
\raisebox{7pt}{\ $\approx$\ }
\includegraphics[width=0.05\textwidth]{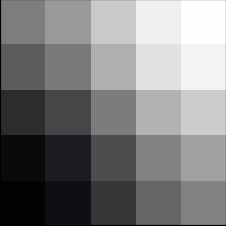}
\raisebox{7pt}{$= \ \ 0.65\  \times$} \includegraphics[width=0.05\textwidth]{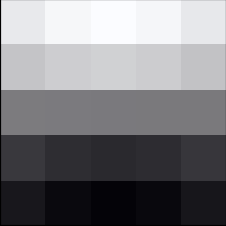}
\raisebox{7pt}{$ - \ 0.71\ \times\ $}  
\includegraphics[width=0.05\textwidth]{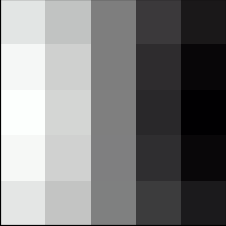}\ \hskip 1cm and

\hskip 1cm
\includegraphics[width=0.05\textwidth]{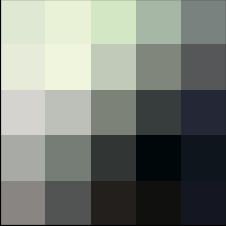} 
\raisebox{7pt}{\ $\approx$\ }
\includegraphics[width=0.05\textwidth]{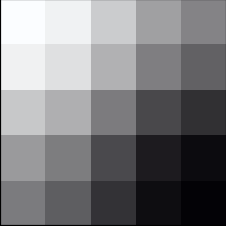}
\raisebox{7pt}{$ = \ \ 0.71 \times$} \ \includegraphics[width=0.05\textwidth]{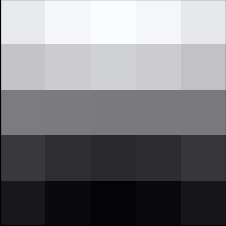}
\raisebox{7pt}{$ + \ 0.67 \times\ $} \ 
\includegraphics[width=0.05\textwidth]{images/pca_new/pca-1-3.png}

So the vector space of these two eigenvectors of $\Sigma_w$ is well aligned to the vector space 
of two eigenvectors of $\Sigma_x$, but its given basis is rotated compared to the eigenvectors 
of $\Sigma_x$.

This is a general problem - if two eigenvalues are close to each other, the
direction of eigenvectors is not measurable in practice, or if they are even
the same, also theoretically individual eigenvectors are not defined, 
only the higher dimensional eigenspace is. 
We will investigate this problem of statistical sampling uncertainty 
closer in \ref {subsec:FisherRiemann}.

Apart from the statistical sampling uncertainty we may also consider noise
in the images themselves. However, given that the two computations of the
eigenvectors agreed so well for the data covariance, we can expect that 
this error is small compared to the sampling error for the weight covariance.

\subsection{Motivational definition of misalignment}

Any attempt of defining ``misalignment measures'' naively by using the 
direction of eigenvectors (e.g.\ ``misalignment = sum of angles
between corresponding eigenvectors'') runs into the problem mentioned above:
It would be undefined when we have eigenspaces of dimension $>1$ or at least
discontinuous / not measurable / not informative in practice when two of the
eigenvalues are close. So instead we will construct a ``distance to an aligned
matrix'' that takes into account what can actually be measured.

In \cite{Helmholtz1896} a Riemannian metric is defined on the space of color
perceptions. In this metric, the length of a path in color space is given by the minimal
number of colors on this path connecting start and end such that each
color is indistinguishable (by a human) from the next one.

We can use a similar construction for covariance matrices: Fix a (large) $n$, then 
we call two matrices indistinguishable, if after sampling $n$ samples
from $\calN(0,\Sigma_k)$ we most of the time cannot reject the hypothesis that 
the correct covariance matrix was $\Sigma_{k+1}$.
The minimal number of ``indistinguishable'' matrices connecting start to end of a 
curve is (in good approximation) proportional
to $\sqrt n$, to get a path length independent of $n$, we divide the number 
by $\sqrt n$ and take the limit $n\arrow \infty$.
The misalignment could now be defined as the minimal path length of a path between $B$ 
and a matrix $\Sigma$ that is aligned to $A$. (The exact definition
would need to specify ``indistinguishable'', i.e. the fraction of times we can reject 
the hypothesis and the confidence level used in this rejection. The path lengths corresponding
to different definitions would differ by a multiplicative constant.)

The Riemannian metric defined in this way is given locally by the Fisher Information, which
can be easily computed in this case. However, it seems there is not a simple known formula
for the resulting (global) distance between two points. 
To simplify computations and proofs, we will use instead the upper bound to the square length given by
the symmetrized Kullback--Leibler divergence. Like the square length of the shortest path between
$A$ and $B$ it can be expressed as an integral over the Fisher Information, but the path is not
the usual shortest path (geodesic) that is traversed with constant speed, 
but the straight line $(1-t)\cdot A + t \cdot B$. For small distances this is a good approximation,
but it will give a larger value in general.

\subsection{Fisher information and Riemannian manifold of symmetric positive definite matrices}
\label{subsec:FisherRiemann}

As a concrete toy example for the problem of close eigenvalues observed above, 
we take the 2-dimensional normal distribution with $\mu=0$ and
$\Sigma$ the diagonal matrix with entries $(1,\lambda)$.
We generate $n=100$ i.i.d.\ samples $\vec w_1,...,\vec w_n$ of this distribution, measure
their empirical covariance $\hat\Sigma_w$, and plot the direction of their eigenspaces.
Repeating this 50 times, we can visualize the distribution of these directions (Figure~\ref{fig:close-ev2}, first row).

\begin{figure}[tbp]
    \centering
\includegraphics[width=0.85\textwidth]{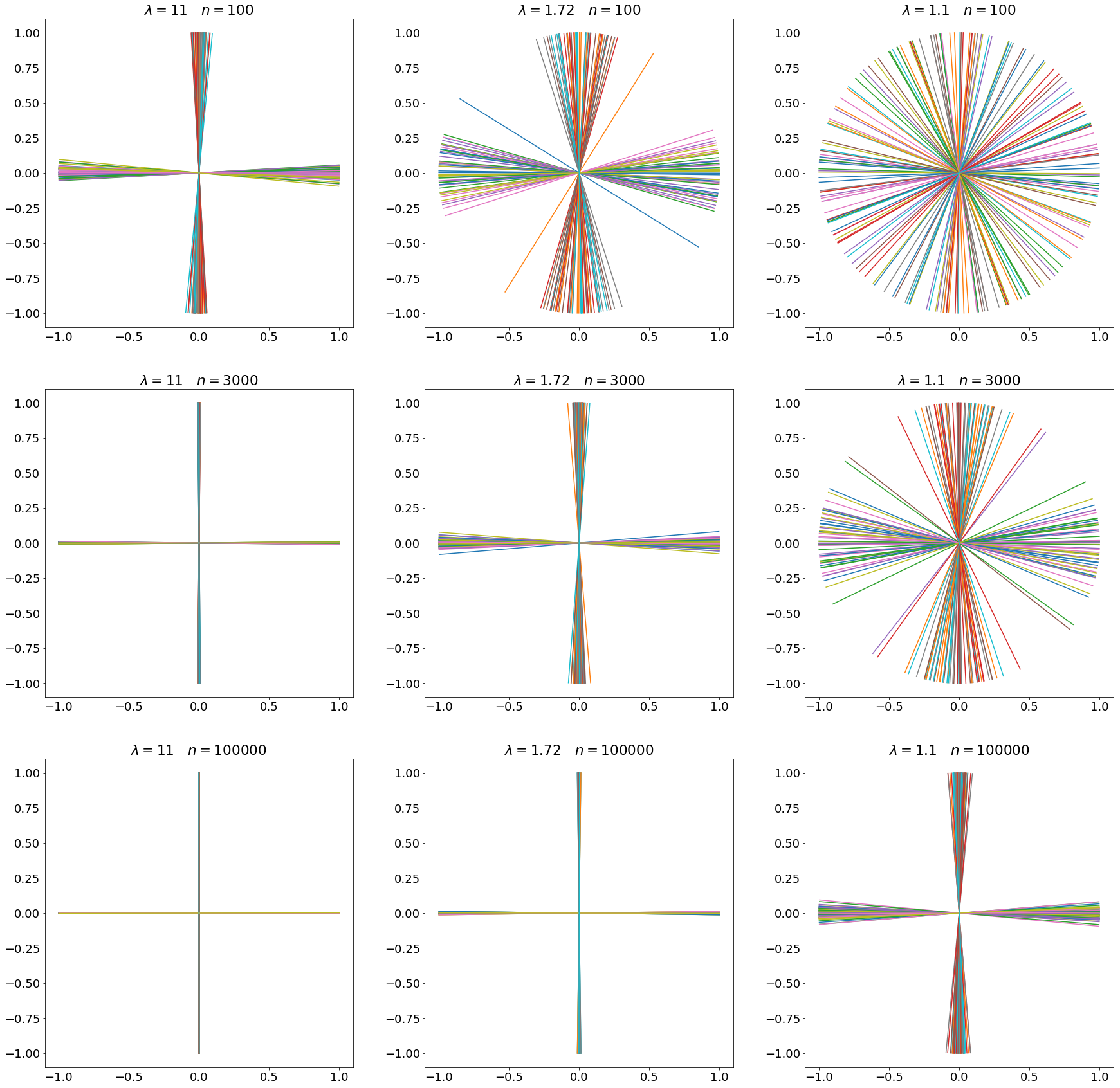}
    \caption{Visualization of the variation of estimating the directions of the eigenspaces for 2-dimensional normal distribution with $\mu=0$ and
$\Sigma$ the diagonal matrix with entries $(1,\lambda)$ for three different values of $\lambda$ and varying number of samples $n$.}
    \label{fig:close-ev2}
\end{figure}

While we get a reasonable approximation to the direction of the eigenspaces of $\Sigma$
(i.e. the coordinate axes) for large $\lambda$, the eigenvector directions of $\hat \Sigma$ 
become less well-aligned as $\lambda = \sigma_2^2$ gets close to $1= \sigma_1^2$.
So if we were to measure ``whether the diagonal matrix $diag(1,\lambda)$ is aligned with itself''
by computing two empirical covariance matrices (from 2 independent sets of sampled data), 
their eigenvectors, and angles between the two 
empirical results, we would think that these two empirical covariances are not aligned
for $\lambda\approx 1$, although they actually come from the same distribution.

Mathematically, the problem is that any ``angles between eigenvectors'' measure would be
undefined for matrices with an eigenspace of dimension $> 1$. We could fix that e.g.\ by 
using the choice of eigenvectors that gives the smallest possible result, but then 
the resulting function would not be continuous around such matrices, i.e.\ we may
need estimates with unlimited precision to get the alignment measure with a fixed precision.

On the other hand, for any fixed $\sigma_1^2 \neq \sigma_2^2$ we can in theory make $n$ large enough 
to get the sampling error as small as we want, e.g. in the above example we can go from $n=100$ to
$n=100\,000$ (Figure~\ref{fig:close-ev2}).

We can say that for $\lambda_1 \approx \lambda_2$ we have less information per sample
from the normal distribution $\calN(0,\Sigma)$ about the rotation 
angle $\alpha$ than for $\lambda_1  \gg \lambda_2$. 
This is formalized by the Fisher information, we now summarize some of its
properties for the case we are interested in.

We denote by $M$ the manifold of symmetric positive definite $d\times d$ matrices,
it is an open cone in the $D := \frac{d(d+1)}{2}$ dimensional vector space
of symmetric matrices.
We identify the points $\Sigma\in M$ also with their corresponding
normal probability distributions $\calN(0,\Sigma)$. Given $n$ samples
$x_1,...,x_n\in \BR^d$, the empirical covariance matrix 
$\frac 1n \sum_{i=1}^n x\cdot x^T \in M$ is the maximum likelihood estimator.
Given some local coordinates $\theta_1,...,\theta_D$ around a matrix $\Sigma$,
the Fisher information of $\theta_k$ at $\Sigma$ is defined as
\begin{equation}
  I(\theta_k) := \BE_{x\sim \calN(0,\Sigma) }\left[ \left(  
    \frac{\partial \log p(x| \theta_k)}{\partial \theta_k}
  \right)^2 \right]
  \label{eq:FisherInformation}
\end{equation}
We are interested in the variance of the maximum likelihood estimate 
$\hat \theta_k$ when we estimate the parameter $\theta_k$ from $n$ samples
$x_1,..,x_n$, it can be approximated as
\[
   Var(\hat \theta_k) \approx \frac{1}{n \cdot I(\theta_k)},
\]
and this approximation becomes exact as $n\arrow \infty$ (see e.g. \cite{wasserman2004all}, chapter 9.7. for a precise statement).
In our toy example we are interested in the rotation angle $\alpha$, and
local coordinates would be $\alpha, \lambda_1, \lambda_2$
(we assume for the moment that $\lambda_1\neq \lambda_2$ at $\Sigma$). In this
case one can compute the Fisher information about $\alpha$ (see e.g. \cite{costa2015fisher}) as
\[
   I(\alpha) = \frac{(\lambda_1 - \lambda_2)^2}{\lambda_1\cdot \lambda_2}.
\]

In the Figure~\ref{fig:close-ev2}, the images on the diagonal look similar, and indeed we have chosen
$\lambda$ such that the Fisher Information about $\alpha$ of $n$ samples 
is approximately the same: In each case 
$n\cdot (\lambda-1)^2/(\lambda\cdot 1)$ is between 904.1 and 909.1,
which gives a standard deviation of about 1/30 
(corresponding to about 2 degrees) for the rotation angle $\alpha$, and 
indeed generating 100 directions from a normal distribution of $\alpha$ with
a standard deviation of 1/30 matches the above plots on the diagonal.

This suggests that 
\[
   \delta = \alpha \cdot \sqrt{I(\alpha)}
\]
is an appropriate distance measure between matrices that differ by a rotation 
of angle\ $\alpha$: When two matrices $A, B$ have ``distance'' $\delta$, this 
means that for large $n$ we can put about 
$k = \lfloor \delta \cdot \sqrt n\rfloor$ points 
$\Sigma_1, \Sigma_2,..., \Sigma_k$ between $A=\Sigma_0$ and $B=\Sigma_{k+1}$
such that each pair $\Sigma_i$ and $\Sigma_{i+1}$ are within measurement error 
of each other when our measurement consists of estimating the covariance from a sample of $n$ points.

So far, we have defined the Fisher information in a particular coordinate system
on the manifold $M$. However, the definition \eqref{eq:FisherInformation}
depends only on the tangent vector $\partial/\partial \theta_k$. 
So the Fisher information is actually assigned to tangent vectors of the manifold,
is independent of the coordinate system, and it turns out 
to satisfy the properties of a Riemannian metric, so it gives for any
tangent vector $v \in T_\Sigma(M)$ a ``length'' $\sqrt{I(v)}$.
This is a special case of the general theory of Information Geometry, see e.g.
\cite{amari2016information,ay2017information}.

% ----------------------------------------
\subsection{Defining misalignment measures}

While the Fisher information gives a local distance measure (i.e.\ a length
of tangent vectors), we rather need a global
distance measure $D(\Sigma_1,\Sigma_2)$ between points. For that we integrate the 
Fisher Information along the straight line 
\begin{equation}
   \gamma: t \mapsto (1-t)\cdot \Sigma_1 + t \cdot \Sigma_2 \qquad \hbox{for} \ t\in [0,1].
    \label{eq:straightLine}
\end{equation}
If instead of \eqref{eq:straightLine} we had used the real shortest path (geodesic) of the Riemannian metric, 
which goes in constant speed from $t=0$ to $t=1$, this would give the square length of the shortest
path. Since in general
\eqref{eq:straightLine} is not the shortest path, and the parameterization does not have 
constant derivative with respect to the standard connection, this can give slightly larger results.
Of course, for small distances it still gives the same result in first order.

In terms of information geometry, the straight line is the $e$--geodesic, and the integration
of the Fisher information against the $e$--geodesic gives the same result as integrating 
along the $m$--geodesic:
The symmetrized Kullback--Leibler
divergence between $p$ and $q$ (see e.g. Theorem 3.2. of \cite{amari2016information}, also compare
section 4.4.2. in \cite{ay2017information}).
The symmetrized Kullback--Leibler divergence exists in two normalizations: With or without the 
factor $1/2$. We are using the version with $1/2$
\[
   D(\Sigma_1, \Sigma_2) :=  \frac{ 
       D_{KL} \Big(\calN(0,\Sigma_1) \ ||\  \calN(0,\Sigma_2)\Big) \ 
                          + \ D_{KL} \Big(\calN(0,\Sigma_2) \ ||\ \calN(0,\Sigma_1)\Big)
                          }{2}
\]
which is equal to half of the integral of the Fisher information:
\[ 
    D(\Sigma_1, \Sigma_2) = \frac{1}{2}\int_0^1 I\left( \frac {\partial \gamma}{\partial t}(t) \right) dt
\]

So we will use this distance measure as a basis for our (mis)alignment measure.
It can be expressed analytically as
\[
    D(\Sigma_1, \Sigma_2) = \frac{\textbf{tr}(\Sigma_1^{-1}\Sigma_2 + \Sigma_2^{-1}\Sigma_1)}{2} - d
\]

While it would be possible to use different distance measures that also have
the Fisher information as the infinitesimal version (e.g. the usual, asymmetric
Kullback--Leibler divergence), this definition has the additional benefit that
it is invariant under scaling: $D(\lambda\Sigma_x, \lambda\Sigma_w) 
= D(\Sigma_x, \Sigma_w)$, 
and this allows the simple definition of the alignment measure given in 
Section~\ref{sec:PCA} (for more general distance measures one would need
to restrict the $\Sigma$ to a subset e.g.\ by requiring $\textbf{tr}(\Sigma)=1$ and
write $\inf_{\Sigma, \lambda>0} D(\Sigma, \lambda\cdot \Sigma_w)$ to get
a result $>0$ when the matrices are not aligned).

So we define the ``misalignment'' score for two positive definite symmetric matrices $A, B$ as

\begin{equation*}
   M(A, B) := \inf_{\Sigma\succ 0\ \hbox{ aligned with}\ A} 
     D(\Sigma, B)
   = \inf_{\Sigma\succ 0\ \hbox{ aligned with}\ A} 
   \left\{\frac{\textbf{tr}(\Sigma^{-1}B + B^{-1}\Sigma)}{2} - d\right\},
\end{equation*}
which was our definition in Section~\ref{subsec:Gaussian}.

\begin{proposition}\label{prop:AlignmentProperties}
This misalignment measure has the following properties:\\
For all positive definite symmetric matrices $A, B$
\begin{enumerate}

    \item $M(A, B) \geq 0$
    \item $M(A, B) = 0$ if and only if $B$ is aligned with $A$.
    \item $M(A, B)$ is continuous in $B$.
    \item Equivariance under orthogonal group:
          $M(U A U^T, U B U^T) = M(A, B)$ for $U \in O(d)$
    \item Invariance under scalar multiples of $B$:
          $M(A, \lambda B) = M(A, B)$ for 
          $\lambda > 0 $
     \item $M(A, B)$ only depends on the eigenspaces of $A$.
     \item $M(A,B) +d = \sum_{i=1}^r \sqrt{ \textbf{tr}(B|_{V_i}) \cdot \textbf{tr}(B^{-1}|_{V_i})}$,
   where $V_1 \oplus ... \oplus V_r$ is the orthogonal decomposition of $\BR^d$ into eigenspaces of $A$,
   and $B|_{V_i}$ is the linear map $V_i\arrow V_i, \textbf{v}\mapsto pr_i(B(\textbf{v}))$ with $pr_i$ the 
   orthogonal projection $\BR^d\arrow V_i$. 
\end{enumerate}
The function $M(A, B)$ is not continuous in $A$, and there
cannot be a function $M(A, B)$ that is continuous in both arguments
and still satisfies condition 2.
\end{proposition}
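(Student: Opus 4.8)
The plan is to derive every property from the closed-form expression in item~7, which I would prove first. The starting point is an explicit description of the feasible set in the infimum: I would show that a positive definite $\Sigma$ is aligned with $A$ if and only if $\Sigma = \sum_{i=1}^r \mu_i P_i$ with all $\mu_i > 0$, where $V_1 \oplus \cdots \oplus V_r$ is the eigenspace decomposition of $A$ and $P_i$ is the orthogonal projection onto $V_i$. Indeed, ``$\Sigma$ aligned with $A$'' forces $\Sigma$ to act as a scalar on each $V_i$ (each $V_i$ lies in a single eigenspace of $\Sigma$); since the $V_i$ are mutually orthogonal and $\Sigma$ is symmetric, this makes $\Sigma$ block-scalar, and the converse is immediate.

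Next I would substitute this parametrization into $D(\Sigma,B)=\tfrac12\textbf{tr}(\Sigma^{-1}B+B^{-1}\Sigma)-d$. Using $P_i^2=P_i$, cyclicity of the trace, and $\textbf{tr}(P_iB)=\textbf{tr}(B|_{V_i})$, the objective splits as a sum over $i$ of $\tfrac12\big(\mu_i^{-1}\textbf{tr}(B|_{V_i})+\mu_i\,\textbf{tr}(B^{-1}|_{V_i})\big)$. Each summand is minimized independently over $\mu_i>0$ by the AM--GM inequality, giving $\sqrt{\textbf{tr}(B|_{V_i})\,\textbf{tr}(B^{-1}|_{V_i})}$, attained at $\mu_i=\sqrt{\textbf{tr}(B|_{V_i})/\textbf{tr}(B^{-1}|_{V_i})}$ (positive since both traces are positive for $B\succ0$, so the infimum is a minimum). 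Summing yields item~7. Items 5 and 6 then follow by inspection: the right-hand side depends on $A$ only through its eigenspaces (item~6), and replacing $B$ by $\lambda B$ multiplies $\textbf{tr}(B|_{V_i})$ by $\lambda$ and $\textbf{tr}(B^{-1}|_{V_i})$ by $\lambda^{-1}$, leaving each square root fixed (item~5). Item~4 is cleanest from the definition directly: $\Sigma\mapsto U\Sigma U^T$ is a bijection of the two feasible sets and $D$ is invariant under conjugation by $U\in O(d)$.

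The main technical point is items 1 and 2, which I would obtain from the inequality $\textbf{tr}(B|_{V_i})\,\textbf{tr}(B^{-1}|_{V_i})\ge(\dim V_i)^2$. The natural temptation --- treating $B^{-1}|_{V_i}$ as $(B|_{V_i})^{-1}$ and invoking scalar AM--GM --- fails, because $P_iB^{-1}P_i\neq(P_iBP_i)^{-1}$ in general; this is the one step that genuinely needs care and is the main obstacle. Instead I would write $\textbf{tr}(B|_{V_i})=\|B^{1/2}P_i\|_F^2$ and $\textbf{tr}(B^{-1}|_{V_i})=\|B^{-1/2}P_i\|_F^2$ and apply Cauchy--Schwarz for the Frobenius inner product, using $\langle B^{1/2}P_i,\,B^{-1/2}P_i\rangle_F=\textbf{tr}(P_i)=\dim V_i$. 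Summing $\sqrt{\textbf{tr}(B|_{V_i})\,\textbf{tr}(B^{-1}|_{V_i})}\ge\dim V_i$ over $i$ and subtracting $d=\sum_i\dim V_i$ gives item~1. For item~2, equality throughout forces $B^{1/2}P_i=c_iB^{-1/2}P_i$, i.e.\ $BP_i=c_iP_i$ with $c_i>0$, so $B$ acts as the scalar $c_i$ on $V_i$ for every $i$ --- which is exactly the statement that $B$ is aligned with $A$; the converse computation recovers $M=0$. Item~3 is then immediate from item~7 with $A$ (hence the $V_i$) held fixed: each $\textbf{tr}(P_iBP_i)$ is linear in $B$, each $\textbf{tr}(P_iB^{-1}P_i)$ is continuous on $\{B\succ0\}$, and the square root is continuous on the positive reals.

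Finally, both negative statements rest on the fact that the eigenspace decomposition of $A$ jumps when eigenvalues collide. Take $B=\mathrm{diag}(1,2)$ and $A_\epsilon=\mathrm{diag}(1,1+\epsilon)$, so $A_\epsilon\to I$ as $\epsilon\to0^+$. For $\epsilon>0$ the matrix $B$ is aligned with $A_\epsilon$ (both are diagonal, and $A_\epsilon$ has the two coordinate axes as its eigenspaces), whereas $B$ is not aligned with $I$ (the only eigenspace $\BR^2$ of $I$ is not contained in an eigenspace of the non-scalar $B$). Hence any function obeying item~2 must vanish along the sequence yet be nonzero at the limit, so it cannot be continuous in $A$; applying this to $\tilde M=M$ itself (which satisfies item~2 by the above) shows $M$ is discontinuous in $A$, and applying it to an arbitrary candidate shows no function continuous in both arguments can satisfy item~2. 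Concretely, item~7 gives $M(A_\epsilon,B)=0$ for all $\epsilon>0$ while $M(I,B)=3/\sqrt2-2>0$.
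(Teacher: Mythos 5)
Your proposal is correct, and its core components coincide with the paper's proof: the same parametrization of the feasible set (aligned $\Sigma$ acts as a scalar $\mu_i$ on each $V_i$), the same splitting of the trace and per-block minimization $\inf_{\lambda>0}\lambda^{-1}b+\lambda c = 2\sqrt{bc}$ yielding item 7, and essentially the same counterexample $B=\mathrm{diag}(1,2)$, $A=\mathrm{diag}(1,\lambda)$ for the discontinuity in $A$. Where you genuinely diverge is in items 1 and 2. The paper gets item 1 and the ``if'' part of 2 directly from the definition, using that $D(\Sigma,B)=\tfrac12\textbf{tr}(\Sigma^{-1}B+B^{-1}\Sigma)-d$ is a symmetrized Kullback--Leibler divergence, hence $D\geq 0$ with equality iff $\Sigma=B$; the ``only if'' part of 2 then follows because the infimum is attained at a specific aligned $\Sigma^*$ (established in the proof of 7), so $M(A,B)=0$ forces $B=\Sigma^*$, which is aligned. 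You instead derive both items from the closed form in item 7 via the Frobenius--Cauchy--Schwarz inequality $\textbf{tr}(B|_{V_i})\,\textbf{tr}(B^{-1}|_{V_i})\geq(\dim V_i)^2$, with the equality case $BP_i=c_iP_i$ giving alignment. Your route is purely linear-algebraic and self-contained --- it never needs the divergence interpretation of $D$ nor the attainment argument --- and it correctly flags the pitfall that $B^{-1}|_{V_i}\neq(B|_{V_i})^{-1}$, which a naive per-block AM--GM would stumble on. The paper's route is shorter given the information-geometric framing already set up in the appendix, and its attainment-based argument for the ``only if'' direction transfers unchanged to any divergence with the property $D(\Sigma,B)=0\Leftrightarrow\Sigma=B$, whereas your equality analysis is tied to the specific closed form. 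Both are complete and rigorous.
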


\begin{proof}
1, 4, 5, 6, and the ``if'' part of 2 follow directly from the definitions.\\
3 follows from 7. We will see in the proof of 7 that the infimum is obtained for a
particular matrix $\Sigma$, and from that also the ``only if'' part of 2 follows.\\
Formula 7: We use the orthogonal decomposition $V_1 \oplus ... \oplus V_r$ of $\BR^d$ into eigenspaces of $A$.
A positive definite symmetric matrix $\Sigma$ aligned with $A$ is given by its eigenvalues $\lambda_i$ on the
subspaces $V_i$. For a linear map $f:\BR^d\arrow\BR^d$ we have
\[
   \textbf{tr}(f) = \sum_{i=1}^r  \textbf{tr}(f|_{V_i})
\]
So the definition of $M(A,B)$ can be rewritten
\begin{eqnarray*}
  M(A,B) +d &=&  \frac{1}{2} \cdot \inf_{\lambda_1,...,\lambda_r>0} \ \ \sum_{i=1}^r 
                \textbf{tr}\left(\lambda_i^{-1} B + \lambda_i B^{-1}\right)|_{V_i} \\
           &=& \frac{1}{2} \cdot \sum_{i=1}^r  \ \ \inf_{\lambda>0}\ \ \lambda^{-1} \textbf{tr}(B|_{V_i}) + \lambda \textbf{tr}(B^{-1}|_{V_i})
\end{eqnarray*}
Since $B$ is positive definite, $B^{-1}$ is positive definite as well. When $B$ is positive definite, then 
also $B|_V$ is positive definite, so we have $\textbf{tr}(B|_{V_i})>0$ and $\textbf{tr}(B^{-1}|_{V_i})>0$.
Thus for $b:= \textbf{tr}(B|_{V_i})$ and $c:= \textbf{tr}(B^{-1}|_{V_i})$ the function 
$ \lambda^{-1} b + \lambda c$
has a minimum for a finite positive $\lambda$,  and it is
\[
    \argmin_{\lambda>0}  \lambda^{-1} b + \lambda c = \sqrt{b/c}
\]
as is seen e.g. by comparing the derivative with zero, and hence 
\[
   \inf_{\lambda>0}\ \ \lambda^{-1} b + \lambda c
   = \min_{\lambda>0}\ \  \lambda^{-1} b + \lambda c = 2\cdot \sqrt{b\cdot c},
\]
which gives the above formula 7. 

$M(A,B)$ cannot be continuous in $A$ if condition 2 should hold: 
Take $B$ the diagonal matrix with entries 1 and 2, and for $A$ consider the diagonal matrices with entries 1 and $\lambda$.
Then for $\lambda$ = 1 the matrix $B$ is not aligned with $A$, but it is for all $\lambda \neq 1$.
Therefore we must have $M(diag(1,1), B) > 0$, but $M(diag(1,\lambda), B) = 0$ for all $\lambda \neq 0$, so $A$ would not be continuous with respect to $A$.
\end{proof}

\clearpage
\section{The shape of the transfer function \texorpdfstring{$f(\sigma)$}{f(sigma)}}

\subsection{Gaussian centered input}
\label{subsec:E_Gaussian}
In the ``ideal'' case of Gaussian centered input, we experimentally observe
curves that ``look continuous'', see Figure~\ref{fig:MoreGaussianFsigma}.
For this figure, we used different settings to obtain curves that go only down (left), rise 
and then fall (middle) or only go up (right).

We always used Gaussian input $\calN(0,\Sigma_x)$ with mean 0 and a diagonal matrix for the covariance $\Sigma_x$ (which is no restriction of
generality, since applying an orthogonal matrix to the input does not change the dynamic
of the neural networks, and any symmetric positive definite matrix can be written as
$O^T L O$ with diagonal matrix $L$ and orthogonal matrix $O$).
The labels are always uniformly randomly sampled from $\{0,1,...,9\}$. 
The networks are fully connected networks with 2 hidden layers and ReLU activation, we use
cross entropy loss. 
We use standard He initialization and train with Gradient Descent until convergence.
In each of the three plots of Figure \ref{fig:MoreGaussianFsigma} we show the results from 5
runs with different random initializations.

Settings for the three plots in Figure \ref{fig:MoreGaussianFsigma}:

\centerline{
\begin{tabular}{@{}rccc@{}}
\hline 
  & Left & Middle & Right \\
\hline
Input dim  & 10   &  30   &  30   \\
Layer 1    & 2048 & 2048 & 256 \\
Layer 2    & 256  &  256 & 256 \\
Output dim  & 10   &  10   &  10   \\
Input size &10\,000 & 10\,000 & 2\,000 \\
Input $\Sigma_x=$ 
& $diag(1, 1.1, 1.2, ... 1.9)$ 
           & $diag(0.1, 0.2,...,3.0)$ 
           & $diag(0.1, 0.2,...,3.0)$ \\
\hline 
\end{tabular}
}

We can give a heuristic argument for why the curves should ``look continuous'':
If two eigenvalues of the data covariance $\Sigma_x$ are close, exchanging them
gives an input distribution that is close to the original distribution. Because 
this exchange is an orthogonal transformation, we will get trained networks 
in which the weights also only differ by the same orthogonal transformation.
This means that the effect of exchanging the sigmas will also exchange the 
taus, and if exchanging the sigmas had a small effect on the input distribution,
we may expect also a small effect on the weights distribution, which means we 
expect the corresponding taus also to be close.

For the other experimental observation, that the curves first rise and then fall again (where
one of these two parts can also be missing), we already sketched the two conjectured 
mechanisms in \ref{subsect::mapping_eigenvalues}:

\begin{enumerate}
  \item Larger eigenvalues $\sigma_i$ lead to larger effective learning rate in gradient descent, which leads in turn to larger corresponding $\tau_i$, hence the increasing part of $f$.
  \item We find experimentally that the first eigenvector(s) dominate the input 
  (see e.g. the first table in Appendix D.1).
  Using an orthonormal basis $e_i$ of eigenvectors of $\Sigma_x$ and $\Sigma_w$, we 
  can decompose the variance of the output to a neuron $\BE_x[\langle w, x \rangle^2]$ as
  $\sum_i \langle w, e_i\rangle^2 \BE_x[\langle e_i, x \rangle^2] = \sum_i \langle w, e_i\rangle^2\cdot \sigma_i^2$. Averaging over $w$ gives $\sum_i \tau_i^2 \cdot \sigma_i^2$.
  So if $f(\sigma)$ would be increasing, direction $e_1$ would dominate the output even more.
  We speculate that backprop finds a near optimal solution, and it seems plausible
  that one component dominating is not optimal when there is also important information in the 
  other components.
\end{enumerate}

\begin{figure}[!b] % change back to [tbp] if we decide to go without the \clearpage-s
   \centering
    \includegraphics[height=0.23\textwidth]{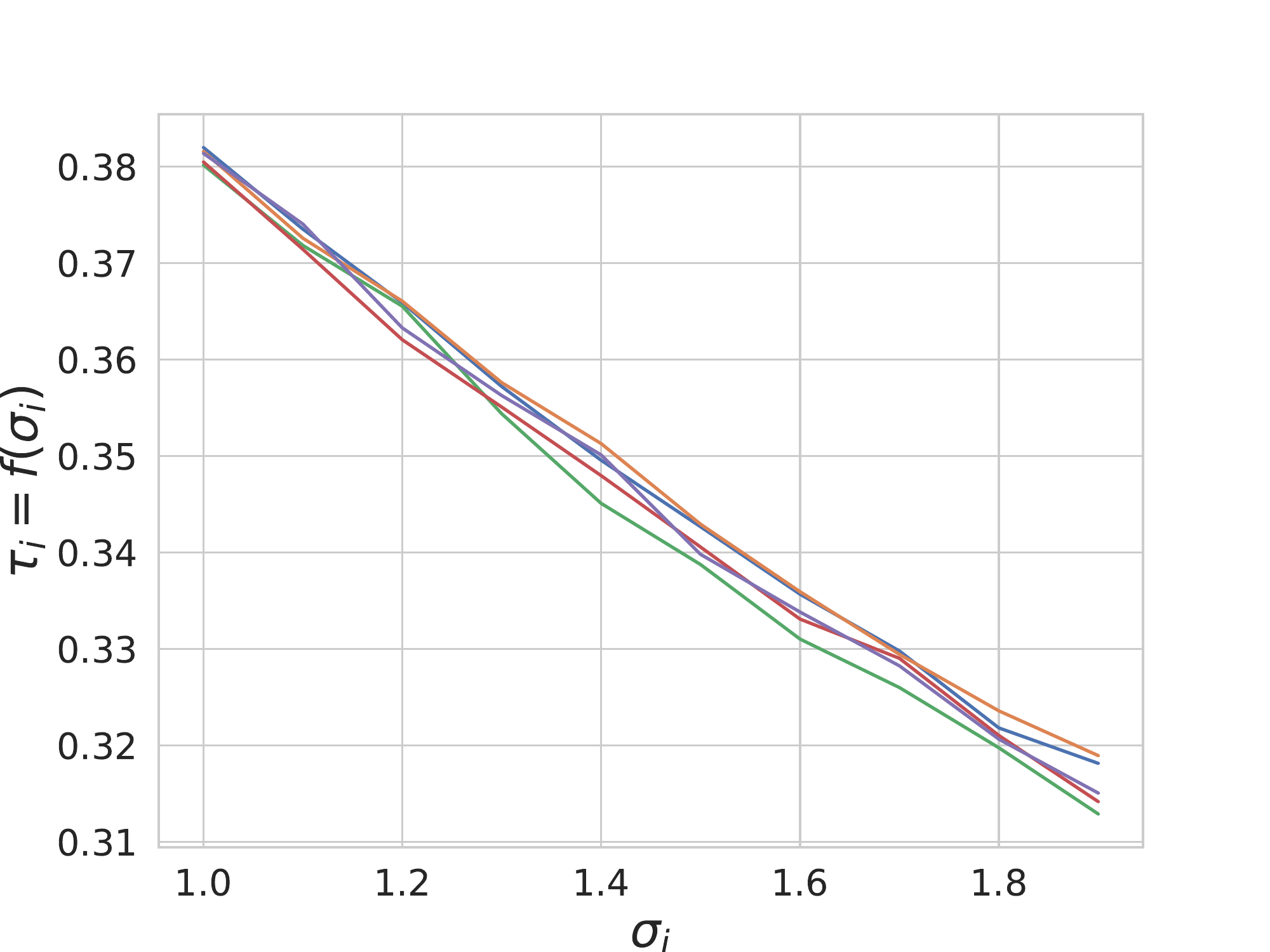}
    \includegraphics[height=0.23\textwidth]{images/st8.pdf}
    \includegraphics[height=0.23\textwidth]{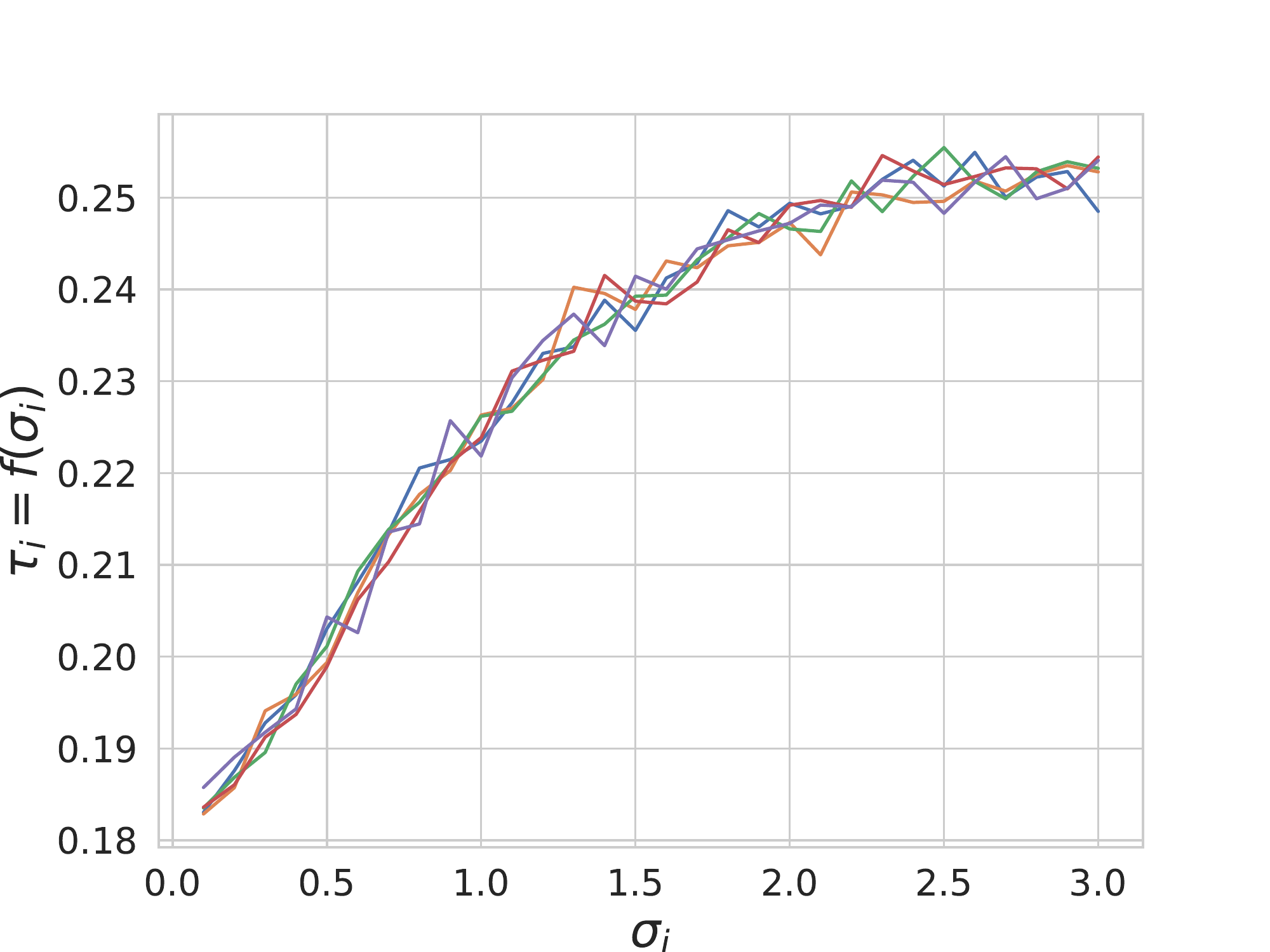}
   \caption{
   $f(\sigma)$--curves for fully connected networks with two hidden layers and Gaussian centered inputs. See text for details.
   }
   \label{fig:MoreGaussianFsigma}
\end{figure}

\subsection{Convolutional networks on natural images}

While the $f(\sigma)$ curves in the ideal centered Gaussian case look smooth, 
the real-world $f(\sigma)$ curves seem to contain perturbations, although they 
still roughly have a rising/falling shape as well. To investigate what 
the largest contributor to this perturbations is, we will go from the ideal 
case to the real case in a series of steps.

As a first step, we crop the images of CIFAR10 to $27\times 27$ pixels and
approximate their distribution by the normal distribution with mean 0 and 
the covariance of all $3 \times 3$ patches. When we apply a convolutional
network with $3\times 3$ convolutions and stride 3, we are in the ``ideal'' situation
of \ref{subsec:E_Gaussian} and expect a $f(\sigma)$ curve that goes up and then down.
This is indeed the case, as shown by the green curve in Figure~\ref{fig:independent-patches}.
\begin{figure}[tbp]
    \centering
\includegraphics[width=0.49\textwidth]{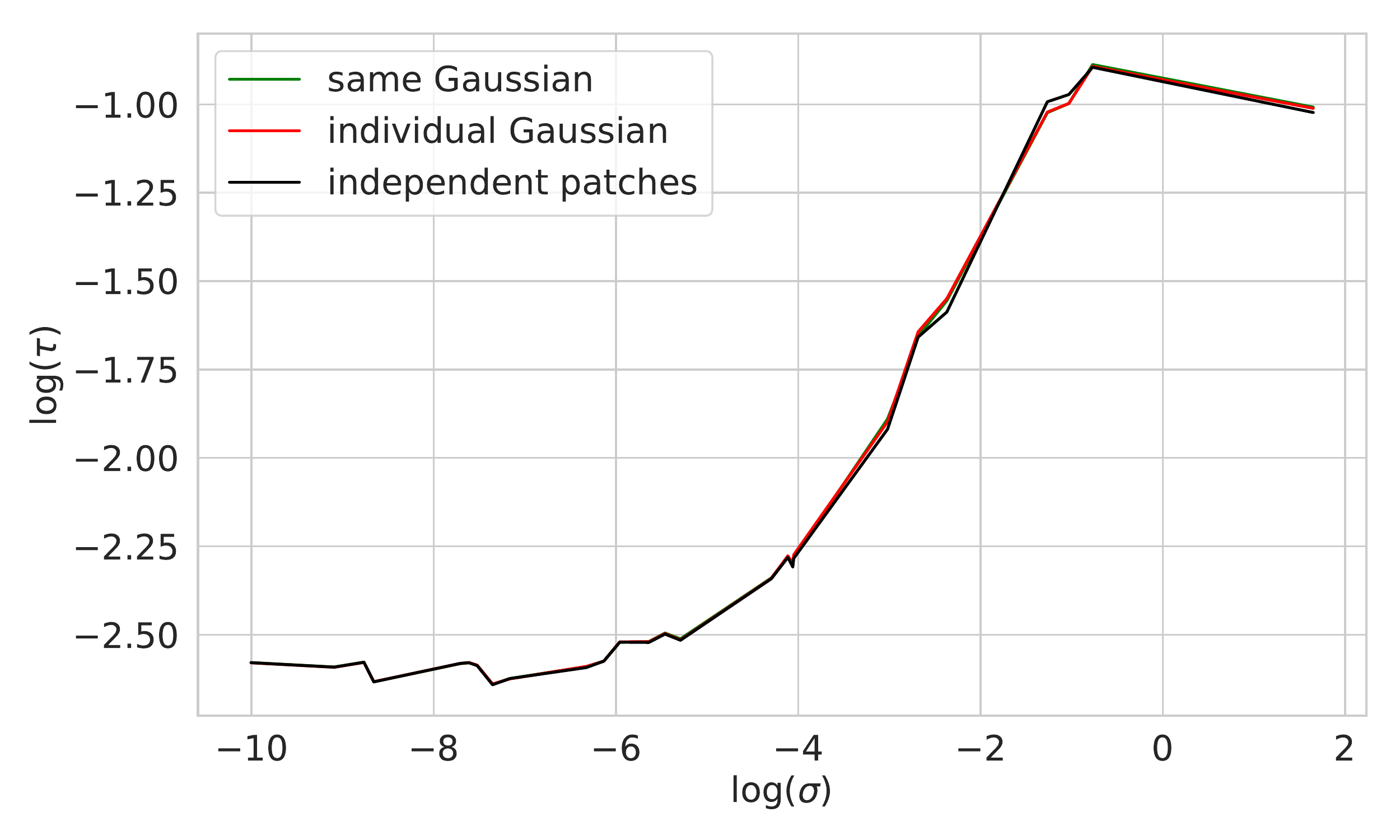}
\includegraphics[width=0.49\textwidth]{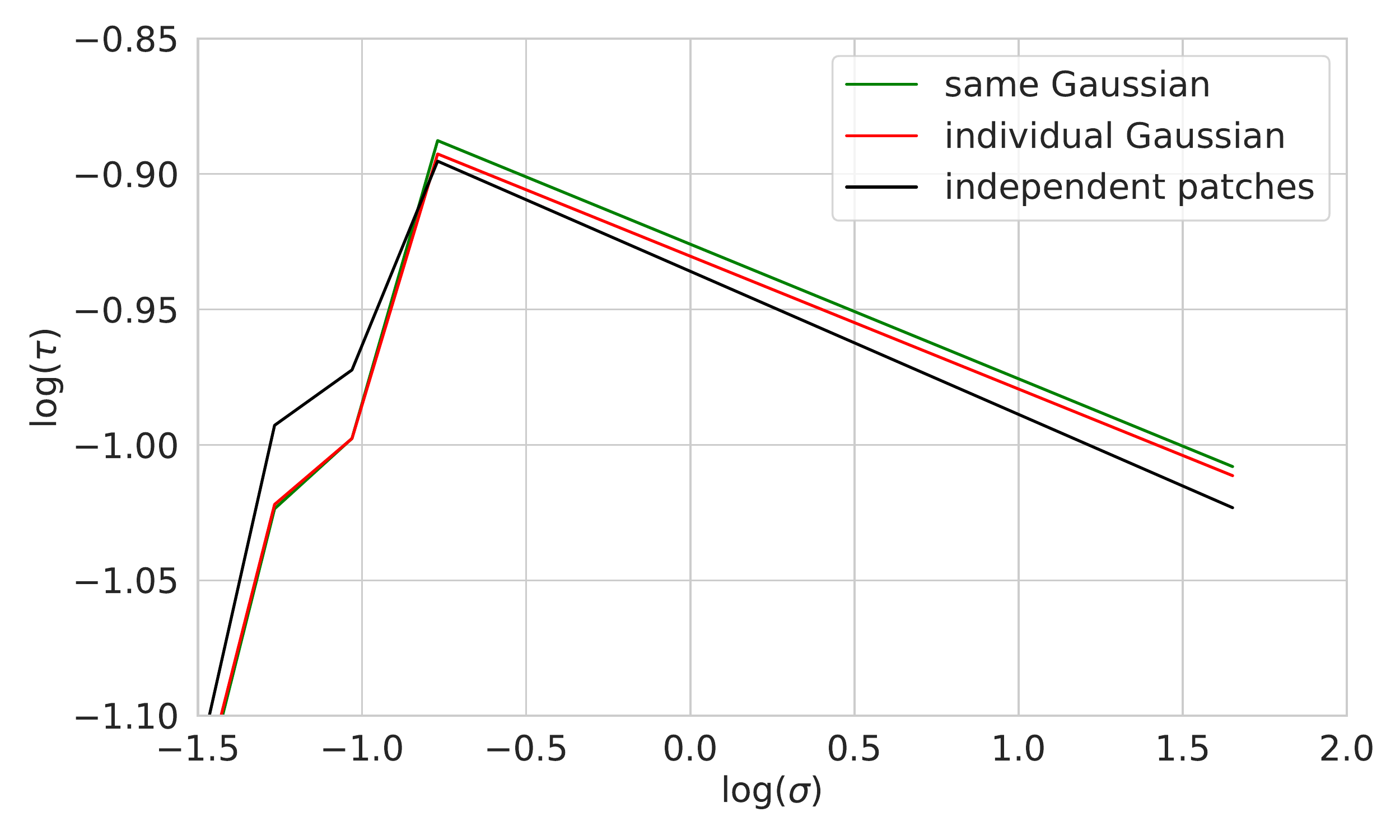}
    \caption{$f(\sigma)$ curve for training on random labels with 
    different approximations of CIFAR10 images, cropped to $27\times 27$ pixels. 
    Two $3\times 3$-convolutional layers with stride 3, one fully connected layer. 
    {\sc Right}: Zoomed in to the upper right corner to show the small differences.}
    \label{fig:independent-patches}
\end{figure}
Using the same distribution / covariance for all patch positions is of course a simplification: For example, for the patches at the upper boundary it is more likely to see light blue (from the sky) than at the lower boundary.
To get closer to reality, we can replace the one global covariance by the covariances corresponding to the possible patch positions, yielding the red curve in Figure~\ref{fig:independent-patches}.

The next step is to abandon Gaussian approximation, and take the real patches. 
In the $27\times27$ pixel images we have $9\times9$ patches of $3\times 3$ pixels each.
To destroy the correlations between neighboring patches, we permute
for each of the $9\times9$ positions the patches in that position of all images.
So for each position we still have the same set of $3\times 3$ patches that can 
appear in an image, but the patches appearing in one new image no longer fit together since they (almost always) came from different original images. This leads to images that are stitched together from random patches and also means that previously (potentially similar) patches from the same original image can now occur with two different random labels.
This setting yields 
the black curve in Figure~\ref{fig:independent-patches}. 
So far it seems we are still very close to the ideal situation.

The next step is to take the original images; now the correlations between neighboring
patches do create a more significant change in the $f(\sigma)$ curve, see the blue curve in Figure~\ref{fig:stride-effect}. In particular, we see the first significant deviation from
``up and then down''.
This effect becomes stronger when we go from stride 3 to overlapping inputs of the 
convolution (stride 2 and stride 1, green and red curves in Figure~\ref{fig:stride-effect}). 

\begin{SCfigure}
    %\centering
\includegraphics[width=0.49\textwidth]{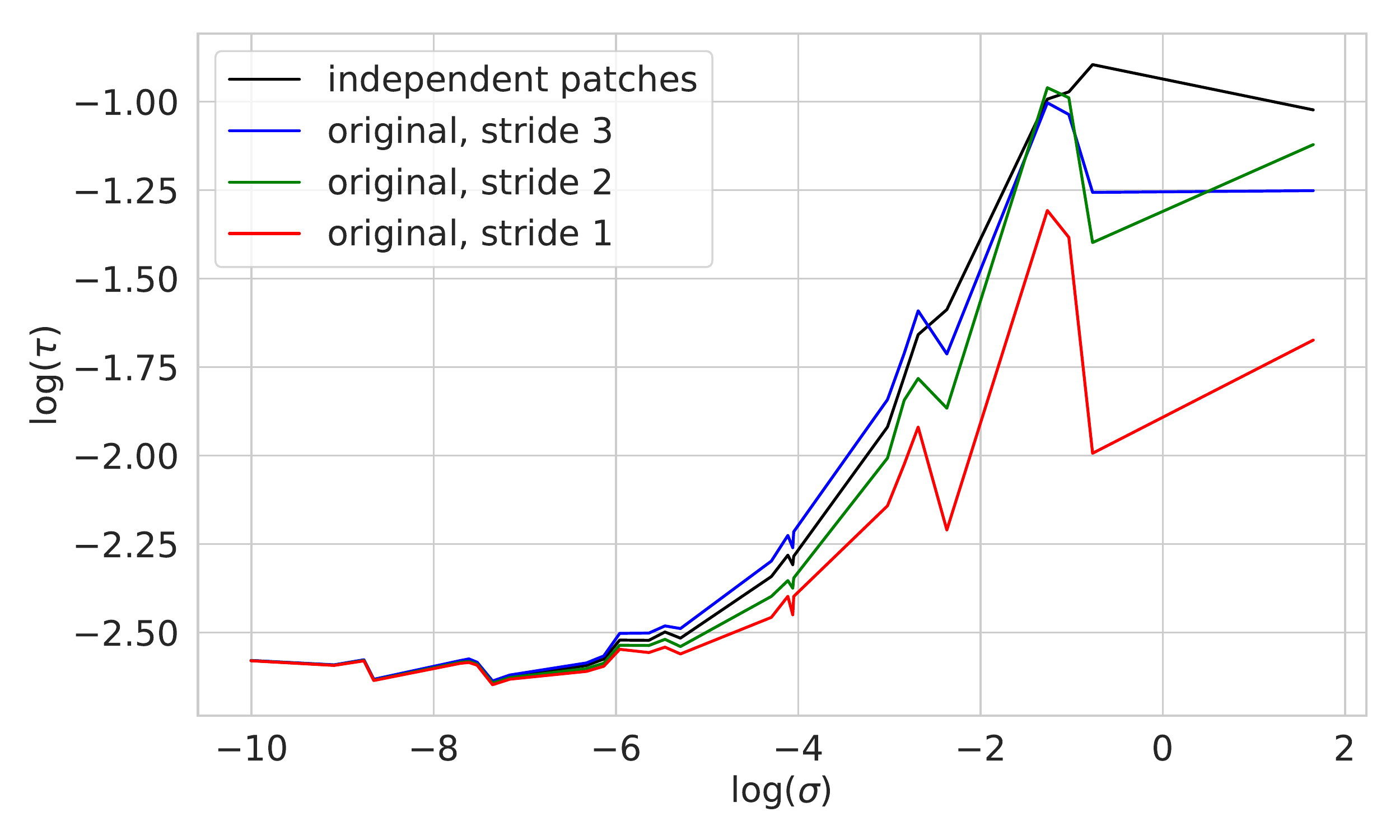}
    \caption{$f(\sigma)$ curve for training on random labels with 
    different approximations of CIFAR10 images. Two $3\times 3$-convolutional layers, one
    fully connected layer. Correlations between neighboring patches create deviation
    from the curve seen for the ``ideal'' case of independent Gaussian approximations. 
    Smaller strides lead to stronger correlations and stronger 
    deviation.% from the ``ideal'' curve. 
    }
    \label{fig:stride-effect}
\end{SCfigure}

\subsection{Deeper Layers}

We use the following simple CNN on the CIFAR10 data set here (as in Section~\ref{subsec:DeepNetworks}, Table 1):
\begin{Verbatim}[fontsize=\small]
       conv 3x3, 16 filters 
       conv 3x3, 16 filters 
       conv 3x3, 16 filters 
       maxpool 3x3, stride 2
       dense layer, 512 units
       dense layer, num_outputs units (classifier head)
\end{Verbatim}

Training is done using 
30k images for pre-training on random labels (for comparison run),
20k images for training on real labels, 
10k images for determining the test error, and using a
learning rate of~0.002.

{\bf Sampling from covariance vs. picking eigenvectors}
In Figure \ref{fig:PCA_init} we compared a pre-trained convolutional layer with
a layer consisting of filters randomly sampled from the same covariance matrix
and found that this preserved the performance benefit exactly. In the experiment
for Table \ref{table:experiment_3conv} we instead used the most significant eigenvectors
$e_1,...,e_{16}$ with factors $\tau_1,...,\tau_{16}$ directly as filters. 
This is a (more stable) approximation to sampling from the covariance matrix that 
has $\tau_i^2$ as eigenvalue for $e_1,...,e_{16}$ and $\tau=0$ for the other eigenvectors.
Experimentally, we can see that this does not seem to make a significant difference:
Compare Table~\ref{table:deep_f2} with Table~\ref{table:sampled_from_cov} for the case that
$\tau_1=...=\tau_{16}=1$. 

\begin{SCfigure}
  %\centering
  \includegraphics[width=0.49\textwidth]{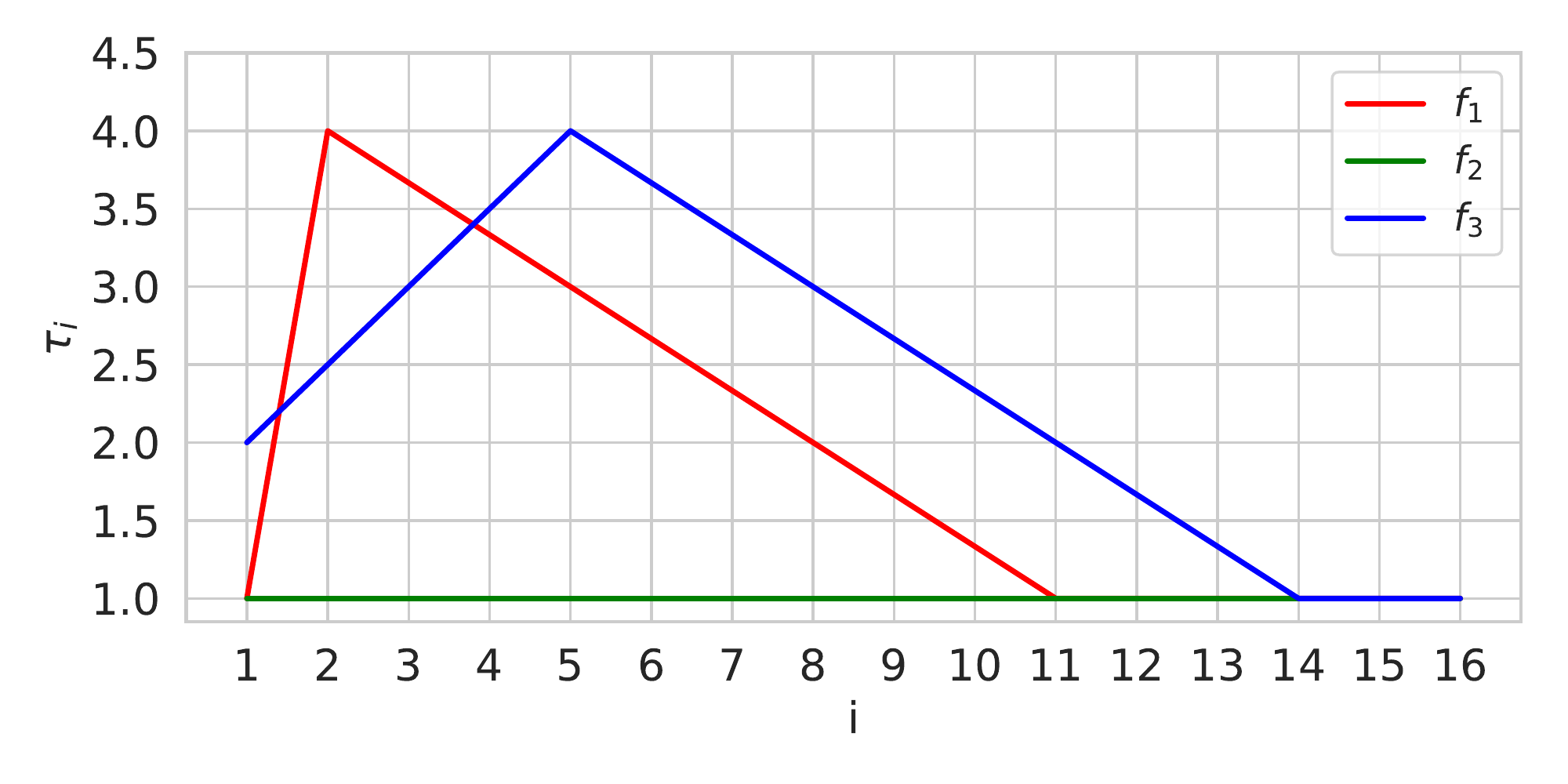}
  \caption{Sequence of $\tau_1,...,\tau_{16}$ used to weight the 16 most
  significant eigenvectors in our experiments. 
  The sequence $f_1$ was used for Table 1 in section 
  \ref{subsec:DeepNetworks}, $f_2$ and $f_3$ were used for Table 2 and Table 3.}
  \label{fig:deep_taus}
\end{SCfigure}

{\bf \boldmath Choice of eigenvectors $e_1,...,e_{16}$ and $\tau_1,...,\tau_{16}$}
In our previous experiments we observed curves for $f(\sigma)$ that were ``rising and
then falling'', favoring the most significant eigenvalues, but somewhat downweighting the
most significant one(s) (e.g.\ the center and right plot in Figure 4 -- this was a different
network on the same data). So we picked the 16 largest $\sigma$ to be the ones with
$f(\sigma)>0$ (i.e. used the most significant eigenvectors), and used a made-up 
curve ($f_1$ in Figure \ref{fig:deep_taus}) for the $\tau_1,...,\tau_{16}$ which downweights
the largest eigenvalue. We can see that the choice of this curve does not make a big 
difference: We can also just set $\tau_1=...=\tau_{16}=1$  ($f_2$, resulting in Table
\ref{table:deep_f2}), or choose another curve which downweights the largest 
eigenvalue less ($f_3$, resulting in Table \ref{table:deep_f3}). The stronger 
downweighting of the most significant eigenvalue seems to give a slight advantage
when only initializing the first layer, but when initializing two or three layers, 
all curves seem to lead to essentially the same performance. 
However, compared to adjusting some values $f(\sigma)$ between 1 and 4, the much more
important choice is which $f(\sigma)$ we set to 0, i.e.\ which eigenvectors we use.
Tables \ref{table:smallest_eigenvectors} and \ref{table:medium_eigenvectors} show the
effect of choosing other eigenvectors, which leads to a significant drop in performance
(for the choices in these tables even below the standard initialization).

\begin{SCtable}[2][p] % the [p] here causes all of these tables to appear on a single page, which looks nicer if it fits.
%\centering
\caption{Using $f_2 = 1$: The 16 eigenvectors with 
largest eigenvalues are equally weighted. This corresponds to a $f(\sigma)$ which is
1 for the 16 largest $\sigma$s and 0 for the other (11 or 128) $\sigma$s.
We achieve significant gains compared to the standard initialization. 
}
\label{table:deep_f2}
\begin{tabular}{@{}rlcccc@{}}
\hline 
  & & \multicolumn{4}{c}{Convolutional layers sampled} \\
%\hline
Iterations & Data & $\{\}$ & $\{1\}$ & $\{1,2\}$ & $\{1,2,3\}$ \\ \hline 
100 & Train  & 0.31  & 0.29  & 0.37  & 0.44 \\
    & Test   & 0.31  & 0.29  & 0.37  & 0.43 \\
1000 & Train & 0.58  & 0.59  & 0.65  & 0.68 \\
     & Test  & 0.53  & 0.55  & 0.56  & 0.59 \\
\hline
\end{tabular}
\end{SCtable}

\begin{SCtable}
%\centering
\caption{Using $f_3$: Changing the $f(\sigma)$ between 1 and 4
does not affect the performance significantly, as long as 
we keep the same set of $\sigma$ with $f(\sigma)>0$.}
\label{table:deep_f3}
\begin{tabular}{@{}rlcccc@{}}
\hline 
  & & \multicolumn{4}{c}{Convolutional layers sampled} \\
%\hline
Iterations & Data & $\{\}$ & $\{1\}$ & $\{1,2\}$ & $\{1,2,3\}$ \\ \hline 
100 & Train  & 0.31  & 0.30  & 0.37  & 0.44 \\
    & Test   & 0.31  & 0.29  & 0.37  & 0.42 \\
1000 & Train & 0.58  & 0.61  & 0.64  & 0.72 \\
     & Test  & 0.53  & 0.55  & 0.58  & 0.56 \\
\hline
\end{tabular}
\end{SCtable}

\begin{SCtable}
%\centering
\caption{Sampling from a covariance matrix using 
$f(\sigma)=1$ for the highest 16 eigenvalues and $f(\sigma)=0$ else.
This gives essentially the same performance as using the 16 most
significant eigenvectors directly as filters.}
\label{table:sampled_from_cov}
\begin{tabular}{@{}rlcccc@{}}
\hline 
  & & \multicolumn{4}{c}{Convolutional layers sampled} \\
%\hline
Iterations & Data & $\{\}$ & $\{1\}$ & $\{1,2\}$ & $\{1,2,3\}$ \\ \hline 
100 & Train  & 0.31  & 0.29  & 0.37  & 0.41 \\
    & Test   & 0.31  & 0.29  & 0.36  & 0.40 \\
1000 & Train & 0.58  & 0.57  & 0.62  & 0.69 \\
     & Test  & 0.53  & 0.54  & 0.54  & 0.56 \\
\hline
\end{tabular}
\end{SCtable}

\begin{SCtable}
%\centering
\caption{Using the 16 eigenvectors
with the smallest eigenvalues gives significantly worse
performance than random initialization. The 16 chosen
eigenvectors were given the same weight.
}
\label{table:smallest_eigenvectors}
\begin{tabular}{@{}rlcccc@{}}
\hline 
  & & \multicolumn{4}{c}{Convolutional layers sampled} \\
%\hline
Iterations & Data & $\{\}$ & $\{1\}$ & $\{1,2\}$ & $\{1,2,3\}$ \\ \hline 
100 & Train  & 0.31  & 0.15  & 0.14  & 0.15 \\
    & Test   & 0.31  & 0.15  & 0.14  & 0.15 \\
1000 & Train & 0.58  & 0.50  & 0.48  & 0.47 \\
     & Test  & 0.53  & 0.46  & 0.47  & 0.44 \\
\hline
\end{tabular}
\end{SCtable}

\begin{SCtable}
%\centering
\caption{Using eigenvectors $e_4,...e_{19}$ (out of 27) in the first layer,
$e_{10},...,e_{25}$ (out of 144) in the second and third layer.
 The 16 chosen eigenvectors were given the same weight.
}
\label{table:medium_eigenvectors}
\begin{tabular}{@{}rlcccc@{}}
\hline 
  & & \multicolumn{4}{c}{Convolutional layers sampled} \\
%\hline
Iterations & Data & $\{\}$ & $\{1\}$ & $\{1,2\}$ & $\{1,2,3\}$ \\ \hline 
100 & Train  & 0.31  & 0.17  & 0.20  & 0.27 \\
    & Test   & 0.31  & 0.18  & 0.20  & 0.26 \\
1000 & Train & 0.58  & 0.56  & 0.60  & 0.62 \\
     & Test  & 0.53  & 0.53  & 0.56  & 0.59 \\
\hline
\end{tabular}
\end{SCtable}

\begin{SCtable}
%\centering
\caption{Base line: These are the accuracies we obtain by 
pre-training on random labels. In particular when we use two or three
convolutional layers, the results are very similar to what we get with
our method that only uses the data, no training. (Compare e.g.\ with Table 2)}
\label{table:base_line}
\begin{tabular}{@{}rlcccc@{}}
\hline 
  & & \multicolumn{4}{c}{Convolutional layers sampled} \\
%\hline
Iterations & Data & $\{\}$ & $\{1\}$ & $\{1,2\}$ & $\{1,2,3\}$ \\ \hline 
100 & Train  & 0.31  & 0.37  & 0.39  & 0.41 \\
    & Test   & 0.31  & 0.34  & 0.37  & 0.39 \\
1000 & Train & 0.58  & 0.70  & 0.75  & 0.78 \\
     & Test  & 0.53  & 0.51  & 0.53  & 0.57 \\
\hline
\end{tabular}
\end{SCtable}

\clearpage  %%%%%%%%%%%%%%%%%%%%%%%%
\section{Specialization at the later layers}

Earlier, we claimed that neural activations at the outer layer can drop abruptly and permanently after switching to the downstream task. Figure \ref{fig::neural_activation_transfer} illustrates this. In this figure, the $x$-axis corresponds to the number of training iterations, where the transfer to the downstream tasks happens at the middle. Note that in both network architectures shown in the figure, neural activation in the upper layers drops abruptly and permanently after switching to the downstream tasks. We interpret this to effectively reduce the available capacity for the downstream task, which masks the positive transfer happening from the alignment effect at the lower layers. This effect can be mitigated by increasing the width, as discussed in the context of Figure~\ref{fig:SimpleCNNIncreasingWIdth} in the main text.

\begin{figure}[b!] % change to [tbp] in case we get rid of the \clearpage-s
\centering
\includegraphics[width=0.6\textwidth]{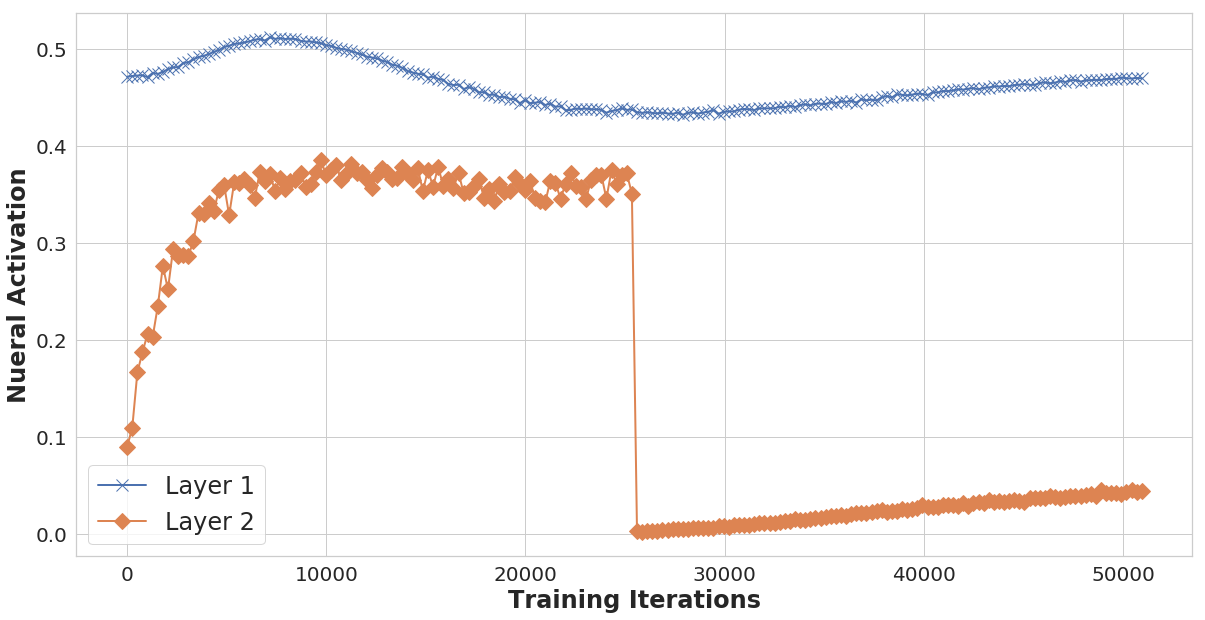}\\[5pt]
\includegraphics[width=0.6\textwidth]{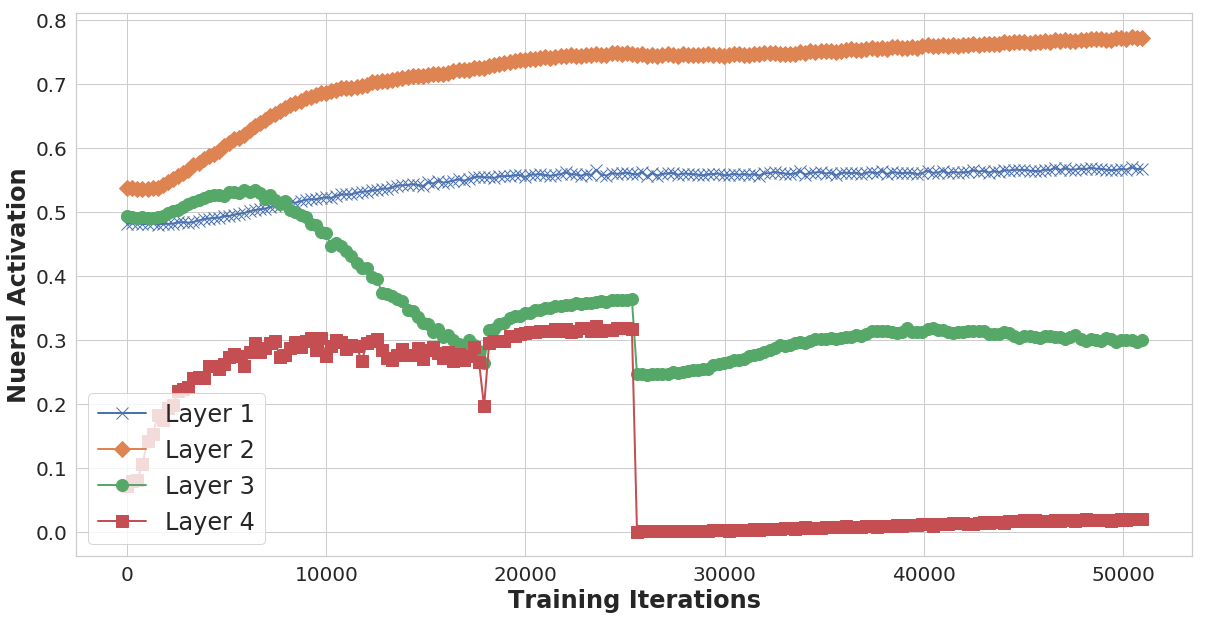}
\caption{{\sc top:} Neural activation is plotted against the number of training iterations in a two-layer CNN (256 $3\times 3$ filters followed by a dense layer of width 64). The $y$-axis is the frequency of the output of a hidden activation function being non-zero measured over a hold-out dataset. The abrupt drop in neural activation coincides with the switch to the downstream task. {\sc bottom:} A similar plot for a deeper neural network with three convolutional layers  (256 $3\times 3$ filters) followed by a single dense layer of width 64. }\label{fig::neural_activation_transfer}
\end{figure}

Regarding specialization, Figures \ref{fig:VGGNeuronActivationSupplementary} and \ref{fig:VGGNeuronActivationSupplementaryPositive} are the extended versions of Figure \ref{fig:SimpleCNNIncreasingWIdth}.
They include activation plots for \emph{all intermediate layers} of VGG16 models
(i) at initialization, 
(ii) in the end of the pre-training, 
(iii) in the end of fine-tuning on real labels,
(iv) in the end of fine-tuning on 10 random labels.
Figures \ref{fig:VGGNeuronActivationSupplementary} and \ref{fig:VGGNeuronActivationSupplementaryPositive} illustrate the negative and positive transfer examples respectively. See Section \ref{subsect::nuron_act_plots_vgg} for details about the parameters used when generating those figures. As shown in those figures, neurons at the upper layers tend to \emph{specialize}, i.e. become activated by fewer images. This is evident if we compare the fraction of examples activating neurons when pretrained for 1 training iteration ({\sc top-left}) vs. pre-trained for 6240 training iterations ({\sc top-right}).

Because neural activations drop permanently after switching to the downstream task, the capacity of the neural network in the downstream task is effectively diminished. This is reminiscent of the \lq\lq critical stages" that have been observed for deep neural networks, in which neural networks seem to possess less capacity to fit a new distribution of data (e.g. after image blur is removed) later during training than when trained from scratch. Here, the upstream task is analogous to the critical early stage and the switch to the downstream task is analogous to the change of distribution (e.g. removing blur in images). 

\end{document}